\title{Constrained Sampling for Language Models Should Be Easy: An MCMC Perspective}
\author{%
  Emmanuel Anaya Gonzalez$^{1}$\thanks{Equal contribution.} \quad
  Sairam Vaidya$^{1}$\footnotemark[1] \quad
  Kanghee Park$^{1}$ \quad
  Ruyi Ji$^{2}$ \\
  \textbf{Taylor Berg-Kirkpatrick}$^{1}$ \quad
  \textbf{Loris D'Antoni}$^{1}$ \\
  $^1$UCSD \quad $^2$ Peking University \\
  \texttt{\{fanayagonzalez,smahadevaganapathy,kap022,tberg,ldantoni\}@ucsd.edu} \\
  \texttt{jiruyi910387714@pku.edu.cn} \\
}
\begin{document}

\newcommand{\scaps}[1]{\textsc{#1}\xspace}
\newcommand{\tool}{}
\newcommand{\miss}{\textcolor{magenta}{MISSING}\xspace}

\newcommand{\mcL}{\mathcal{L}}
\newcommand{\opn}{\operatorname}

\newcommand\done[1]{\checkmark \sout{#1}}

\newcommand\taylor[1]{\nbc{TB}{#1}{tbcolor}}
\newcommand\nadia[1]{\nbc{NP}{#1}{npcolor}}
\newcommand\loris[1]{\nbc{LD}{#1}{ldcolor}}
\newcommand\kh[1]{\nbc{KP}{#1}{brown}}
\newcommand\eag[1]{\nbc{EAG}{#1}{eagcolor}}
\newcommand\sairam[1]{\nbc{SV}{#1}{svcolor}}
\newcommand\ruyi[1]{\nbc{RJ}{#1}{rycolor}}

\newcommand{\khchanged}[1]{\textcolor{violet}{#1}}

\newcommand{\indicator}{\mathbbm{1}}
\newcommand{\lang}{\mathcal{L}}
\newcommand{\tvd}[2]{\Vert #1, #2 \Vert_{\textsc{TV}}}
\newcommand{\dis}[2]{\text{KL}(#1 \Vert #2)}
\newcommand{\outprob}[1]{\probgrammarpg{\prob}{\mathcal O}_{#1}}
\newcommand{\langpre}{\lang_{\mathrm{prefix}}}
\newcommand{\ruleset}{\mathcal{R}}
\newcommand{\start}{S}
\newcommand{\nterm}{A}
\newcommand{\sent}{w}
\newcommand{\ev}{\mathbb{E}}
\newcommand{\prob}{P}
\newcommand{\probgrammar}{\prob^{\grammar}}
\newcommand{\probgrammarpg}[2]{{#1}^{#2}}
\newcommand{\probgcd}{\tilde\prob^\grammar_{\mathrm{GCD}}}
\newcommand{\evgrammar}{c}
\newcommand{\tevgrammar}{\tilde c}
\newcommand{\step}{\Rightarrow}
\newcommand{\manystep}{\Rightarrow^*}

\newcommand{\vocab}{\mathcal{V}}

\newcommand{\grammatical}{V_\grammar}
\newcommand{\nongrammar}{NG}

\newcommand{\nbc}[3]{
	{\colorbox{#3}{\bfseries\sffamily\scriptsize\textcolor{white}{#1}}}
	{\textcolor{#3}{\sf\small \textit{#2}}}
}

\definecolor{npcolor}{RGB}{255,0,255}
\definecolor{tbcolor}{RGB}{0,150,0}
\definecolor{ldcolor}{RGB}{200,30,50}
\definecolor{rycolor}{RGB}{150, 0, 0}
\definecolor{eagcolor}{RGB}{230,120,20}
\definecolor{svcolor}{RGB}{0,102,204}

\def\sectionautorefname{Sec.}
\def\subsectionautorefname{Sec.}
\def\subsectionautorefname{Sec.}
\def\subsubsectionautorefname{Sec.}
\def\figureautorefname{Fig.}
\def\tableautorefname{Tab.}
\def\equationautorefname{Eq.}
\def\algorithmautorefname{Alg.}

\newtheorem{example}{Example}
\def\exampleautorefname{Ex.}
\newtheorem{theorem}{Theorem}
\def\theoremautorefname{Thm.}

\newcommand{\etc}{\emph{etc}\xspace}
\newcommand{\ie}{\emph{i.e.\@}\xspace}
\newcommand{\Ie}{\emph{I.e.\@}\xspace}
\newcommand{\eg}{\emph{e.g.\@}\xspace}
\newcommand{\Eg}{\emph{E.g.\@}\xspace}
\newcommand{\etal}{\emph{et~al.\@}\xspace}
\newcommand{\vs}{\emph{vs.\@}\xspace}
\newcommand{\wrt}{\emph{wrt.\@}\xspace}

\newcommand{\tname}[1]{\textsc{#1}\xspace}

\newcommand{\asap}{ASAp\xspace}
\newcommand{\algo}{S\xspace}

\newcommand{\sygus}{\textsc{SyGuS}\xspace}

\newcommand{\samples}{S}
\newcommand{\grammar}{\mathcal{G}}
\newcommand{\nonterms}{\mathcal{N}}
\newcommand{\terms}{\Sigma}


\lstdefinelanguage{nolang}{
	literate=%
    {->}{$\rightarrow$}2
    {=.=}{$\doteq$}1
    {==}{$=$}1
    {!=}{$\neq$}1
    {&&}{$\land$}1
    {||}{$\lor$}1
    {<}{$<$}1
    {>}{$>$}1
    {<=}{$\le$}1
    {>=}{$\ge$}1
	,
	numbers=none,
	basicstyle=\ttfamily,
	commentstyle=\itshape\color{commentgreen},
	keywordstyle=\bfseries,
	ndkeywordstyle=\bfseries
}

\lstset{
  language=nolang,
  floatplacement={tbp},
  captionpos=b
}


\definecolor{terminal}{RGB}{0,100,0} 
\definecolor{nonterminal}{RGB}{100,0,100} 
\definecolor{operator}{RGB}{0,0,200}  

\newcommand{\TERM}[1]{\textcolor{terminal}{{#1}}}
\newcommand{\NT}[1]{\textcolor{nonterminal}{{#1}}}
\newcommand{\OP}[1]{\textcolor{operator}{{#1}}}

\maketitle

\begin{abstract}
Constrained decoding enables Language Models (LMs) to produce samples that provably satisfy hard constraints.
However, existing constrained-decoding approaches often distort the underlying model distribution, a limitation that is especially problematic in applications like program fuzzing, where one wants to generate \textit{diverse} and \textit{valid} program inputs for testing purposes. 
We propose a new constrained sampling framework based on Markov Chain Monte Carlo (MCMC) that simultaneously satisfies three core desiderata: \textit{constraint satisfying} (every sample satisfies the constraint), \textit{monotonically converging} (the sampling process converges to the true conditional distribution), and \textit{efficient} (high-quality samples emerge in few steps). Our method constructs a proposal distribution over valid outputs and applies a Metropolis-Hastings acceptance criterion based on the LM's likelihood, ensuring principled and efficient exploration of the constrained space. Empirically, our sampler outperforms existing methods on both synthetic benchmarks and real-world program fuzzing tasks.
\end{abstract}

\section{Introduction}
\label{sec:introduction}

Language Models (LMs) have revolutionized a wide range of domains, from code generation~\cite{humaneval} to automated reasoning~\cite{yang2023leandojo}.
Yet, ensuring that their outputs satisfy hard structural constraints---such as syntactic validity in domain-specific languages---remains an significant challenge~\cite{geng2025jsonschemabenchrigorousbenchmarkstructured}. 
While constrained decoding methods~\cite{lmql, dong2022codepad, geng2024grammarconstrained, poesia2022synchromesh, picard, shin2021constrained, stengel2023zero, ugare2024improving,park2025flexibleefficientgrammarconstraineddecoding,AgrawalKGLR23, li2024guiding, wang2023grammar} can enforce these constraints, they often distort the underlying generative distribution learned by the model, degrading performance in downstream tasks~\cite{tam-etal-2024-speak,park2024grammaraligned}. 

This tradeoff is particularly detrimental in applications that rely not on a single high-quality sample, but on \textit{diverse} samples from the constrained distribution. 
A compelling example is \textit{program fuzzing}, a technique for discovering software bugs by automatically generating test inputs that explore different execution paths in a program. 
Modern fuzzers bootstrap this process using a small set of seed inputs, and the effectiveness of these seeds hinges on both their correctness---the seeds should be syntactically valid inputs that the program will not reject---and their distributional diversity---the seeds should exercise different execution paths.
LMs offer a powerful mechanism to generate such seeds---but only if we can sample efficiently and faithfully from the constrained distribution. 


Can we design \textit{constrained sampling algorithms} that when executed for a given amount of time $t$ are
\textit{(i) Constraint satisfying:} produce a sample within $t$ time and that sample satisfies the constraint; 
\textit{(ii) Monotonically Converging:} converge to the true conditional distribution when $t$ goes to infinity; and  
\textit{(iii) Efficient:} produce good estimates of the constrained distribution for low values of $t$?

\noindent We answer this question in the affirmative. Focusing on constraints expressed as \textit{context-free grammars}, we introduce a family of sampling algorithms rooted in Markov Chain Monte Carlo (MCMC) techniques.
Our key insight is to construct proposal distributions that generate only constraint-satisfying samples and use a Metropolis-Hastings criterion guided by the LM's likelihood function to accept/reject candidates.
Unlike rejection sampling, every candidate in our method is valid by construction---ensuring \textit{constraint satisfaction}.
Our use of likelihood-aware MCMC transitions ensures \textit{monotonic convergence}, and crucially, our empirical results reveal that the resulting chains converge \textit{efficiently} to the desired conditional distribution.

We make the following contributions.
First, we formalize the \textit{desiderata} for constrained sampling---constraint satisfying, monotonically converging, and efficient---and show that existing methods fail to satisfy all three (\Cref{sec:background}).
Second, we introduce an effective \textit{MCMC-based framework} for constrained sampling that satisfies all three desiderata and instantiate it with three concrete proposal distributions~(\Cref{sec:mcmc}).
Third, we validate our approach on both synthetic distributions and real-world fuzzing targets such as libxml2 and sqlite. The samples produced from our approach exhibit better KL divergence from the target distribution than competing approaches. Most importantly, program fuzzers (i.e., automated random testers) seeded with our samples consistently achieve higher code coverage on real-world tasks compared to fuzzers seeded using existing approaches (\Cref{sec:evaluation}).

\section{The Ideal Properties of Constrained Sampling}
\label{sec:background}

In this section, we formalize the problem of sampling from a language model (LM) conditioned on a constraint (i.e., constrained sampling), define our key desiderata of a good constrained sampling algorithm, and describe how existing constrained sampling algorithms do not meet such desiderata.
We follow the definitions proposed by \citet{park2024grammaraligned}.

\paragraph{Language Models.}

An (autoregressive) \textit{language model} defines a probability distribution $\prob$ over sequences of tokens (i.e., sentences) $w \in \vocab^\ast$, where $\vocab$ denotes the vocabulary.  
The probability of a sequence is computed as the product of conditional probabilities for each token in the sequence: 
\[\prob(\sent_1 \ldots \sent_n) = \Pi^n_{i=1} \prob(\sent_i \mid \sent_{1:i-1})\]

\paragraph{Constraints and Grammars.}
Given a LM $\prob$ and a constraint $\varphi$, the goal of constrained sampling is to sample sequences that satisfy the constraint.
Constraints are typically specified as a language, which could be a regular language, context-free grammar (CFG), or other types of logical conditions that sequences must fulfill.
In this work, we focus on constraints that can be defined by a context-free grammar, an expressive and formal way to define sets of valid sequences.
For example, context-free grammars can express what programs in a given programming language are syntactically valid and what format a JSON object should abide to when being used to transfer data.
While we focus on grammars, the techniques we propose can be applied to other types of constraints.

Formally, a context-free grammar $\grammar = (\terms, \nonterms, \start, \ruleset)$ consists of a set of terminal symbols $\terms$, a set of non-terminal symbols $\nonterms$, a start nonterminal symbol $\start \in \nonterms$, and a set of production rules $\ruleset$. 
A sequence $\sent$ is a valid sentence belonging to the language $\lang(\grammar)$ if it is derivable from the start symbol $\start$ by applying a sequence of rules from $\ruleset$. 
Each step in this derivation transforms a string $\alpha \nterm \gamma$ into $\alpha \beta \gamma$ using a rule $\nterm \to \beta \in \ruleset$, and denoted by $\alpha \nterm \gamma \Rightarrow \alpha \beta \gamma$.

\begin{figure}
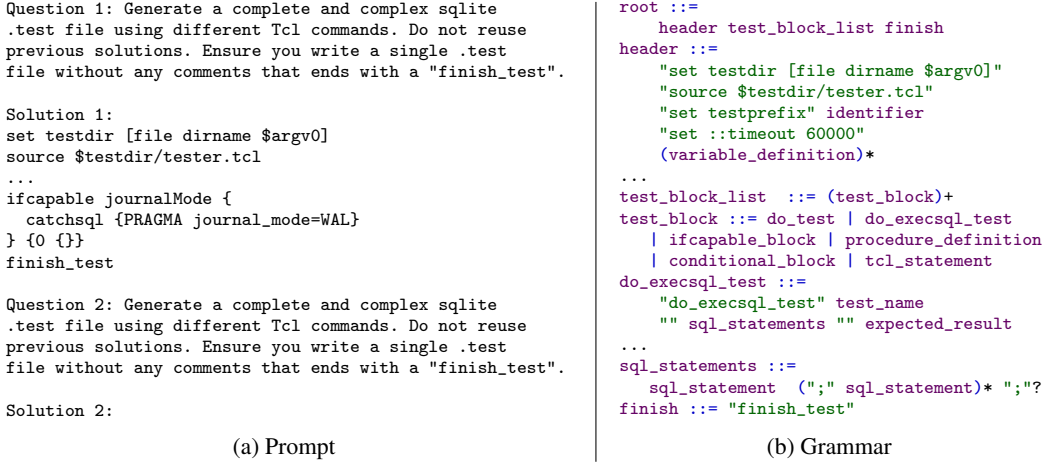

    \centering
    \scriptsize
    \begin{subfigure}{0.53\textwidth}
        \centering
        \input{codes/sqlite-prompt}
        \vspace{-0.5em}
        \caption{Prompt}
        \label{fig:fuzzing-prompt}
    \end{subfigure}
    \hspace{1.5em}\vrule\hspace{1em}
    \begin{subfigure}{0.4\textwidth}
        \centering
        \input{codes/sqlite-grammar}
        \vspace{-0.5em}
        \caption{Grammar}
        \label{fig:fuzzing-grammar}
    \end{subfigure}
       
    \caption{(a) Prompt to generate seed test cases for fuzzing the SQLite engine.
    (b) Simplified version of the SQLite test-script grammar written in EBNF notation.
    The goal of the problem is to generate multiple diverse seeds that trigger different code paths in the library being tested.
} 
    \label{fig:fuzzing-example}
\end{figure}

\begin{example}[SQLite Test-Script Grammar]
\Cref{fig:fuzzing-example} illustrates a typical use-case: we ask a language model to generate SQLite regression test files ({\tt{.test}}) that drives the SQLite engine down as many distinct execution paths as possible (\Cref{fig:fuzzing-prompt}).
To exercise a specific component of the database, each file must satisfy
the syntactic and semantic restrictions encoded in the SQLite
test-script grammar shown in \Cref{fig:fuzzing-grammar}---e.g., the file should include a mandatory timeout directive {\tt{set ::timeout 60000}} in the header and well-formed {\tt{do\_execsql\_test}} blocks that wrap one or
        more SQL statements in braces and specify an expected result.

A complete derivation therefore starts from the non-terminal
{\tt{root}}, transforms (i.e., using $\Rightarrow$) into {\tt{header}} {\tt{test\_block\_list}}
{\tt{finish}}, and continues recursively until only quoted
terminals remain. (See Appendix~\ref{app:parse-tree} for a condensed parse tree.)
\end{example}

\paragraph{Grammar-Aligned Sampling.}

Grammar-aligned sampling aims to sample sequences from $\prob$ that belong to the language $\lang(\grammar)$, while preserving the model's underlying distribution.
This can be viewed as sampling from the constrained distribution $\probgrammar$, which is proportional to the original model distribution but restricted to sequences that satisfy the constraint. Mathematically, for a given grammar $\grammar$ and model $\prob$, we want to sample from
\[
\probgrammarpg{\prob}{\grammar}(\sent)  = \frac{\indicator[\sent \in \lang(\grammar)] \cdot \prob(\sent)}{\sum_{\sent'} \indicator[\sent' \in \lang(\grammar)] \cdot \prob(\sent')}
\]

\subsection{Limitations of Existing Approaches}
\label{sec:limitations-of-existing-approaches}

\paragraph{Rejection Sampling.}
A common method for obtaining constrained samples from a language model is \textit{rejection sampling}, which repeatedly draws outputs from the model and discards ones that do not satisfy the constraint.
While samples accepted via this process correctly follow the language model's distribution conditioned on the constraint,  there is no guarantee on how many samples one needs to reject before getting a sample satisfying the constraint.
This inefficiency becomes especially pronounced when the constraint describes a pattern that is infrequent or not naturally favored by the model's learned distribution, often requiring many rejections before a valid sample is drawn. 
For example, for the problem in \Cref{fig:fuzzing-example}, out of 500 samples produced by \texttt{Llama-3.1-8B-Instruct}, only 2 (0.4\%) satisfied the constraint imposed by the grammar.

\paragraph{Constrained Decoding.} 

The inefficiency of rejection sampling has led to \textit{constrained decoding algorithms} \cite{picard, lmql, geng2024grammarconstrained} that at each decoding step evaluate the LM next tokens against the specified constraints. 
Invalid tokens that cause the generated sequence to not satisfy the given constraint are masked from the probability distribution, forcing the model to select tokens that will lead to \textit{constraint-satisfying} sequences. 
In particular, when the constraint is a context-free grammar, the technique is known as Grammar-Constrained Decoding (GCD) \cite{geng2024grammarconstrained}. 

As shown by \citet{park2024grammaraligned}, constrained decoding \textit{does not preserve the underlying distribution of the model}. 
If we define the prefix language $\langpre(\grammar) = \{\sent \in \terms^* \mid \sent v \in \lang(\grammar)\}$ of a grammar $\grammar$ as the set containing all possible prefixes of sentences in the grammar's language, the distribution captured by GCD is the following incorrect conditional distribution:
\[
\probgcd(\sent_i \mid \sent_{1:i-1}) = \frac{\prob(\sent_i\mid \sent_{1:i-1}) \cdot \indicator[\sent_{1:i} \in \langpre(\grammar)]}{\sum_{\sent'_i} \prob(\sent'_i\mid \sent_{1:i-1}) \cdot \indicator[\sent_{1:i-1},\sent'_i \in \langpre(\grammar)]}
\]
For example, for the problem in \Cref{fig:fuzzing-example}, regardless of how many samples one generates using GCD with \texttt{Llama-3.1-8B-Instruct} the empirical KL divergence between the samples and \(\probgrammarpg{\prob}{\grammar}\) does not decrease.
If we assume that the LM is good at sampling good seeds for a fuzzer, this unfaithfulness to the target distribution results in worse samples that cover fewer code paths (as shown in~\Cref{sec:evaluation:fuzzing}).

\paragraph{Adaptive Sampling with Approximate Expected Futures (ASAp).}

\citet{park2024grammaraligned} showed how to correct next-token conditional distribution for grammar-aligned sampling using the notion of \textit{Expected Future Grammaticality} (EFG), defined as 
$\evgrammar(\sent_{1:i})= \ev_{\prob(\sent_{i+1:n} \mid \sent_{1:i})} [\indicator[\sent \in \lang(\grammar)]]$---i.e., the probability that sampling a continuation of the  prefix $\sent_{1:i}$ will lead to a valid sequence in the grammar. 
The conditional probability required by grammar-aligned sampling can be then written as:
\begin{equation}
\label{eq:prob-gad-by-weight}
\probgrammarpg{\prob}{\grammar}(\sent_i \mid \sent_{1:i-1}) = \frac{\prob(\sent_i \mid \sent_{1:i-1}) \cdot c(\sent_{1:i})}{\sum_{\sent'_i} \prob(\sent'_i \mid \sent_{1:i-1}) \cdot \evgrammar(\sent_{1:i-1}, \sent'_i)}
\end{equation}
\citet{park2024grammaraligned} proposed Adaptive Sampling with Approximate expected futures (ASAp) to approximate grammar-aligned sampling. ASAp iteratively overapproximates expected future grammaticality by removing probability mass associated with invalid prefixes identified from previous samples. 
While in the limit this approach reaches the desired distribution, it does not do so \textit{monotonically}---i.e., it can produce intermediate EFG approximations that are very skewed.
\citet{park2024grammaraligned} empirically showed that it can take thousands of samples for ASAp to start converging, making the algorithm practically \textit{not efficient}.
For example, for the problem in \Cref{fig:fuzzing-example}, 100 samples generated using ASAp with \texttt{Llama-3.1-8B-Instruct} exhibited worse empirical KL divergence from \(\probgrammarpg{\prob}{\grammar}\) than even GCD!

\subsection{Desired Properties of a Grammar-Aligned Sampler} \label{section:desired-properties}

We formulate properties we claim a good constrained-sampling algorithm should satisfy.
Because convergence in the limit is impractical, we assume that an algorithm $\algo$ is given a finite amount of time $t$ to produce a sample, and we want the algorithm to satisfy three properties:
\begin{description}
  \item[Constraint Satisfying:] $\algo$ produces a sample within $t$ time and that sample satisfies the constraint; 
  \item[Monotonically Converging:] the total variance distance between the output distribution of $S$ and the target distribution $\probgrammarpg{\prob}{\grammar}$ monotonically decreases and converges to $0$ as $t$ approaches infinity;
  \item[Efficient:] $\algo$ produces good estimates of the constrained distribution for low values of $t$.
\end{description}

By focusing on these properties, we aim to provide a sampling approach that balances efficiency, reliability, and correctness, addressing the shortcomings of existing methods, which fail to achieve all three goals simultaneously. In \Cref{sec:mcmc}, we present our MCMC-based approach for constrained decoding, which provably guarantees the first two properties and is in practice efficient (\Cref{sec:evaluation}).
%






\section{Grammar-Aligned MCMC Sampling}
\label{sec:mcmc}

We propose a constrained sampling framework based on Markov Chain Monte Carlo (MCMC) that operates strictly within the space of grammar-valid sequences. Rather than relying on rejection of ungrammatical outputs, our method uses grammar-constrained decoding (GCD) to iteratively refine samples through local proposals that are always constraint-satisfying---a property guaranteed by GCD. This framework provides a principled mechanism to balance computational cost and sampling fidelity: as the chain progresses, it converges toward the true grammar-aligned distribution.
Intuitively, our approach uses MCMC to turn any GCD implementation into a grammar-aligned sampler.


\subsection{Constrained Generation via Metropolis-Hastings}

We use the Metropolis-Hastings algorithm \cite{hastings1970}, a standard MCMC method, to construct a Markov chain whose stationary distribution matches a desired target $\pi$ over a set of states $S$.
Given a \textit{proposal distribution} $q(y \mid x)$, which defines how to sample a candidate state $y$ from a current state $x$, the algorithm accepts the proposed candidate with probability $\alpha(x, y)$, defined as:
\begin{gather}
    \alpha(x, y) 
        = \min \left\{ 
            1,  
            \frac
                {\pi(y) q(x \mid y)}
                {\pi(x) q(y \mid x)}
        \right\}
    \label{eq:acc_prob}
\end{gather}
This acceptance rule compares how likely $y$ is under the target distribution to how likely $x$ is, adjusted by the relative likelihood of proposing each direction, $q(x \mid y)$ and $q(y \mid x)$. Intuitively, proposals that improve the target probability are usually accepted, while worse ones are accepted with a controlled probability to maintain exploration. The acceptance rule ensures detailed balance—a property that guarantees the target distribution is stationary for the Markov chain—which in turn ensures that the chain will converge to the desired distribution over time.

\SetKwProg{Fn}{Function}{:}{}
\SetKwFunction{Prop}{Propose}
\SetKwFunction{Alpha}{AcceptProb}
\SetKwFunction{Bern}{Bernoulli}
\SetKwFunction{Sample}{SamplePrefix}
\SetKwFunction{Generate}{Generate}
\SetCommentSty{normalfont}
\begin{algorithm}[t]
    \caption{The Metropolis-Hastings algorithm instantiated for grammar-aligned sampling.}
    \label{alg:MH}
    \KwData{the LM $P$, the grammar $\mathcal G$, a parameter $k$ denoting the chain length, and a configurable distribution $p_{\textsc{pos}}^\sent$ for sampling a random prefix from a given sequence.}
    \KwResult{a proposed sequence $s'$.}
    \LinesNumbered
    \begin{multicols}{2}
    $\sent_0 \gets $ a GCD sample; \\ 
    \ForEach{$i \in 1\dots k$}{
         $\sent \gets$ \Prop{$\sent_{i-1}$}; \\ 
         $a \gets \alpha(\sent_{i-1}, \sent$); \\
         $b \gets$ \Bern{$a$}; \\
         \leIf{$b$}{$\sent_i \gets \sent$}{$\sent_i \gets \sent_{i-1}$}
     }
     \Return $\sent_k$; \\
     \Fn{\Prop{$\sent$}}{
         $i \gets $ an index sampled from $p_{\textsc{pos}}^\sent$; \\
         $\sent^p \gets \sent_{1:i}$; \\
         \Return a GCD sample with the prefix $\sent^p$; \\
     }
     \nonumber 
     \vspace{3.2em}
     \nllabel{noline}
     \end{multicols}
    \vspace{0.3em}
\end{algorithm}

\autoref{alg:MH} presents our instantiation of the Metropolis-Hastings algorithm for grammar-aligned sampling, where the target distribution is the constrained distribution $\probgrammarpg{\prob}{\grammar}(\sent)$.
The algorithm starts with a random GCD sample (Line 1)---which is guaranteed to be constraint-satisfying---and refines it toward the target distribution $\probgrammarpg{\prob}{\grammar}(\sent)$ by running the constructed Markov chain for $k$ steps (Lines 2--7).
In each step, the algorithm simulates the Markov chain by first drawing a new sample from the proposal distribution (Line 3) and then accepting it with probability $\alpha(\sent_{i-1}, \sent)$ (Lines 4--6).
Finally, the last sample is returned as the result (Line 8).

One important point to note is the computation of the acceptance probability $\alpha(\sent_{i-1}, \sent')$ (Line 4), whose definition relies on the target probabilities $\probgrammarpg{\prob}{\grammar}(\sent_{i-1})$ and $\probgrammarpg{\prob}{\grammar}(\sent')$.  
However, these probabilities are difficult to compute in practice because their normalization factor $\sum_{\sent'} \mathbbm{1}[\sent' \in \mathcal{L}(\mathcal{G})] \cdot P(\sent')$ requires summing over the whole language, which is typically intractably large.  
Fortunately, this factor cancels out from the acceptance probability after unfolding the target probabilities, as shown below, which induces an efficient implementation for the function $\alpha$:
\begin{align*}
 \alpha(\sent, \sent')
        = \min \left\{ 
            1,  
            \frac
                {\probgrammarpg{\prob}{\grammar}(\sent') q(\sent \mid \sent')}
                {\probgrammarpg{\prob}{\grammar}(\sent) q(\sent' \mid \sent)}
        \right\}
        = \min \left\{ 
            1,  
            \frac
                {\indicator[\sent' \in \lang(\grammar)] \cdot \prob(\sent') \cdot q(\sent \mid \sent')}
                {\indicator[\sent \in \lang(\grammar)] \cdot \prob(\sent) \cdot q(\sent' \mid \sent)}
        \right\}
\end{align*}

The last component in \autoref{alg:MH} is the proposal distribution (Line 3).
The Metropolis-Hastings algorithm offers considerable flexibility in this distribution — any choice satisfying some mild conditions (i.e., irreducibility and aperiodicity) can yield a theoretically sound sampler.
Hence, we propose a parameterized family of proposal distributions for grammar-aligned sampling to fit different scenarios (Lines 9–12), which we elaborate on in \Cref{subsection:proposal}.
 
\subsection{Proposal Distributions for Grammar-Aligned MCMC} \label{subsection:proposal}
The performance of MCMC for grammar-aligned sampling is tied to the proposal distribution. Different proposals make different trade-offs: some are simple to implement but mix slowly, while others complex ones may offer faster convergence. There is no single best choice across all settings. 

In this section, we present a parameterized family of proposal distributions that is \textit{(i)} tailored to sequences generated by a language model, and \textit{(ii)} maintains constraint satisfaction by construction. 
As shown in Lines 9–12 of \autoref{alg:MH}, the proposal mechanism operates by selecting a prefix of the current sequence and resampling the remainder using GCD. Given a sequence $\sent$ and a distribution $p_\textsc{pos}^{\sent}$ over the token positions $[0, |\sent|]$, we sample an index $i$ from $p_\textsc{pos}^{\sent}$ and extract the prefix $\sent_{1:i}$. A new candidate $\sent'$ is then generated by running a grammar-constrained decoder (GCD) conditioned on this prefix. This strategy allows local, structured edits while ensuring that every proposed sequence remains within the constrained language.
For example, consider a grammar describing only sequences of 0s and 1s and let's assume GCD has produced the sequence 01001.
Our MCMC algorithm can sample any prefix of this sequence, let's say 010, and produce from it a continuation using GCD, e.g., it could produce the sequence 01011111. Whether the new sequence will be accepted depends on the likelihood of the two sequences according to the LM.

Under this framework, we prove \autoref{alg:MH} exhibits desired properties as a grammar-aligned sampler -- satisfying both the \textit{constraint satisfying} and \textit{monotonically converging} properties (\autoref{section:desired-properties}) if the truncation distribution $p_{\textsc{pos}}^\sent$ always assigns a non-zero probability to the empty prefix (\autoref{appendix:proofs}).
%

To conclude this section, we  describe three concrete instantiations of this framework, each corresponding to a different choice of prefix distribution, $p_{\textsc{pos}}^\sent$.

\textbf{Uniform.} This proposal distribution samples a truncation point uniformly at random from the positions in the current sequence---i.e.,  $p_{\textsc{pos}}^\sent$ is the uniform distribution.
    This proposal creates local moves that preserve partial structure while allowing for diversity in continuations. 

\textbf{Priority.}
    This proposal biases truncation toward parts of the sequence where the model is uncertain or poorly aligned with the grammar, enabling targeted refinement of weaker regions. 
    We use \textit{perplexity}, a common measurement of uncertainty, to carry in this bias and set the truncation distribution $p_{\textsc{pos}}^{\sent}$ to the LM's token-level perplexity, i.e., $p_{\textsc{pos}}^{\sent}(i) \propto \texttt{PP}(P(\cdot \mid \sent_{1:i}))$.
    In this way, positions with higher uncertainty will have a greater probability to be resampled.

\textbf{Restart.}
    This simplest proposal always discards the current sample and generates a new one from scratch using GCD---$p_{\textsc{pos}}^\sent$ selects position 0 with probability 1. Since the proposal is independent of the current state, this reduces to resampled importance sampling \cite{doucet2001smc}. 

\section{Experiments}
\label{sec:evaluation}

In this section, we show that MCMC achieves a better approximation to 
$\probgrammar$ than existing approaches.
In \autoref{sec:evaluation:synth} we demonstrate empirically MCMC converges to $\probgrammar$ using fewer samples than existing sampling approaches on the benchmarks proposed by~\citet{park2024grammaraligned}.
In \autoref{sec:evaluation:fuzzing} we show that MCMC improves the quality of fuzzing algorithm seeds over grammar-constrained decoding (GCD)~\cite{geng2024grammarconstrained} and adaptive sampling (ASAp)~\cite{park2024grammaraligned}.

We implemented our MCMC framework as an extension of the Transformers-GAD library~\cite{park2024grammaraligned}, 
which also provides the implementations of GCD and ASAp used in our evaluation.
%
%
We conduct our experiments using the Llama-3.1-8B model checkpoint provided by Hugging Face (commit 0e9e39f):
\url{https://huggingface.co/meta-llama/Llama-3.1-8B-Instruct}.

\subsection{Benchmarks by~\citet{park2024grammaraligned}}
\label{sec:evaluation:synth}

We compare the MCMC-based sampling algorithms against GCD and ASAp on the three tasks proposed by \citet{park2024grammaraligned} and their respective benchmarks.
Two of our tasks involve synthesizing expressions in an extension of linear integer arithmetic (SLIA) and loop invariants with bit-vector arithmetic (BV4). The problems are expressed as Syntax-Guided Synthesis Problems (SyGuS)~\cite{alur2019syguscomp}, a standardized format where a logical specification and a context-free grammar of first-order terms are provided and the goal is to synthesize a term in the grammar that satisfies the specification.
%
%
We use the same prompts as \citet{park2024grammaraligned}, consisting of 3 in-context examples of the form (specification, solution) and the grammar is then provided as a constraint for grammar-aligned sampling.
Our third task is the constituency parsing (CP) task already used in prior GCD work~\cite{geng2024grammarconstrained} where the grammar is used to help the model produce well-parenthesized parse trees for English sentences.
In total, the set contains 15 SLIA problems, 15 BV4 problems, and 6 CP problems.

We run all 3 variants of MCMC (uniform, priority, restart) for $n\in[1..10]$ steps each (we write MCMC-T ($k=n$) to denote that MCMC with proposal distribution T$ \in\{$Prefix, Priority, Restart$\}$ for $n$ steps). Similarly, we write ASAp ($k=n$).

\paragraph{Measures.} An ideal measure to compare sampling approaches is the distance between the sample distribution and the target $\probgrammar$.
It is, however, impractical because we can neither exhaust the infinite sequence domain nor evaluate any probabilities in $\probgrammar$.
Therefore, we follow \citet{park2024grammaraligned} to use an approximate measure instead, which is the KL divergence to the LM distribution $P$ on the finite set of all observed samples.
This approximation aligns well to the ideal measure because the LM's distribution $P$ is proportional to the target $\probgrammar$ on valid samples, as shown below:
\[
\text{KL}(\tilde\probgrammar \| \prob) {=}\ev_{\tilde\probgrammar}\left[ \log \frac{\tilde\probgrammar}{\prob} \right]
{=} \ev_{\tilde\probgrammar}\left[ \log \frac{\tilde\probgrammar}{C {\cdot} \probgrammar} \right]
{=} \ev_{\tilde\probgrammar}\left[ \log \frac{\tilde\probgrammar}{\probgrammar} \right] - \log C {=}  \text{KL}(\tilde\probgrammar \| \probgrammar) - \log C
\]
where $\tilde{\probgrammar}$ denotes the sample distribution, and $C$ is a constant.

We obtain 100 runs for each MCMC variant for each task, and plot the mean KL divergence and 95\% confidence interval ranges computed from boostrapping.

\begin{figure}
    \centering
    \begin{subfigure}{0.45\textwidth}
        \centering
        \includegraphics[width=.9\linewidth]{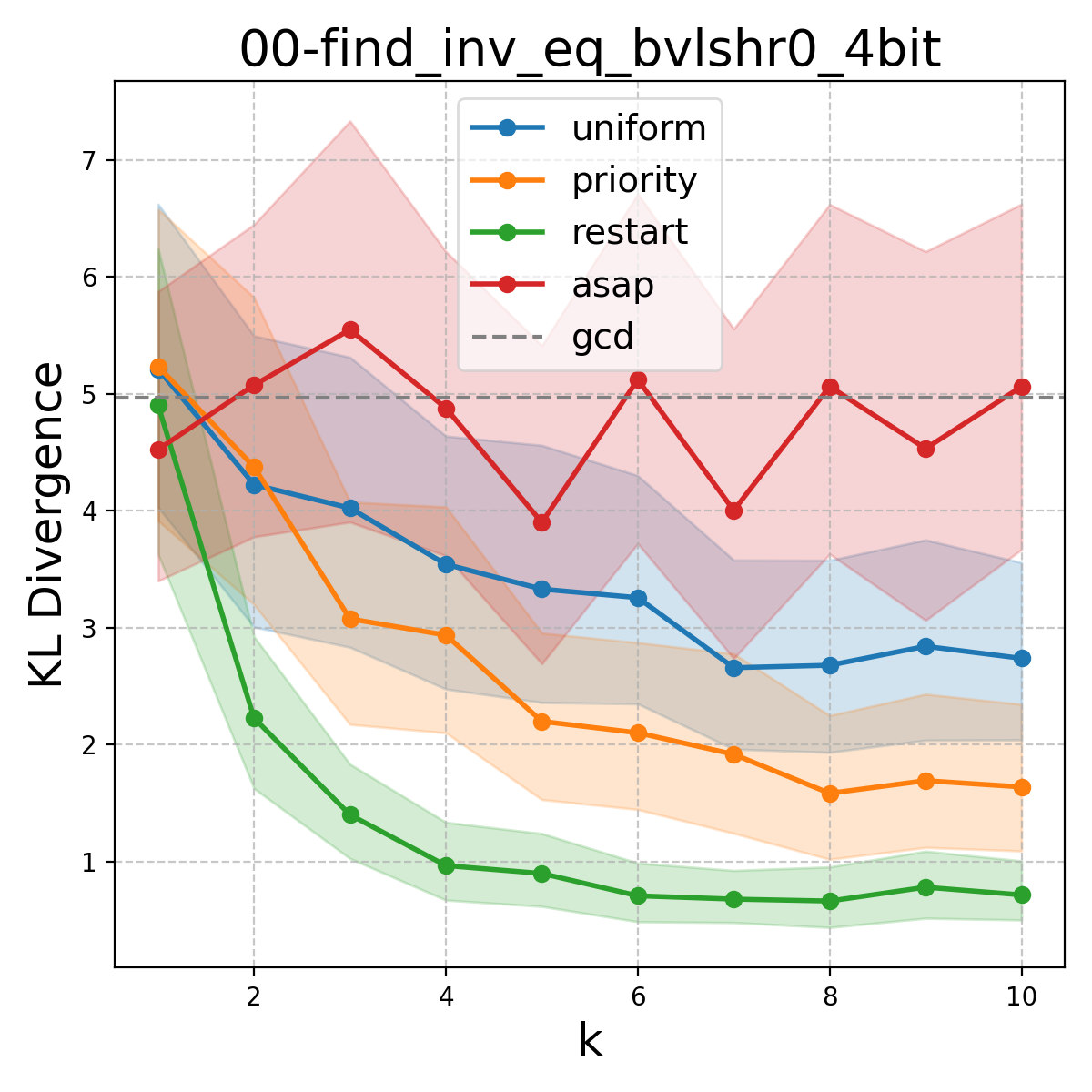}
        \vspace{-0.5em}
        \caption{}
        \label{fig:sygus-repr}
    \end{subfigure}
    \hspace{1.5em}\vrule\hspace{1em}
    \begin{subfigure}{0.45\textwidth}
        \centering
        \includegraphics[width=.9\linewidth]{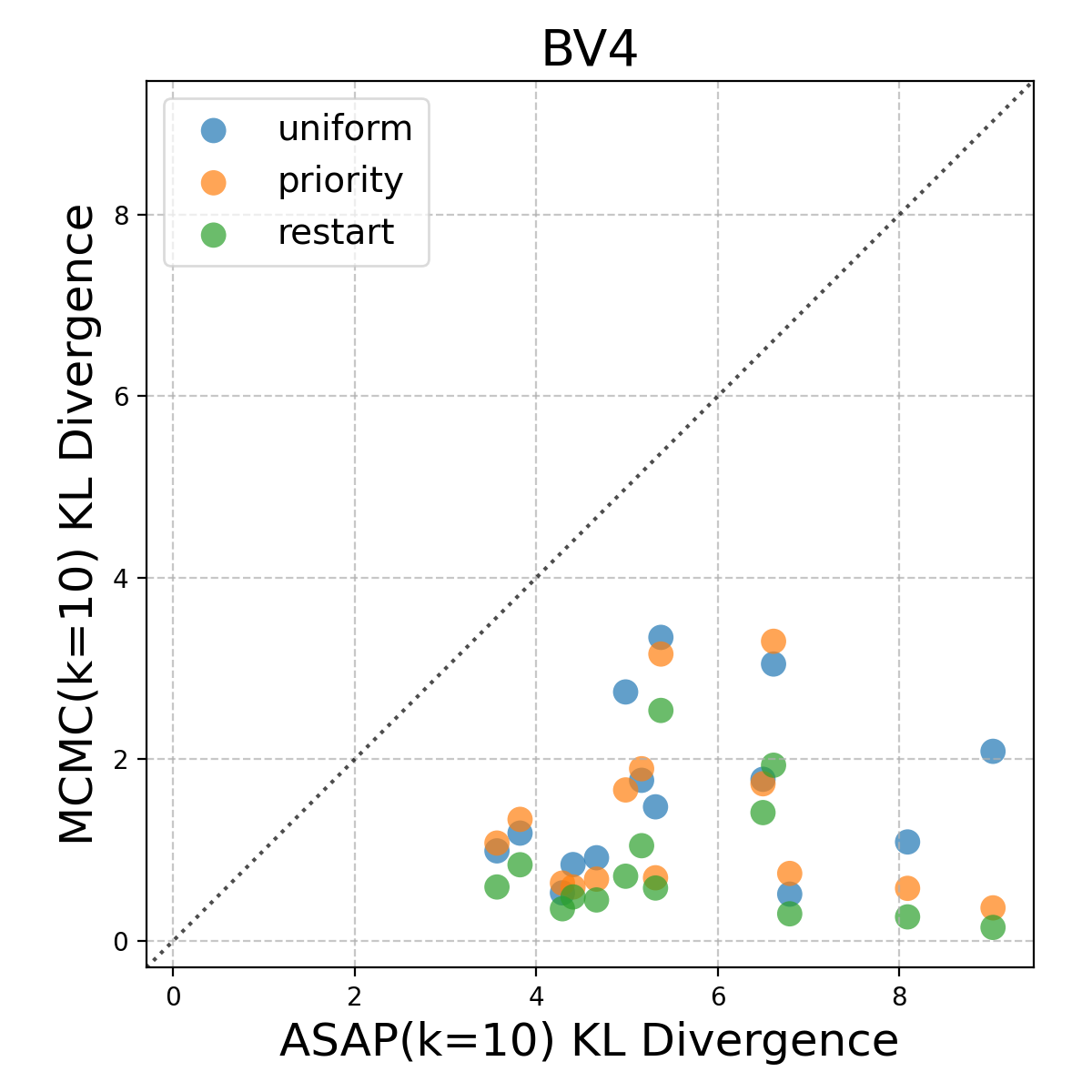}
        \vspace{-0.5em}
        \caption{}
        \label{fig:sygus-scatter}
    \end{subfigure}
       
    \caption{(a) KL divergence for varying number of steps for a representative \sygus benchmark.
    (b) Empirical KL divergence of MCMC vs. ASAp when run for 10 steps on BV4 benchmarks.
    }
    \label{fig:sygus-plots}
\end{figure}

\paragraph{Results and Findings.}
\Cref{fig:sygus-repr} illustrates how the KL divergence for our MCMC approaches monotonically decreases with the number of steps for one representative \sygus benchmark (the other tasks show similar trends). 
The KL divergence decreases and converges in trend for all variants of MCMC, though with some fluctuation caused by randomness.
ASAp only converges in the limit and does not exhibit monotonic convergence.


\Cref{fig:sygus-scatter} illustrates how the KL divergence after running MCMC (all variants) and ASAp for 10 steps.
A point $(i,j)$ denotes a benchmark on which ASAp yields KL divergence $i$ and MCMC yields KL divergence $j$ (i.e., a point below the diagonal denotes that MCMC has converged better).
%
All variants of MCMC display large reductions in KL divergence with respect to GCD (geomean. 2.11$\times$ for Uniform, 2.42$\times$ for Priority, and 5.07$\times$ for Restart),
and even larger reductions when compared to ASAP for the same number of steps (geomean. 3.99$\times$ for Uniform, 5.08$\times$ for Priority, and 8.70$\times$ for Restart).

Restart converges faster than Uniform and Priority, a notable result, since conceptually Restart does not accumulate any information about previous states in the Markov Chain random walk.

\subsection{Fuzzing experiments}
\label{sec:evaluation:fuzzing}

Coverage-guided fuzzing~\cite{cbfmarkov} iteratively mutates seed inputs to randomly generate test cases that exercise as many execution paths as possible in a target binary (a practice that can often reveal bugs in the code under test). The quality of the initial seed corpus---particularly their structural validity and diversity---can significantly impact downstream coverage, i.e., how many execution paths the fuzzer can exercise~\cite{2021seedselectionfuzzing}. 
%
%
In this section, we evaluate whether different techniques for grammar-constrained sampling can be used to produce high-quality seeds for a state-of-the-art program fuzzer, AFL++~\cite{aflplusplus}.
In our experiment, the grammar is used to guarantee that the LM produces inputs for the library under test that are \textit{valid} (they will not be immediately rejected by the program) and can trigger execution of specific components of a library (e.g, forcing the SQLite timeout directive). 

\paragraph{Benchmarks.}
We evaluate our approach on two widely adopted, grammar-intensive targets: XML (using the libxml2 parser~\cite{1998libxml2}) and SQL (using the SQLite engine~\cite{2000sqlite}).
To reflect realistic use cases where a general user prompt remains fixed while grammar constraints evolve to target different code components, we employ one high-level prompt per library (e.g., the one in \Cref{fig:fuzzing-prompt}) and introduce domain-specific constraints that can trigger different code components directly into the grammar.
Our grammars are specializations of publicly available EBNF grammars for XML and SQL~\cite{W3CXML1,W3CXMLNames3,zxteloiv_complexqa_2025}.
For XML, we modify the grammar to enforce mandatory \texttt{<!ENTITY>} and \texttt{<!ELEMENT>} declarations within the DOCTYPE, while for SQL we mandate a \texttt{set ::timeout 60000} directive within each \texttt{.test} file.
A snippet of the grammar we use to test the SQL engine is given in \Cref{fig:fuzzing-grammar} and the full list of targets, versions, and seed formats is given in Appendix~\ref{app:benchmarks}.
A user of the fuzzer can modify these grammars to stress test different components of the software under test.

\paragraph{Measures.}
Aside from the KL divergence, which we measure in the way discussed in \Cref{sec:evaluation:synth}, we want to measure whether a better sampler leads to the fuzzer having higher branch coverage---i.e., the number of unique executed code branches over the total number of branches in the software under test (computed via LLVM instrumentation~\cite{llvm-cov-docs}). 

We generate 100 seed inputs per method, using GCD, ASAp, as well as MCMC-Priority, MCMC-Restart, and MCMC-Uniform, with varying number of steps ($k\in\{2,5,10\}$).
We also include as a baseline Grammarinator~\cite{2018grammarinator}, a fuzzing tool for generating random strings in a grammar that does not use LMs. 
%
For these benchmarks, rejection sampling exhibits impractical acceptance rates (<1\%) (\Cref{app:rej}) and we therefore exclude it from our comparison as it would take more than 10,000 samples to produce 100 valid seeds. 

We describe the full fuzzing protocol in \Cref{app:fuzzing-protocol}. In summary, for every benchmark and method we generate 100 seeds for AFL++ and run it for 1 hour (in this time AFL++ generates thousands on new inputs based on the seeds and executes the software on them).
We then measure mean branch coverage at different time steps with bootstrapped 95\% confidence intervals over five independent 1-hour-long fuzzing trials, following standard fuzzing-evaluation protocols~\cite{bohme2022reliability,klees2018evaluating}. 
Most reachable branches are typically discovered in the first hour, after which coverage tends to plateau~\cite{klees2018evaluating}).


\paragraph{Findings.}
Due to space limitations, we focus on representative results for the SQL benchmark (full results in \Cref{app:overallcov}).
\Cref{fig:kl-div-sql} illustrates how the KL divergence changes at different steps in a similar way as observed in \Cref{sec:evaluation:synth} for synthetic benchmarks---it decreases and converges in trend for all variants of MCMC, though with some fluctuation caused by randomness.

%
\begin{figure}[t]
  \centering
  \subcaptionbox{KL divergence\label{fig:kl-div-sql}}
                [0.48\linewidth]{%
     \includegraphics[width=\linewidth]{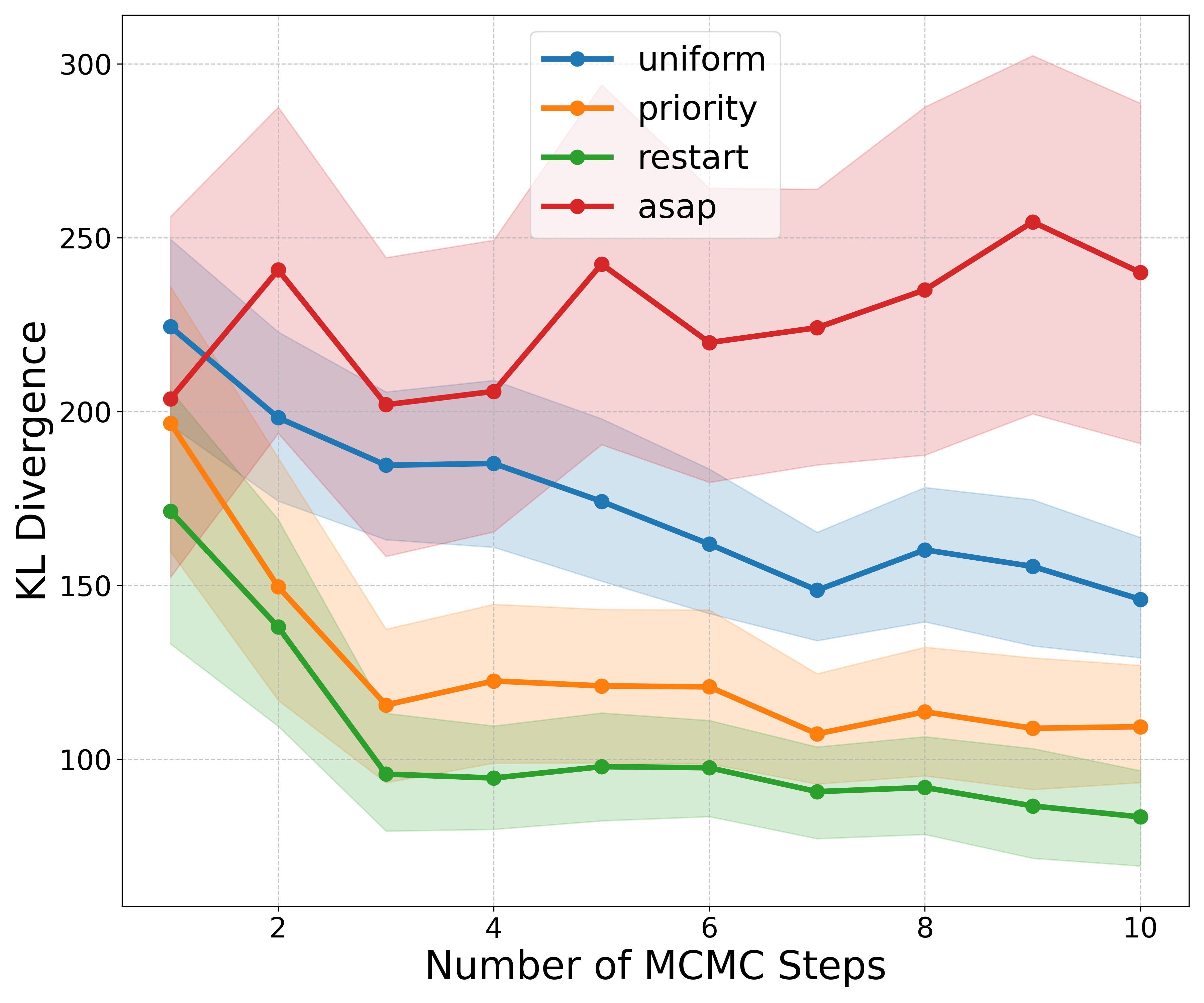}}
  \hfill
  \subcaptionbox{Branch coverage\label{fig:branch-sql}}
                [0.48\linewidth]{%
     \includegraphics[width=\linewidth]{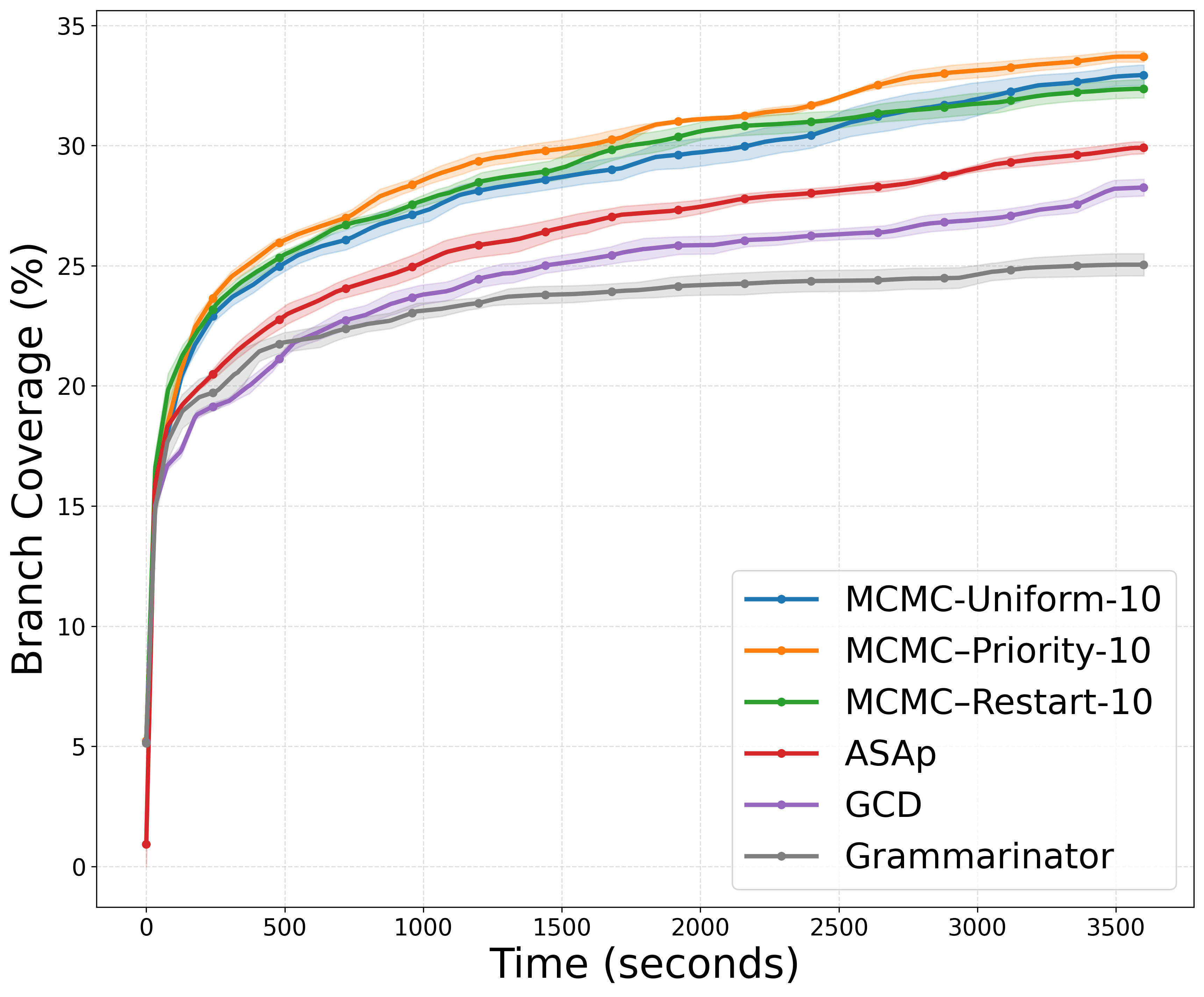}}
  \caption{SQL benchmark: (a) KL divergence 
  for MCMC with varying number of steps. (b) Branch coverage over time.}
  \label{fig:sql-main-results}
\end{figure}

The key result is given in \Cref{fig:branch-sql}. To avoid clutter, \Cref{fig:branch-sql} only reports the results for MCMC when ran for 10 steps (the versions for 2 and 5 can be found in ~\Cref{app:ablation}).

Fuzzing using seeds produced via MCMC leads to significantly higher branch coverage than Grammarinator, GCD, and ASAp. Grammarinator, relying solely on grammar, achieves the lowest coverage, followed by GCD and then ASAP.
Among all approaches, MCMC-Priority $(k=10)$ is the most effective, delivering branch coverages of 33.70\% SQL and 12.81\% XML coverage---corresponding to gains of $1.2\times$ and $1.15\times$ over GCD (28.26\%, 10.67\%) and $1.12\times$ and $1.15\times$ over ASAp (29.91\%, 11.1\%) on the SQL and XML benchmarks, respectively.
This result underscores the benefit of MCMC in diversifying the kind of samples an LM can produce.
As expected the versions of MCMC with $(k=5)$ provide coverages that are lower than at $(k=10)$, but higher than GCD (see \Cref{app:ablation}); at $(k=2)$ the coverage is quite similar to that of GCD.

Intriguingly, the MCMC methods' coverage ranking (\Cref{fig:branch-sql}) is inversely mirrored by their KL divergence from the true conditional grammar-constrained distribution (\Cref{fig:kl-div-sql}), though the overall differences are relatively minor. 
This phenomenon suggests that while LMs provide a strong foundation for producing diverse outputs, approaches like MCMC-Priority may be better at exploring ``variants'' of the same output, which in turn are better for exercising branch coverage in fuzzing. 

In summary, the results support the claim that MCMC-based constrained sampling monotonically converges to the desired distribution, and convergence is indicative of higher coverage in fuzzing.

%

\section{Related Work}\label{sec:related}

\paragraph{Constrained Decoding}
A large body of work has investigated constrained decoding methods that modify the token-by-token decoding process of LMs to enforce syntactic or lexical restrictions. These constraints are often specified using regular languages~\cite{melcer2024constrained, willard2023efficient} or context-free grammars (CFGs)~\cite{lmql, dong2022codepad, geng2024grammarconstrained, poesia2022synchromesh, picard, shin2021constrained, stengel2023zero, ugare2024improving,park2025flexibleefficientgrammarconstraineddecoding,AgrawalKGLR23, li2024guiding, wang2023grammar,anderson2016guided, hokamp2017lexically, hu2019improved, lu-etal-2022-neurologic, lu-etal-2021-neurologic, post2018fast}.
As discussed in \Cref{sec:limitations-of-existing-approaches} and as observed by~\citet{park2024grammaraligned}, approaches based on constrained decoding distort the LM distribution and do not converge to the target constrained distribution.

Gradient-based sampling methods~\cite{amini2024structured, kumar2022gradient, li2022diffusion, qin2022cold} offer a softer alternative, guiding generation toward constraint satisfaction by using relaxed, differentiable surrogates. These methods are better suited for soft or semantic constraints but still suffer from inefficiency and are guaranteed to produce constraint-satisfying outputs.

\paragraph{Constraint-Aligned Decoding}
Several recent works have emphasized the importance of sampling multiple outputs from LMs to approximate the model’s distribution~\cite{huang2024large, llm_sampling_renda_hopkins_2023,park2024grammaraligned}. However, when hard constraints are introduced, typical decoding strategies like top-$k$, nucleus, or beam search become unreliable estimators of the constrained distribution. 

\citet{park2024grammaraligned} introduced  the ASAp to approximate the true constrained distribution, which it has been studied further by~\citet{melcer2024approxaligned}.
However, these methods are slow to converge, requiring thousands of samples, and therefore inefficient in practice.


\paragraph{Controlled generation}

Methods such as GeLaTo \cite{zhang2023tractable} and Ctrl-G \cite{zhang2024adaptablelogicalcontrol} combine autoregressive language models with Hidden Markov Models (HMMs) to guide generation based on constraints. These techniques are specifically designed for constraints that can be represented as deterministic finite automata (DFA). 
Given a prefix, a pretrained HMM is used to approximate the probability that a subsequently generated suffix will satisfy the DFA constraint, effectively estimating the likelihood of a valid continuation. However, this approach is limited to DFA-representable constraints and cannot be easily extended to more general grammars like context-free grammar. Furthermore, these methods require training separate surrogate models. Crucially, they also do not guarantee convergence to the ideal distribution, a limitation they share with other techniques that use approximate inference for intractable conditional distributions (e.g., Feynman-Kac Transformer Models) \cite{qin2022cold, lew2023sequential}.

\section{Conclusion}
\label{sec:conclusion}

We introduced a simple yet effective MCMC-based framework for constrained decoding. 
Unlike prior approaches that suffer from slow convergence or rely on inefficient rejection sampling, our method directly samples from within the constrained space while asymptotically preserving the target distribution defined by the LM. 
Our framework leads to practical improvements in real-world applications where sampling diversing inputs is crucial---most notably, program fuzzing, where high-quality diverse samples translate into higher code coverage. 

\paragraph{Limitations.}
Due to compute limitations, we only provide results for one model. Furthermore, we only fuzz two projects (libxml2 and sqllite) using a single prompt choice and for 1-hour trials.
However, we believe the results support the claims made in the paper: MCMC-based constrained sampling converges to the desired distribution, and convergence translates to higher coverage in fuzzing.
While we only study three possible proposal distributions, our work opens the door for studying richer proposal mechanisms for MCMC, as well as integration with other decoding strategies such as beam or nucleus sampling.

\section*{Acknowledgments}
This work was supported in part by a Microsoft Faculty Fellowship; a UCSD JSOE Scholarship; Google’s Gemma Academic Program
GCP Credit Award; and NSF under grants CCF-2422214, CCF-2402833 and CCF-2211968. 
Any opinions, findings, and conclusions or recommendations expressed in this publication are those of the authors, and do not necessarily reflect the views of the sponsoring entities.
Loris D’Antoni holds concurrent appointments as a Professor at the University of California San Diego and as an Amazon Scholar. This paper describes work performed at the University of California San Diego and is not associated with Amazon.
The authors deeply thank Nadia Polikarpova for her insightful feedback and guidance during the development of this project.

\bibliographystyle{acl_natbib}
\bibliography{ref}


\newpage
\appendix

\begin{center}
{\LARGE\bfseries Appendix}
\end{center}

\section{Hardware and Software}
\label{app:hardware} 
Our experiments were conducted on Ubuntu 22.04 LTS nodes with Intel Xeon Gold 6230 CPUs (2.10 GHz, 10 cores, 20 threads allocated) and 384 GB RAM. For GPU-accelerated workloards, we provisioned 8x NVIDIA GeForce RTX 2080 Ti GPUs. Our implementation is based on Python 3.10.12, PyTorch 2.6.0+cu124, AFL++ 4.00c and LLVM 14.0.0.

\section{Hyperparameters}
\label{app:hyperparams}
For language‐model decoding, we set temperature to 1.0, top-p to 0.9, and top-k to 0 to allow sampling from the full token vocabulary. We limited the maximum number of newly generated tokens to 512 for XML, and 1024 for SQL \texttt{.test} scripts. 

\section{Model Checkpoint}
\label{app:models}
We use the Llama-3.1-8B model checkpoint provided by Hugging Face (commit 0e9e39f):
\url{https://huggingface.co/meta-llama/Llama-3.1-8B-Instruct}.

\section{Fuzzing Experiments Details}
\label{app:exp_detail}








\subsection{Benchmarks}
\label{app:benchmarks}

Table~\ref{tab:benchmarks:app} summarizes the libraries, versions, and seed formats for each target.

\begin{table}[h]
  \centering
  \small
  \caption{Benchmarks, versions, and seed formats.}
  \label{tab:benchmarks:app}  
  \begin{tabular}{l l l l}
    \toprule
    \textbf{Target} & \textbf{Library}        & \textbf{Version} & \textbf{Seed format} \\
    \midrule
    XML\textsuperscript{\cite{W3CXML1,W3CXMLNames3}}   & libxml2   & 2.15.0 & \texttt{.xml}  \\
    SQL\textsuperscript{\cite{zxteloiv_complexqa_2025}}               & sqlite    & 3.49.2 & \texttt{.test} \\
    \bottomrule
  \end{tabular}
\end{table}

\subsection{Parse-Tree Illustration}
\label{app:parse-tree}
Figure~\ref{fig:parse-tree} shows a comprehensive parse tree for a SQLite test case, derived from the grammar in Figure~\ref{fig:fuzzing-grammar}.

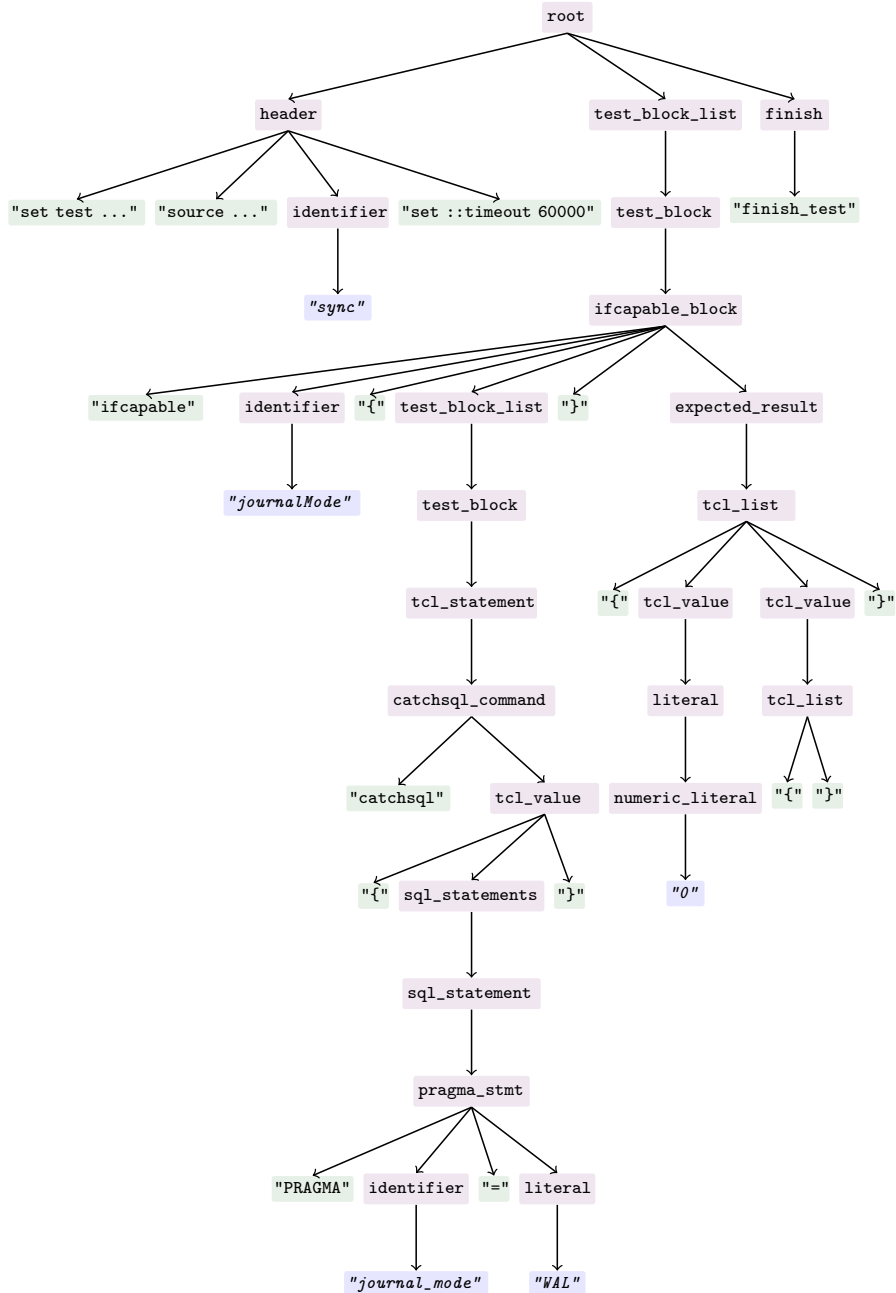
\begin{figure}[t]
  \centering
  \resizebox{0.85\linewidth}{!}{\begin{forest}
  for tree={
    grow=south,
    s sep=1.2mm,
    l sep=9mm,         
    edge={->, semithick},
    parent anchor=south,
    child anchor=north,
    align=center,
    font=\scriptsize, 
    nt/.style={font=\ttfamily\scriptsize, inner sep=2pt, fill=nonterminal!10, text width=1.6cm, align=center, rounded corners=1pt},
    term/.style={font=\ttfamily\scriptsize, rounded corners=1pt, inner sep=1pt, fill=terminal!10, text width=2.1cm, align=center}, 
    termval/.style={font=\ttfamily\scriptsize\itshape, fill=blue!10, rounded corners=1pt, inner sep=1pt, text width=1.8cm, align=center},
    operator/.style={font=\ttfamily\scriptsize, fill=operator!10, rounded corners=1pt, inner sep=1pt, text width=1.8cm, align=center},   
  },
  anchor=west,
  [root, nt, text width=0.55cm 
    [header, nt, text width=0.77cm
      ["set test ...", term, text width=1.8cm]
      ["source ...", term, text width=1.6cm]
      [identifier, nt, text width=1.25cm
          ["sync", termval, text width=0.85cm]
      ]
      ["set ::timeout 60000", term, text width=2.7cm]
    ]
    [test\_block\_list, nt, text width=1.95cm
      [test\_block, nt, text width=1.35cm
        [ifcapable\_block, nt, text width=1.95cm,
          ["ifcapable", term, text width=1.5cm, xshift=-6cm]
          [identifier, nt, text width=1.3cm, xshift=-4cm
            ["journalMode", termval, xshift=-4cm]
          ]
          ["\{", term, text width=0.35cm, xshift=-3cm] 
          [test\_block\_list, nt, text width=1.95cm, xshift=-3cm
            [test\_block, nt, text width=1.35cm, xshift=-3cm
              [tcl\_statement, nt,  text width=1.65cm, xshift=-3cm
                [catchsql\_command, nt, text width=2.15cm, xshift=-3cm
                  ["catchsql", term,  text width=1.35cm, xshift=-4cm]
                  [tcl\_value, nt,  text width=1.35cm, xshift=-2cm
                    ["\{", term, text width=0.35cm, xshift=-3cm]
                    [sql\_statements, nt,  text width=1.85cm, xshift=-2cm
                      [sql\_statement, nt,  text width=1.75cm, xshift=-2cm
                        [pragma\_stmt, nt, text width=1.45cm, xshift=-2cm
                          ["PRAGMA", term, text width=1.05cm, xshift=-3cm]
                          [identifier, nt, text width=1.3cm, xshift=-3cm
                            ["journal\_mode", termval, text width=1.9cm, xshift=-3cm]
                          ]
                          ["\text{=}", term, text width=0.35cm, xshift=-2cm]
                          [literal, nt, text width=0.9cm, xshift=-2cm
                            ["WAL", termval, text width=0.7cm, xshift=-2cm]
                          ]
                        ]
                      ]
                    ]
                    ["\}", term, text width=0.35cm, xshift=-3cm]
                  ]
                ]
              ]
            ]
          ]
          ["\}", term, text width=0.35cm, xshift=-2cm] 
          [expected\_result, nt, text width=1.95cm
            [tcl\_list, nt, text width=1.2cm
              ["\{", term, text width=0.35cm]
              [tcl\_value, nt, text width=1.15cm
                [literal, nt, text width=0.9cm
                  [numeric\_literal, nt, text width=1.95cm
                    ["0", termval, text width=0.45cm]
                  ]
                ]
              ]
              [tcl\_value, nt,  text width=1.15cm
                [tcl\_list, nt,  text width=1.10cm
                  ["\{", term, text width=0.35cm]
                  ["\}", term, text width=0.35cm]
                ]
              ]
              ["\}", term, text width=0.35cm]
            ]
          ]
        ]
      ]
    ]
    [finish, nt, text width=0.8cm
      ["finish\_test", term, text width=1.7cm]
    ]
  ]
\end{forest}}
  \caption{Condensed parse tree for the example SQLite
           \texttt{.test} script used in \Cref{fig:fuzzing-example}. \textcolor{nonterminal}{{Purple}} boxes denote non-terminals, \textcolor{terminal}{green} boxes denote grammar  terminals, and \textcolor{blue}{blue\,italics} show literal terminal values substituted during this derivation. Subtrees unrelated to the header and the first \texttt{do\_execsql\_test} block are elided for brevity.}
  \label{fig:parse-tree}
\end{figure}

\subsection{Prompts and Constraints}
\label{app:prompts}
For all benchmarks, we use a standard in-context learning format where the prompt consists of a single (specification, solution) pair, followed by a new specification for which the model must generate a solution. A representative prompt for the \texttt{XML} benchmark is shown in Figure~\ref{fig:fuzzing-xml-prompt}, along with its corresponding grammar in Figure~\ref{fig:fuzzing-grammar-xml}. 

\begin{figure}
    \centering
    \scriptsize
    \begin{subfigure}{0.48\textwidth}
        \centering
        \input{codes/xml-prompt}
        \vspace{-0.5em}
        \caption{Prompt}
        \label{fig:fuzzing-xml-prompt}
    \end{subfigure}
    \hspace{1.5em}\vrule\hspace{1em}
    \begin{subfigure}{0.44\textwidth}
        \centering
        \input{codes/xml-grammar}
        \vspace{-0.5em}
        \caption{Grammar}
        \label{fig:fuzzing-grammar-xml}
    \end{subfigure}
       
    \caption{(a) Prompt given to a LM to generate seed test cases for fuzzing the XML parser.
    (b) Simplified version of the XML grammar written in EBNF notation. The goal of the problem is to generate multiple diverse seeds that trigger different code paths in the library being tested.
} 
    \label{fig:fuzzing-example-xml}
\end{figure}

\subsection{Fuzzing Protocol and Environment}
\label{app:fuzzing-protocol}

All fuzzing experiments were conducted using AFL++ 4.00c on the hardware and software setup described in~\Cref{app:hardware}. Each \((\text{benchmark},\text{method})\) pair was evaluated in \(\mathbf{N = 5}\) independent, single-instance AFL++ runs of exactly \(3600\,\text{s}\) (one hour).  We set `AFL\_RANDOM\_SEED` to \(42+i, (i=1...5)\) for reproducibility and configure standard environment variables to ensure non-interactive execution. All other AFL++ parameters remained at defaults to isolate the impact of seed corpus quality. Complete build and execution scripts are provided in the supplementary materials.

\subsection{Coverage Measurement via LLVM Instrumentation}
\label{app:coverage}

We measured branch coverage using LLVM's instrumentation toolchain 
\texttt{(-fprofile-instr-generate -fcoverage-mapping)}, which adds $\leq$ 2\% runtime overhead. Raw profiles were collected during execution and aggregated post-trial using llvm-profdata and llvm-cov.

\paragraph{Rationale.}
We report branch coverage rather than crash counts because  
the experiment isolates \textit{seed quality} --- all methods receive the same fixed prompt per benchmark, so coverage is a good measure on how their seeds exercise the code.

\subsection{Rejection Sampling Acceptance Rates}
\label{app:rej}

We quantified the viability of rejection sampling under the same
grammar constraints used by our MCMC framework in~\Cref{sec:evaluation:fuzzing}. Across 500 attempted samples per benchmark, the proportion of syntactically and semantically valid outputs was consistently below $1\%$.

\subsection{Branch Coverage and KL Divergence for XML}
\label{app:branchcov}

Figure~\ref{fig:xml-main-results} shows the same analysis illustrated in the main text for SQLite, but for the XML benchmark.

\begin{figure}[t]
  \centering
  \subcaptionbox{\label{fig:xml-div-sql}}
                [0.48\linewidth]{%
     \includegraphics[width=\linewidth]{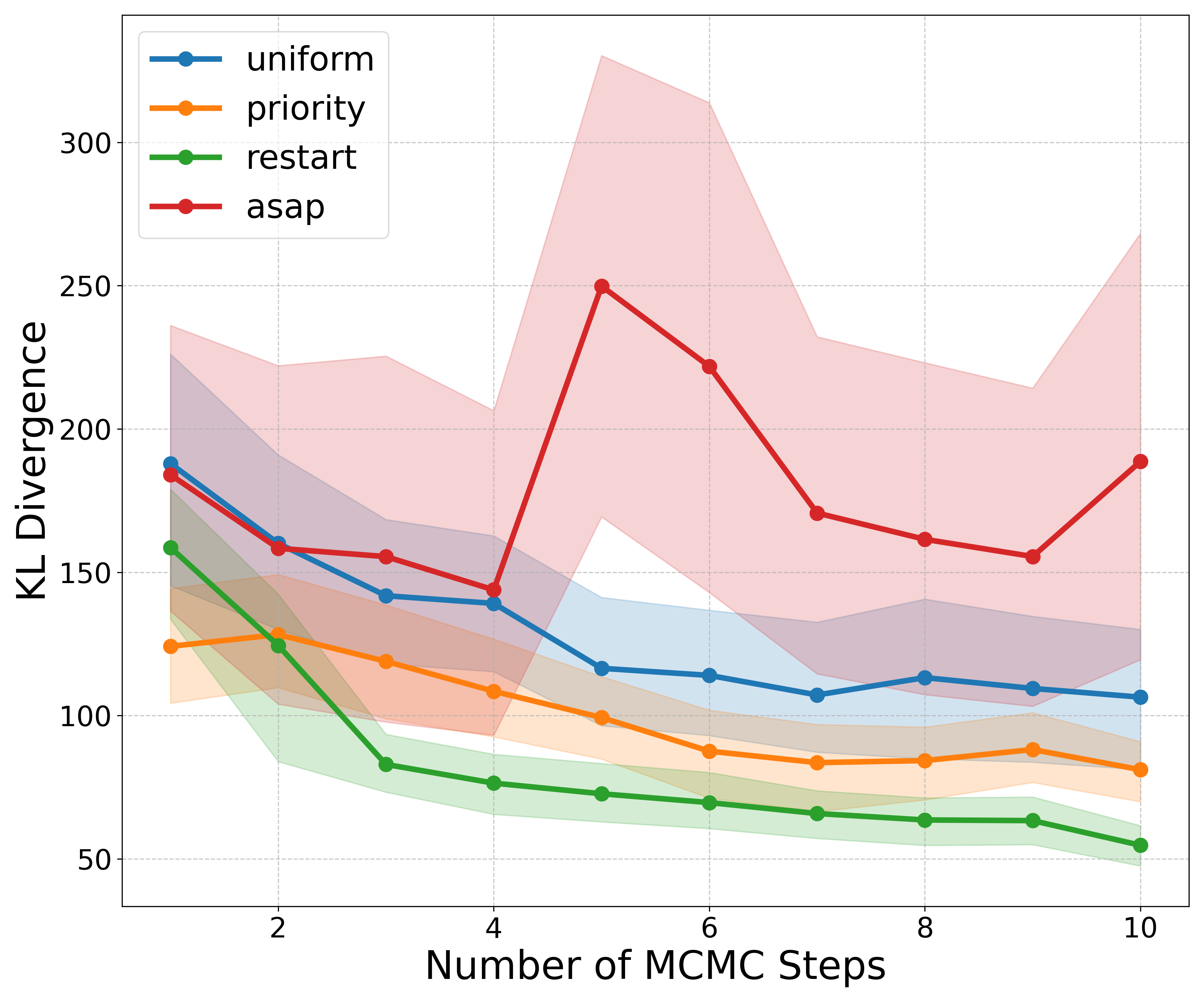}}
  \hfill
  \subcaptionbox{\label{fig:branch-xml}}
                [0.48\linewidth]{%
     \includegraphics[width=\linewidth]{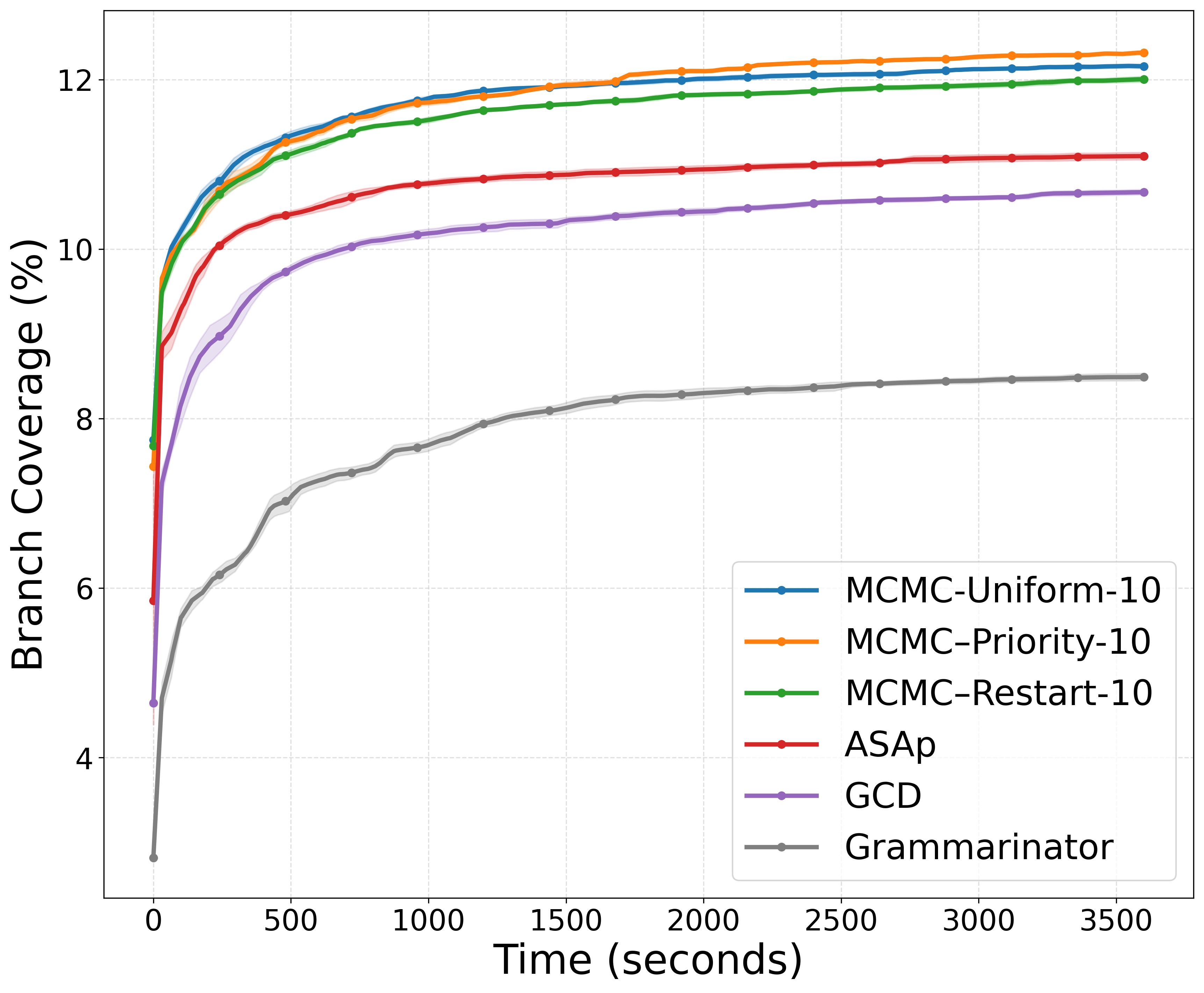}}
  \vspace{-0.6em}
  \caption{(a) KL divergence (mean ± 95 \% CI,
           hundred runs per approach) for MCMC with varying number of steps for the XML Benchmark. (b) Branch coverage over time (mean ± 95 \% CI,
           five trials per method) for the XML Benchmark.}
  \label{fig:xml-main-results}
\end{figure}

\subsection{Ablation Study}
\label{app:ablation}

To isolate the effect of (i) the proposal family ({Priority}, {Restart}, {Uniform}) and {(ii)} the number of steps \(k\!\in\!\{2,5,10\}\), we plot each family separately against the heuristic baseline (\textbf{GCD}).\footnote{Grammarinator is omitted for clarity; its curve lies far below all others and does not alter the ordering.}
Figures~\ref{fig:abl-xml}–\ref{fig:abl-sql} reveal two consistent
patterns.

\begin{enumerate}
  \item \textbf{Number of steps matters, but saturates.} Coverage grows monotonically with~\(k\); however, \(k{=}5\) already captures $\ge$95\,\% of the gain realised by \(k{=}10\) on both benchmarks.
  \item \textbf{All MCMC variants beat GCD.} Even the weakest setting (\(k{=}2\)) surpasses GCD’s final coverage by 4-5\%, demonstrating that MCMC proposals yield coverage gains over heuristic constrained decoding --- even with very few sampling steps.
\end{enumerate}

\begin{figure*}[t]
  \centering

  \subcaptionbox{Priority\label{fig:xml-prio}}
                [0.31\linewidth]{\includegraphics[width=\linewidth]{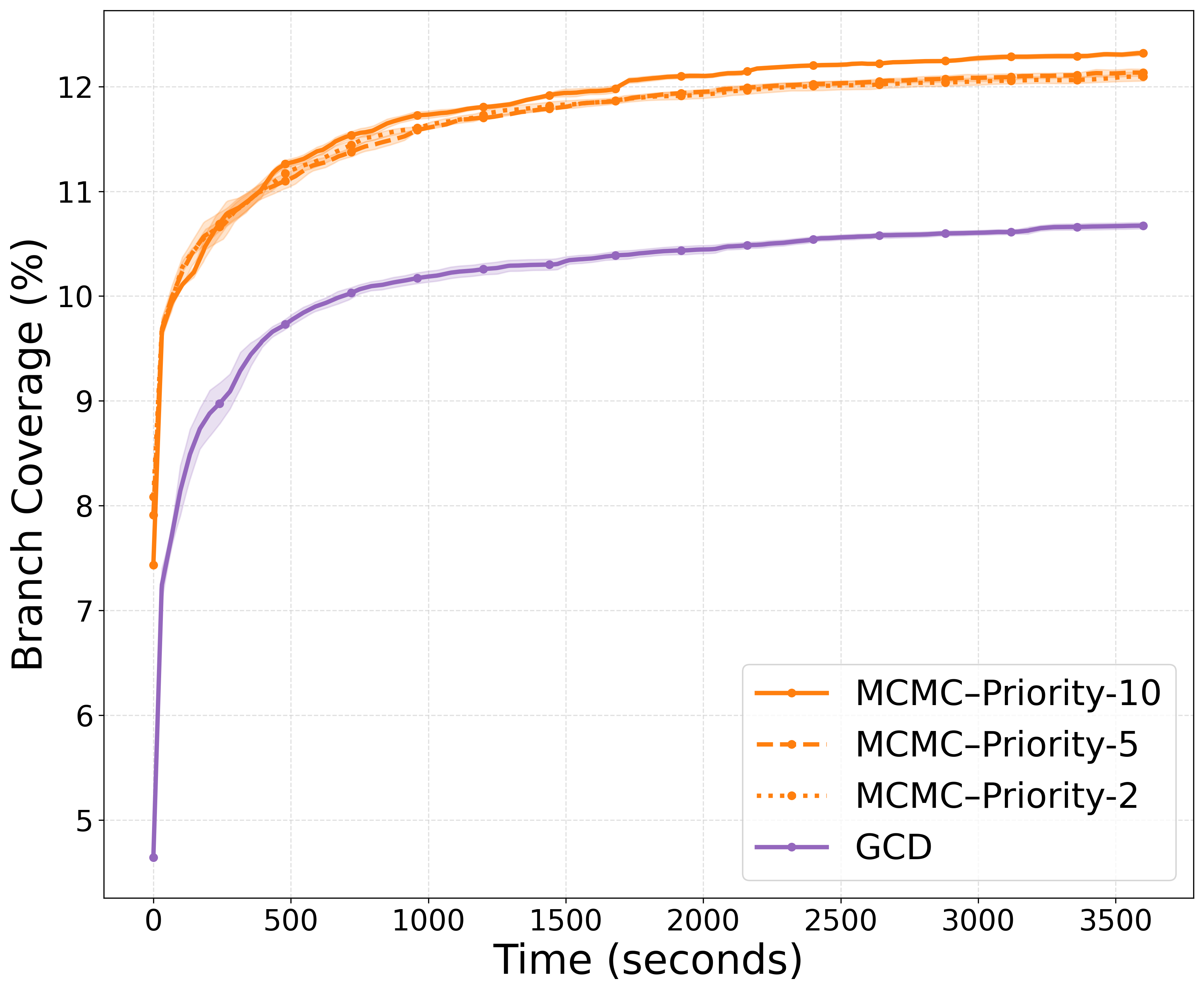}}
  \hfill
  \subcaptionbox{Restart\label{fig:xml-rest}}
                [0.31\linewidth]{\includegraphics[width=\linewidth]{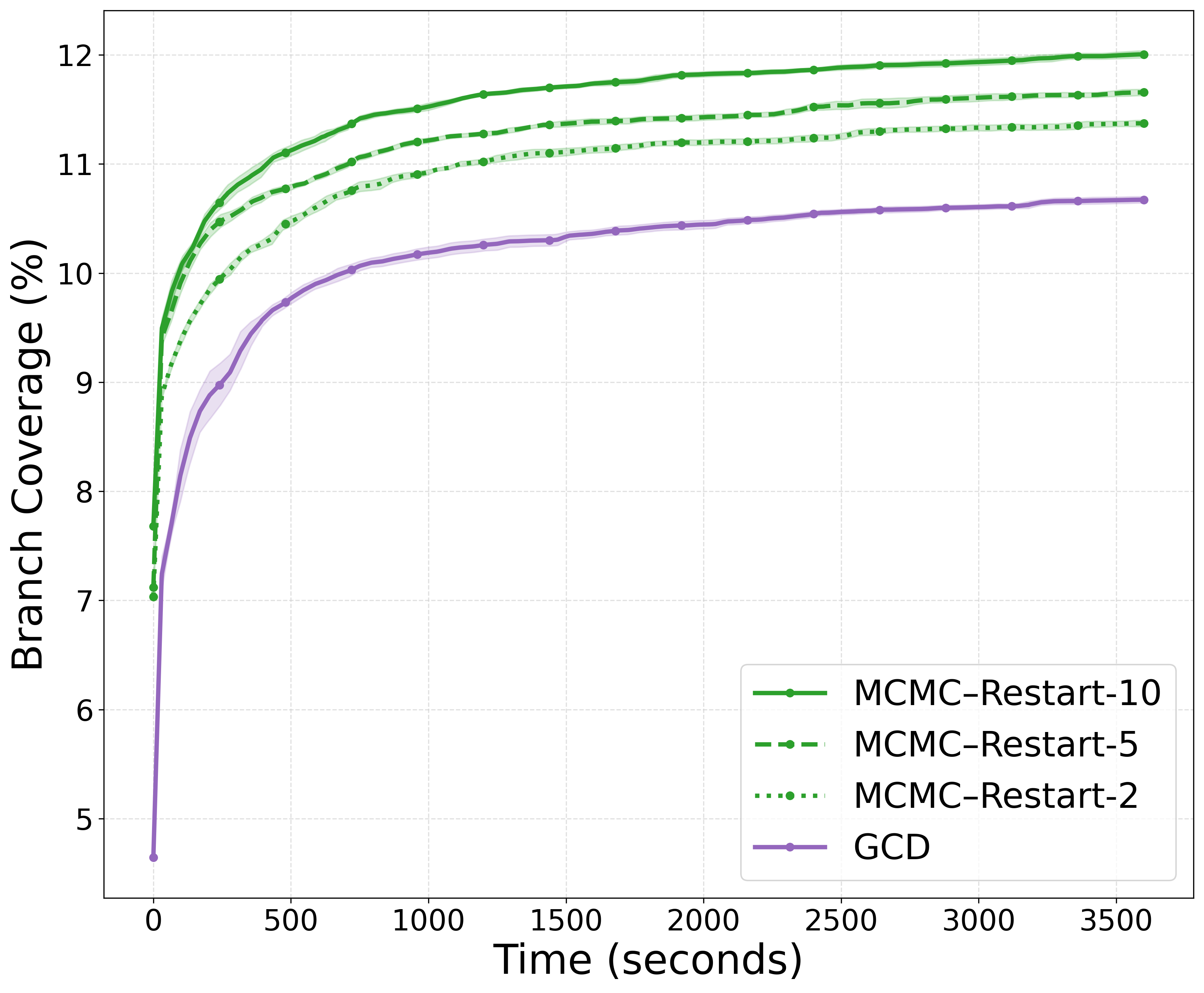}}
  \hfill
  \subcaptionbox{Uniform\label{fig:xml-pref}}
                [0.31\linewidth]{\includegraphics[width=\linewidth]{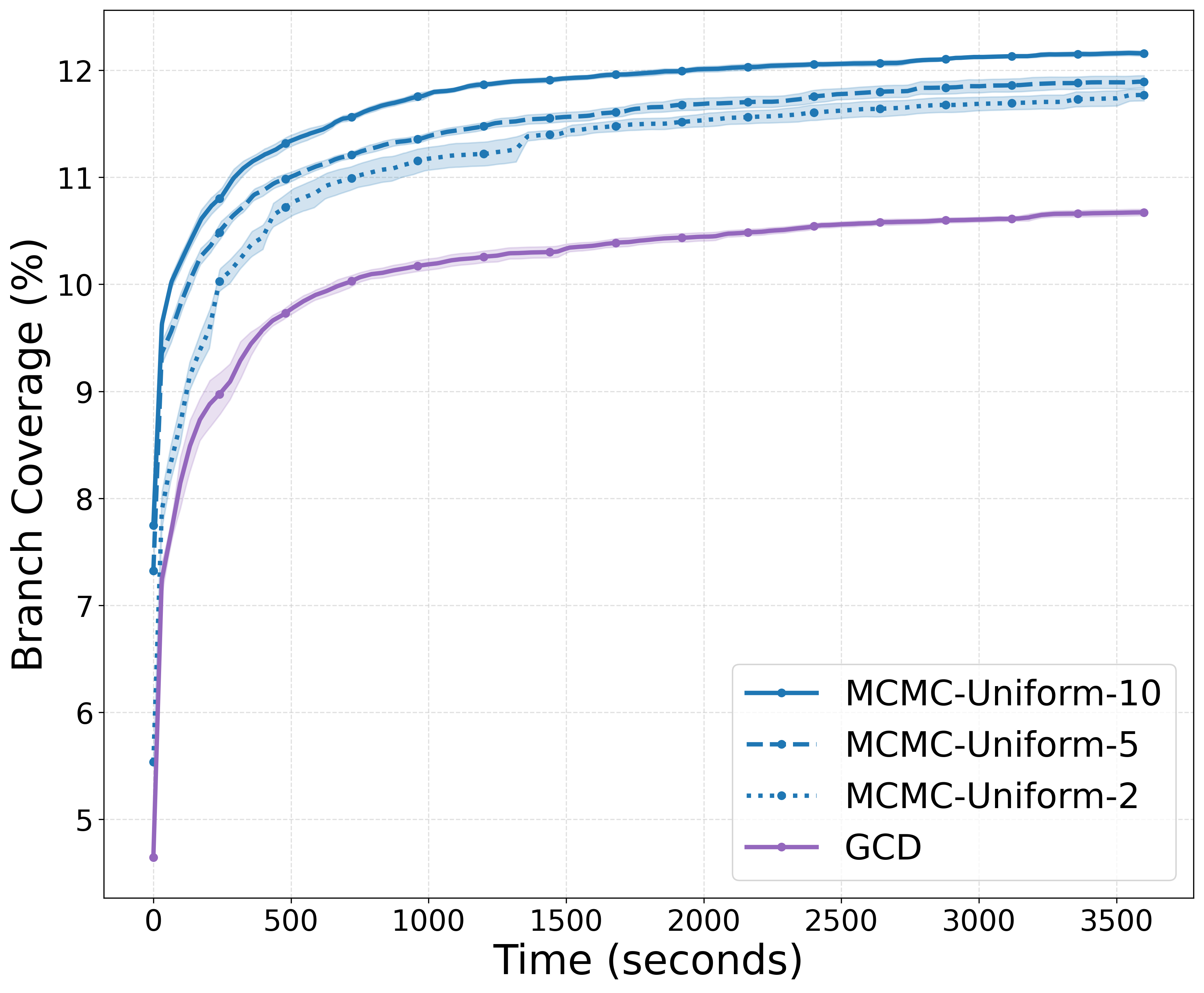}}
  \hfill
  \vspace{-0.4em}
  \caption{\textbf{XML}: branch-coverage ablation. Line style encodes the number of steps \(k\!\in\!\{2,5,10\}\) (dotted, dashed, solid).}
  \label{fig:abl-xml}
\end{figure*}

\begin{figure*}[t]
  \centering
  \subcaptionbox{Priority\label{fig:sql-prio}}
                [0.31\linewidth]{\includegraphics[width=\linewidth]{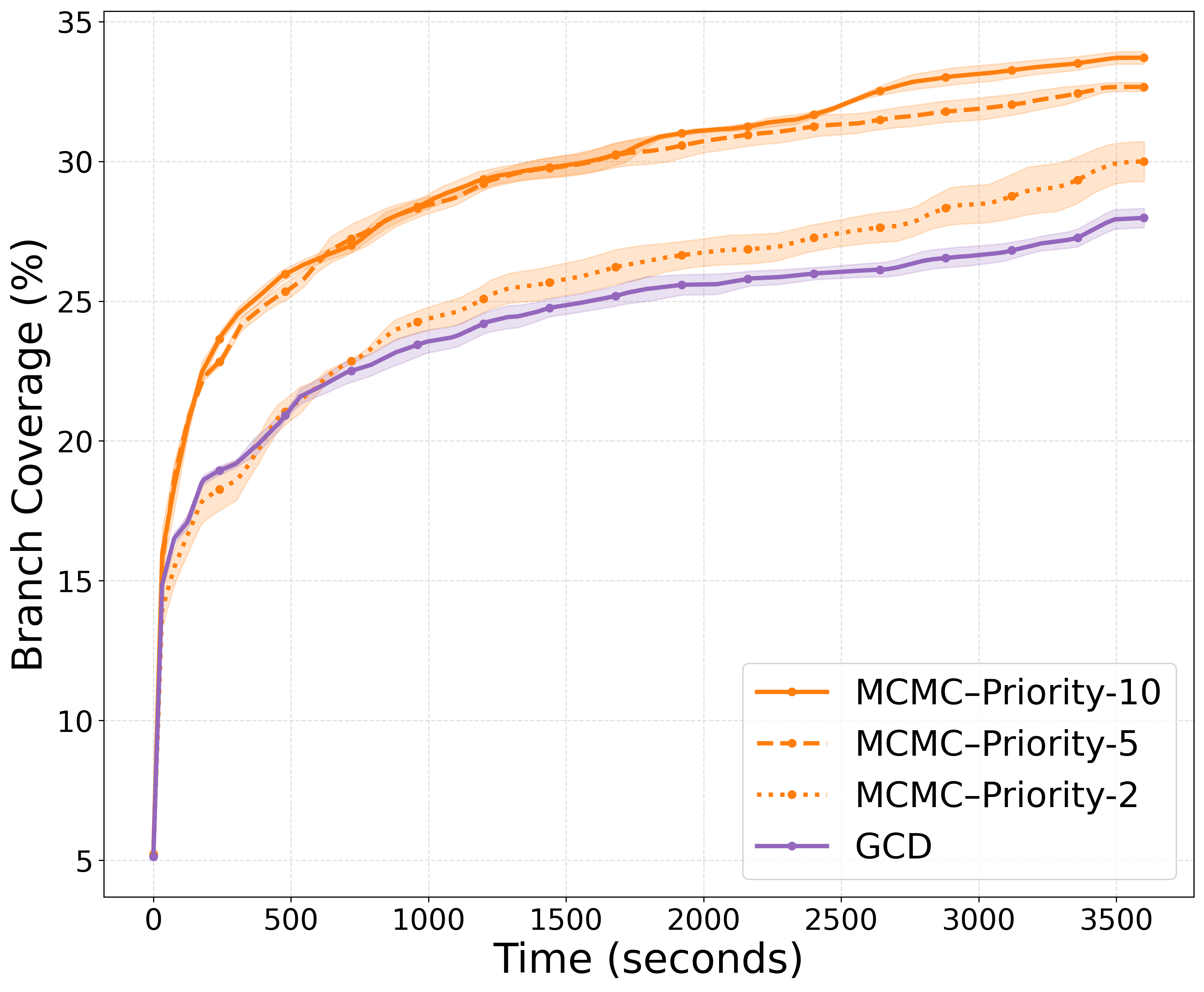}}
  \hfill
  \subcaptionbox{Restart\label{fig:sql-rest}}
                [0.31\linewidth]{\includegraphics[width=\linewidth]{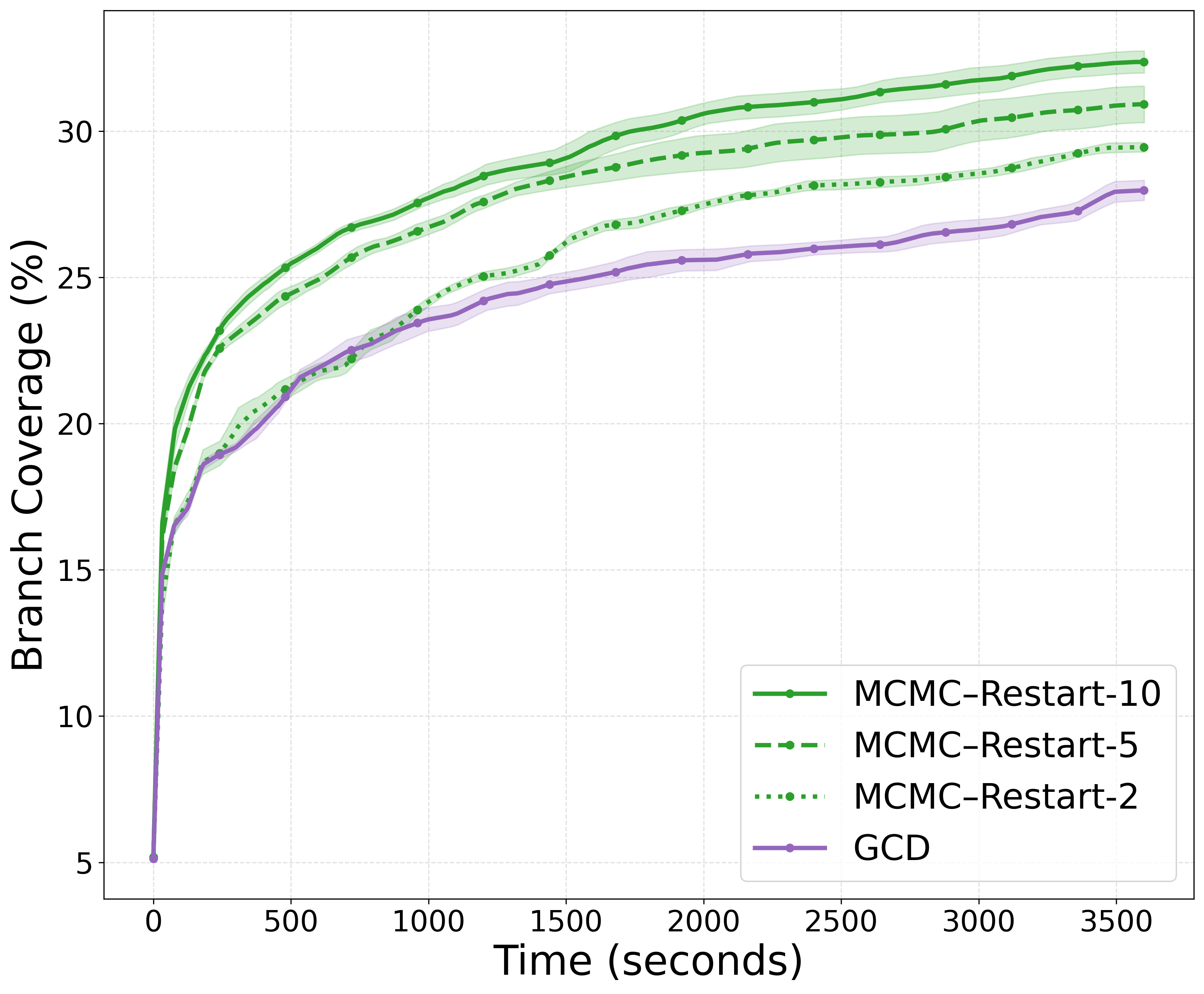}}
  \hfill
  \subcaptionbox{Uniform\label{fig:sql-pref}}
                [0.31\linewidth]{\includegraphics[width=\linewidth]{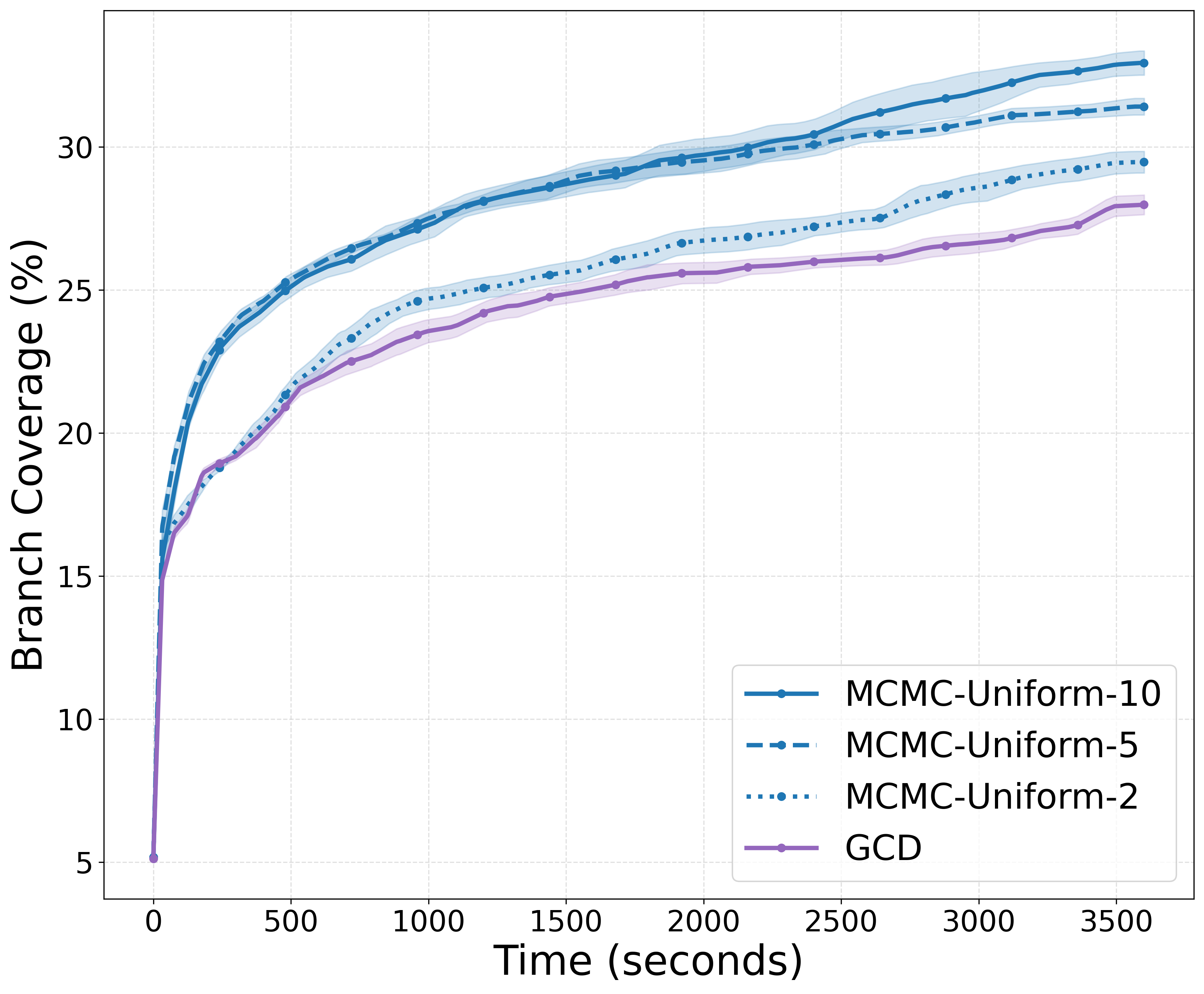}}
  \hfill
  \vspace{-0.4em}
  \caption{\textbf{SQL}: branch-coverage ablation.  Line style encodes the number of steps \(k\!\in\!\{2,5,10\}\) (dotted, dashed, solid).}
  \label{fig:abl-sql}
\end{figure*}

\subsection{Function and Line Coverage}
\label{app:addmetrics}

To corroborate the branch‐coverage trends reported in the main text, we additionally measure both

\begin{itemize}
  \item \textbf{Line coverage}: the fraction of source lines executed (\Cref{fig:line-cov-sql,fig:line-cov-xml}), and  
  \item \textbf{Function coverage}: the fraction of instrumented functions executed (\Cref{fig:function-cov-sql,fig:function-cov-xml}). 
\end{itemize}

using the same \texttt{llvm-cov} instrumentation described in Appendix~\ref{app:coverage}.

\begin{figure*}[t]
  \centering
  \subcaptionbox{Priority\label{fig:xml-prio-funccov}}
                [0.31\linewidth]{\includegraphics[width=\linewidth]{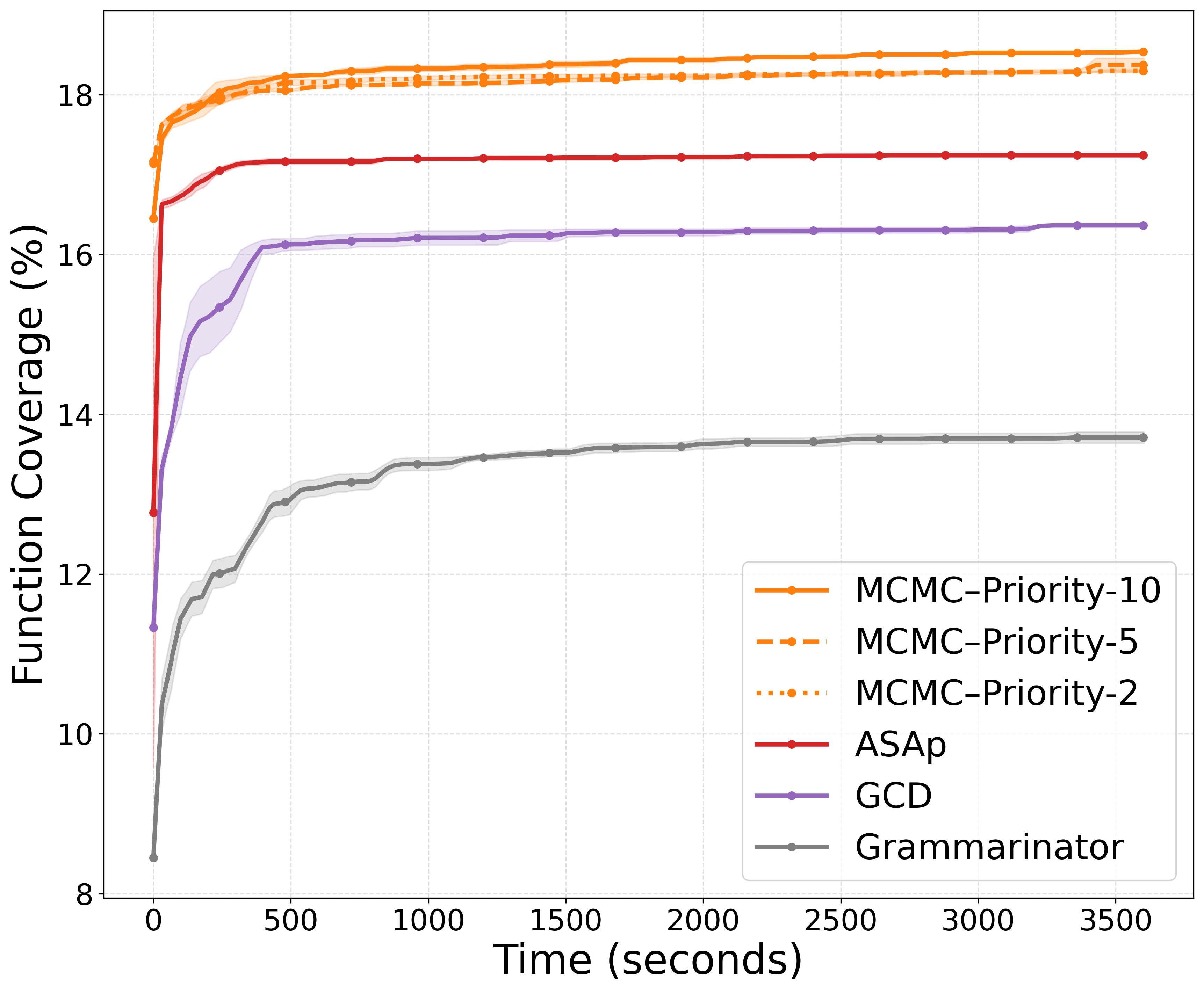}}
  \hfill
  \subcaptionbox{Restart\label{fig:xml-rest-funccov}}
                [0.31\linewidth]{\includegraphics[width=\linewidth]{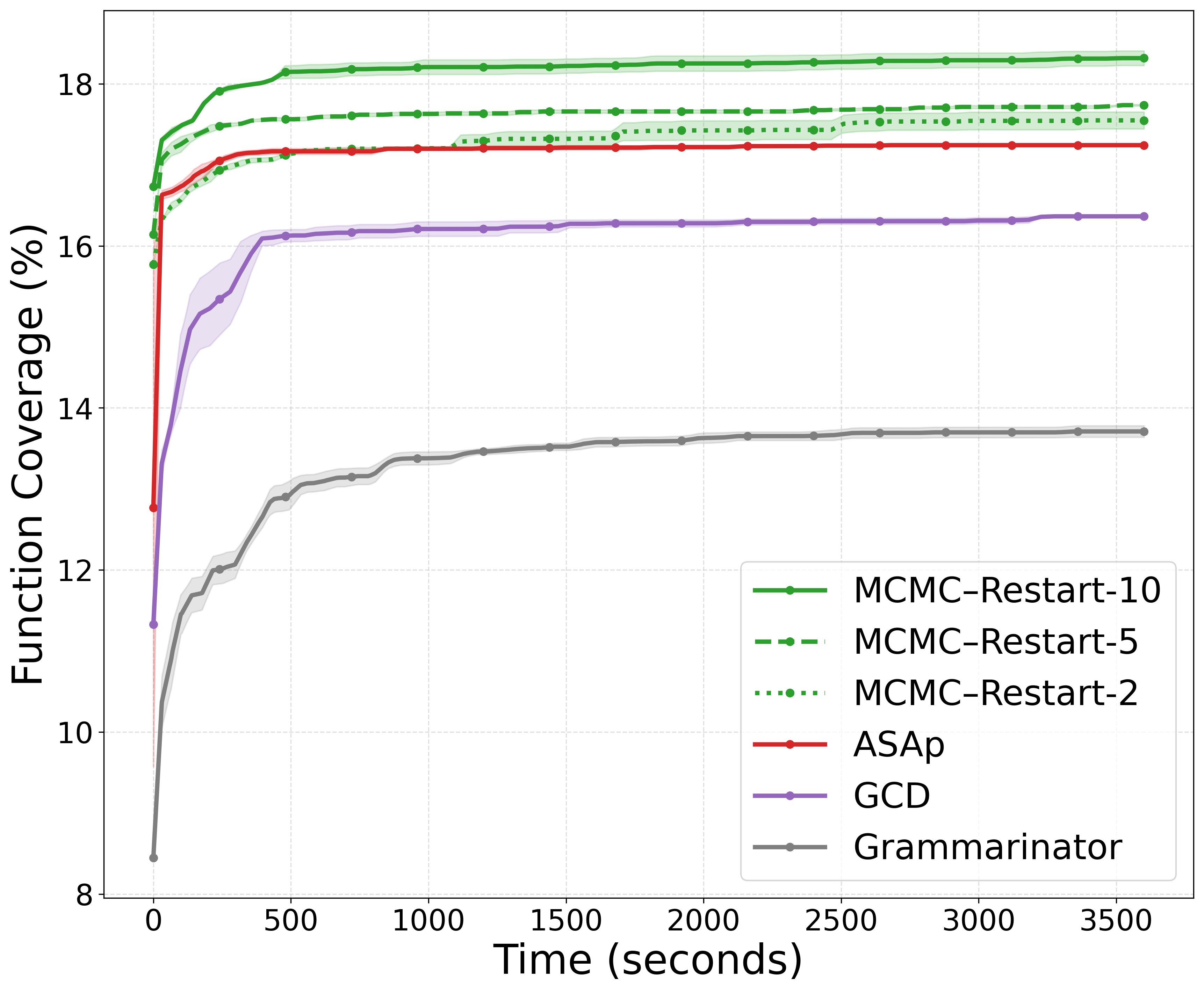}}
  \hfill
  \subcaptionbox{Uniform\label{fig:xml-pref-funccov}}
                [0.31\linewidth]{\includegraphics[width=\linewidth]{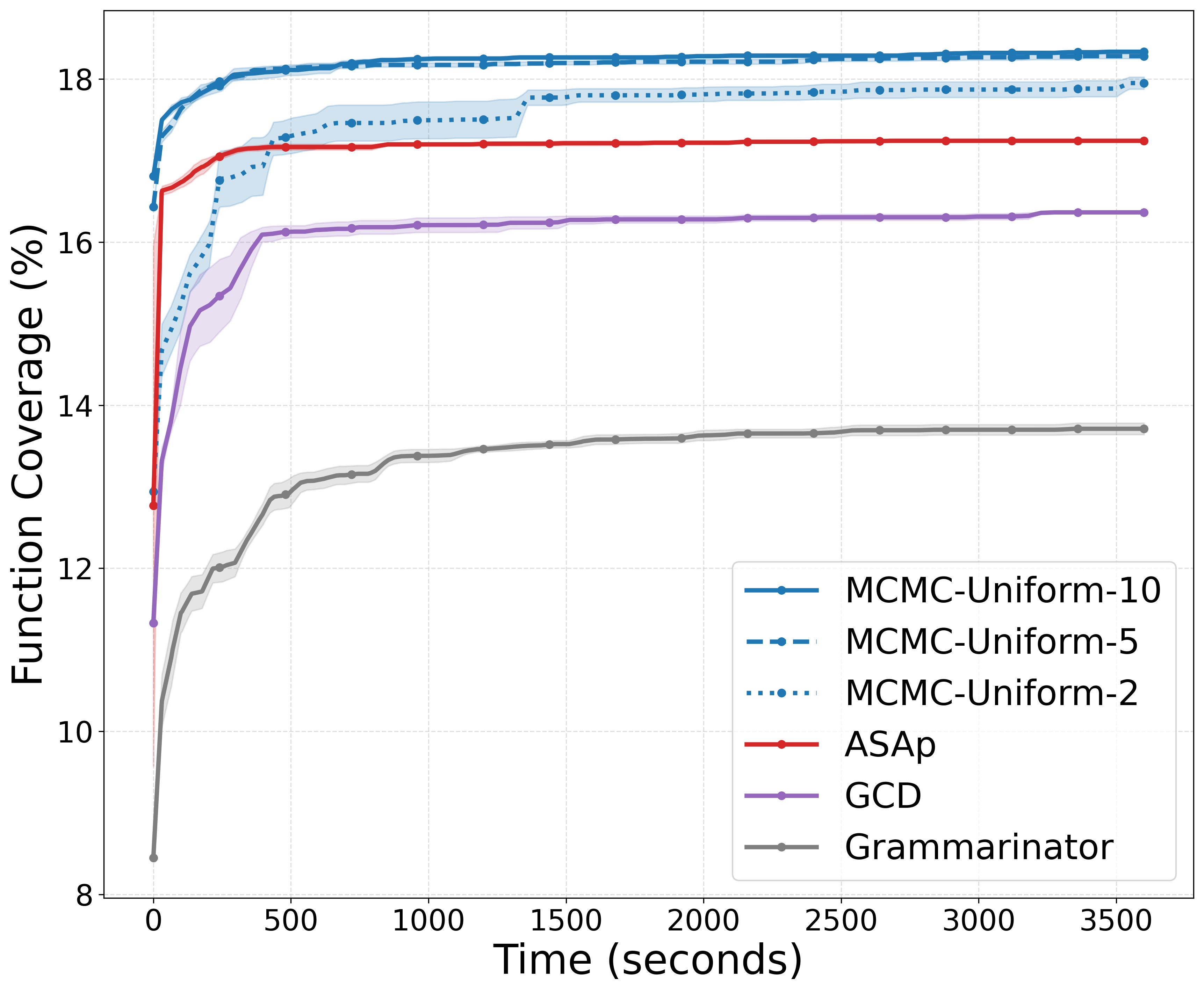}}
  \vspace{-0.4em}
  \caption{\textbf{XML}: function-coverage. Line style encodes the number of steps \(k\!\in\!\{2,5,10\}\) (dotted, dashed, solid).}
  \label{fig:function-cov-xml}
\end{figure*}

\begin{figure*}[t]
  \centering
  \subcaptionbox{Priority\label{fig:sql-prio-funccov}}
                [0.31\linewidth]{\includegraphics[width=\linewidth]{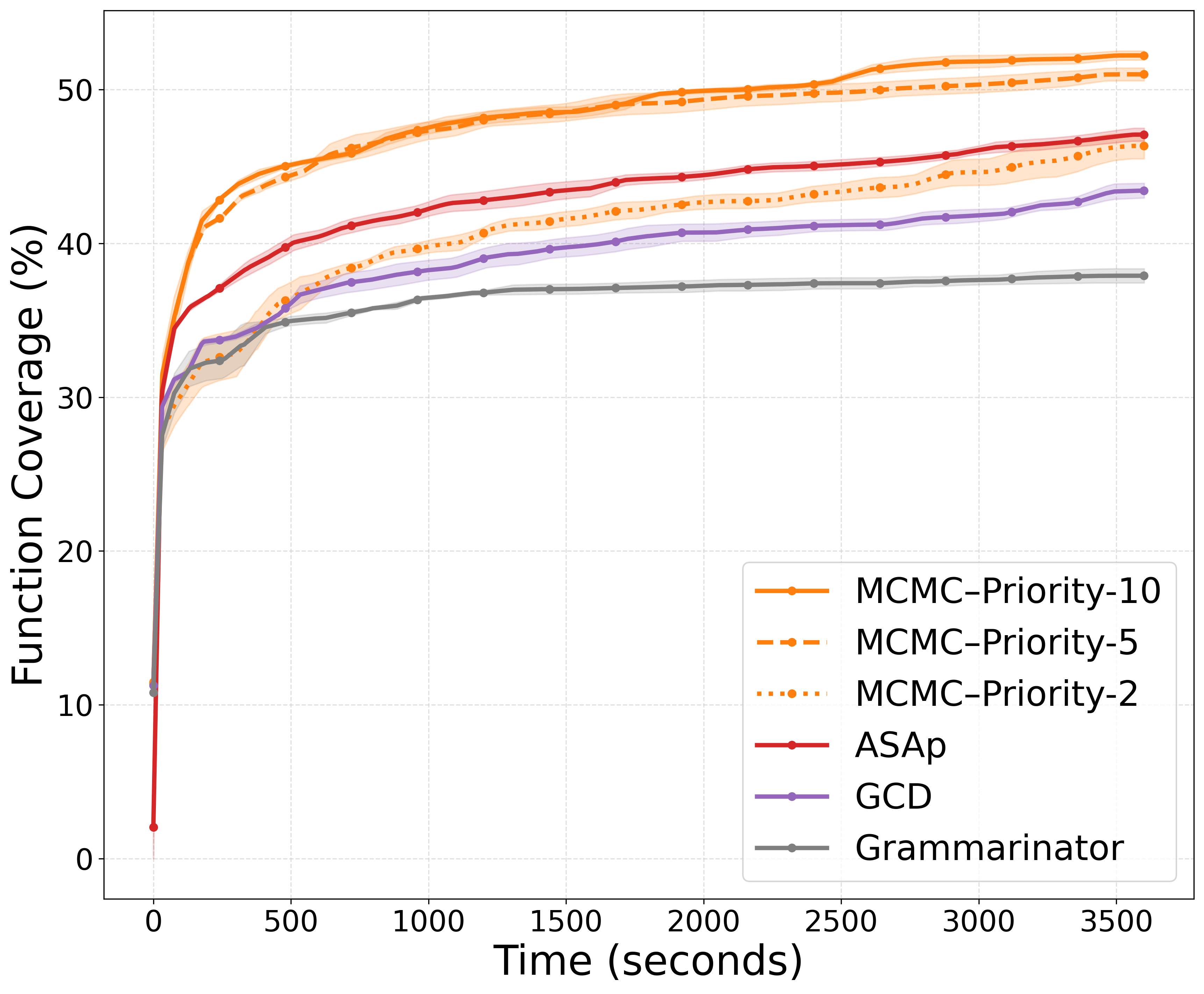}}
  \hfill
  \subcaptionbox{Restart\label{fig:sql-rest-funccov}}
                [0.31\linewidth]{\includegraphics[width=\linewidth]{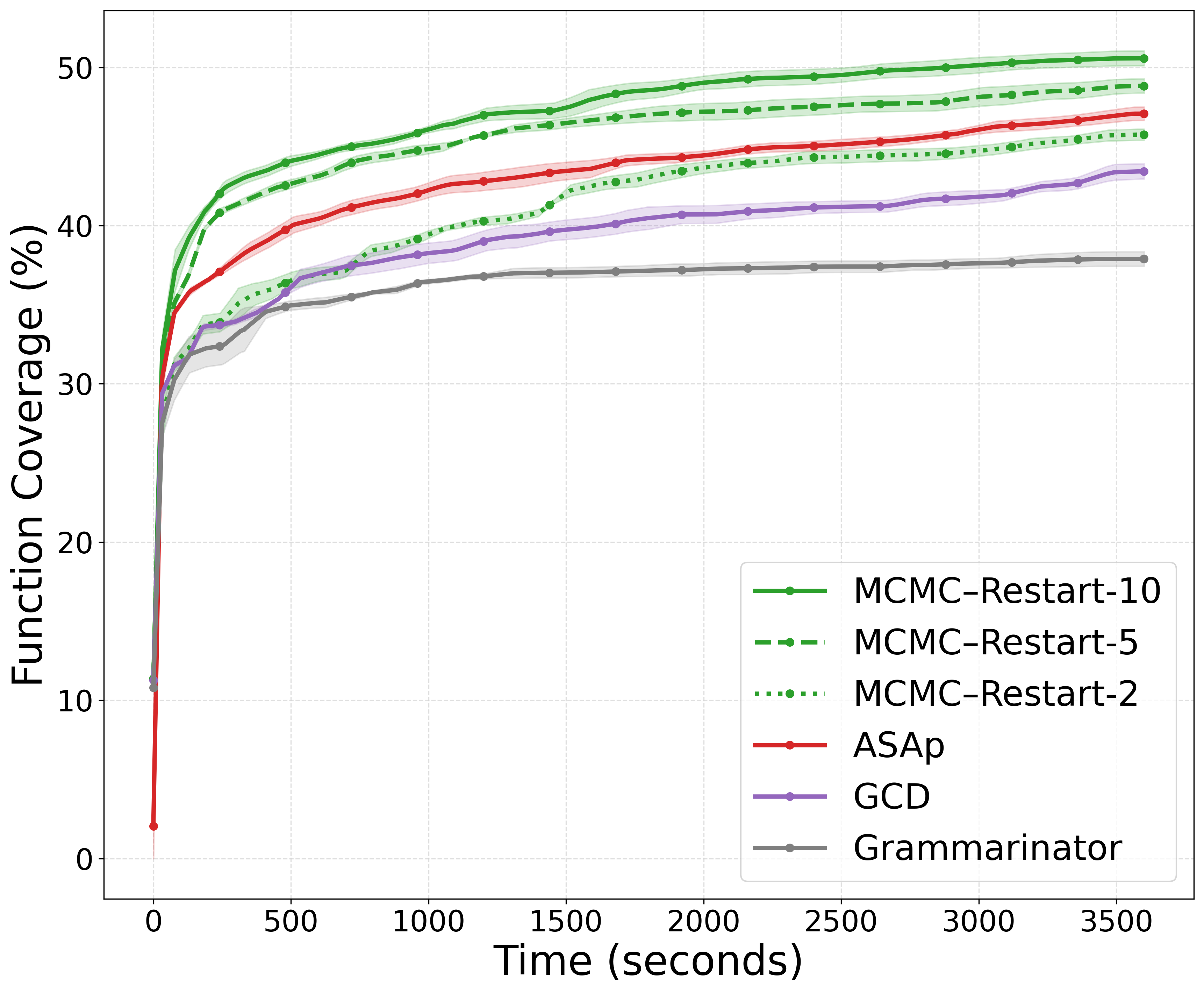}}
  \hfill
  \subcaptionbox{Uniform\label{fig:sql-pref-funccov}}
                [0.31\linewidth]{\includegraphics[width=\linewidth]{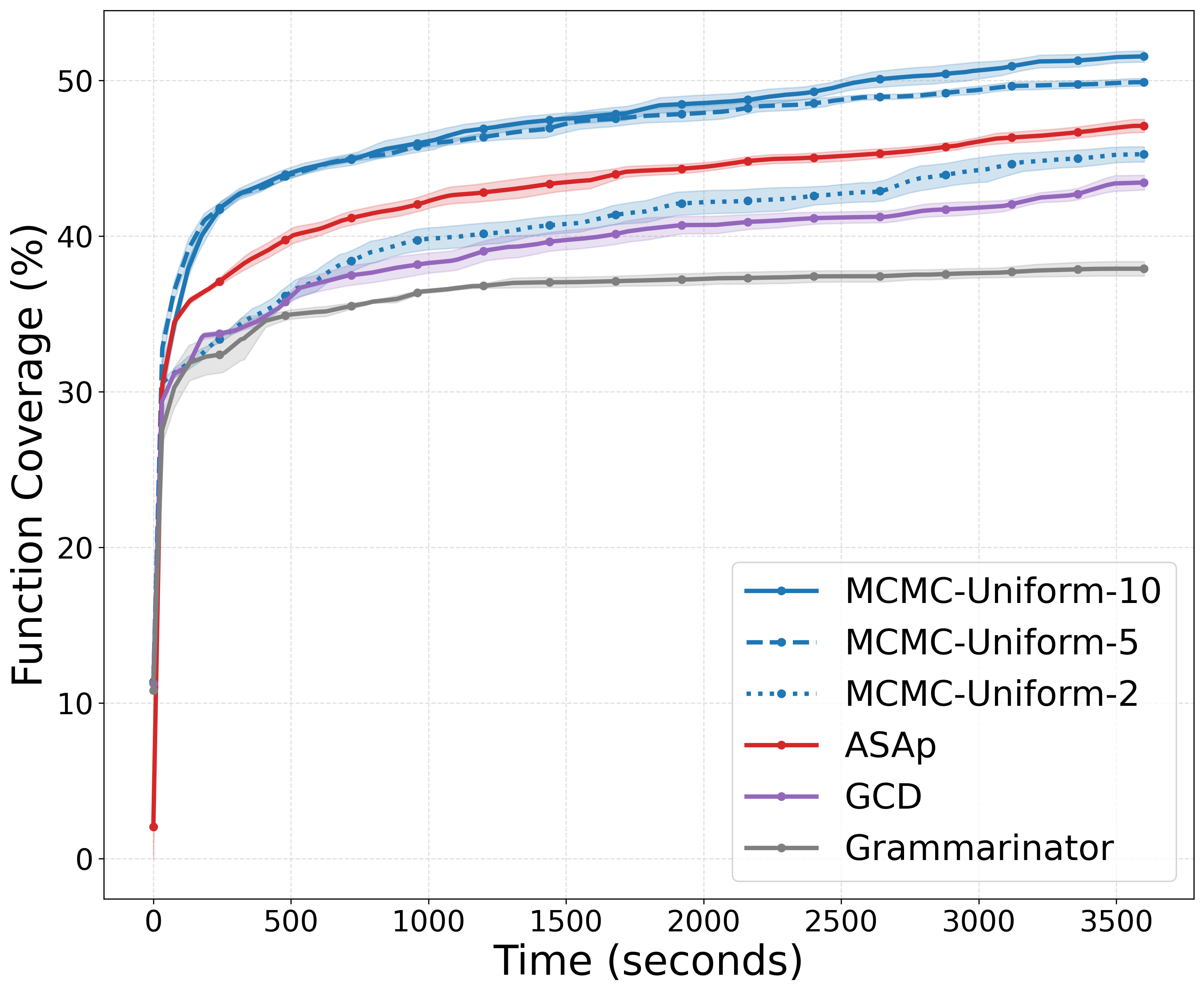}}
  \vspace{-0.4em}
  \caption{\textbf{SQL}: function-coverage.  Line style encodes the number of steps \(k\!\in\!\{2,5,10\}\) (dotted, dashed, solid).}
  \label{fig:function-cov-sql}
\end{figure*}

\begin{figure*}[t]
  \centering
  \subcaptionbox{Priority\label{fig:xml-prio-linecov}}
                [0.31\linewidth]{\includegraphics[width=\linewidth]{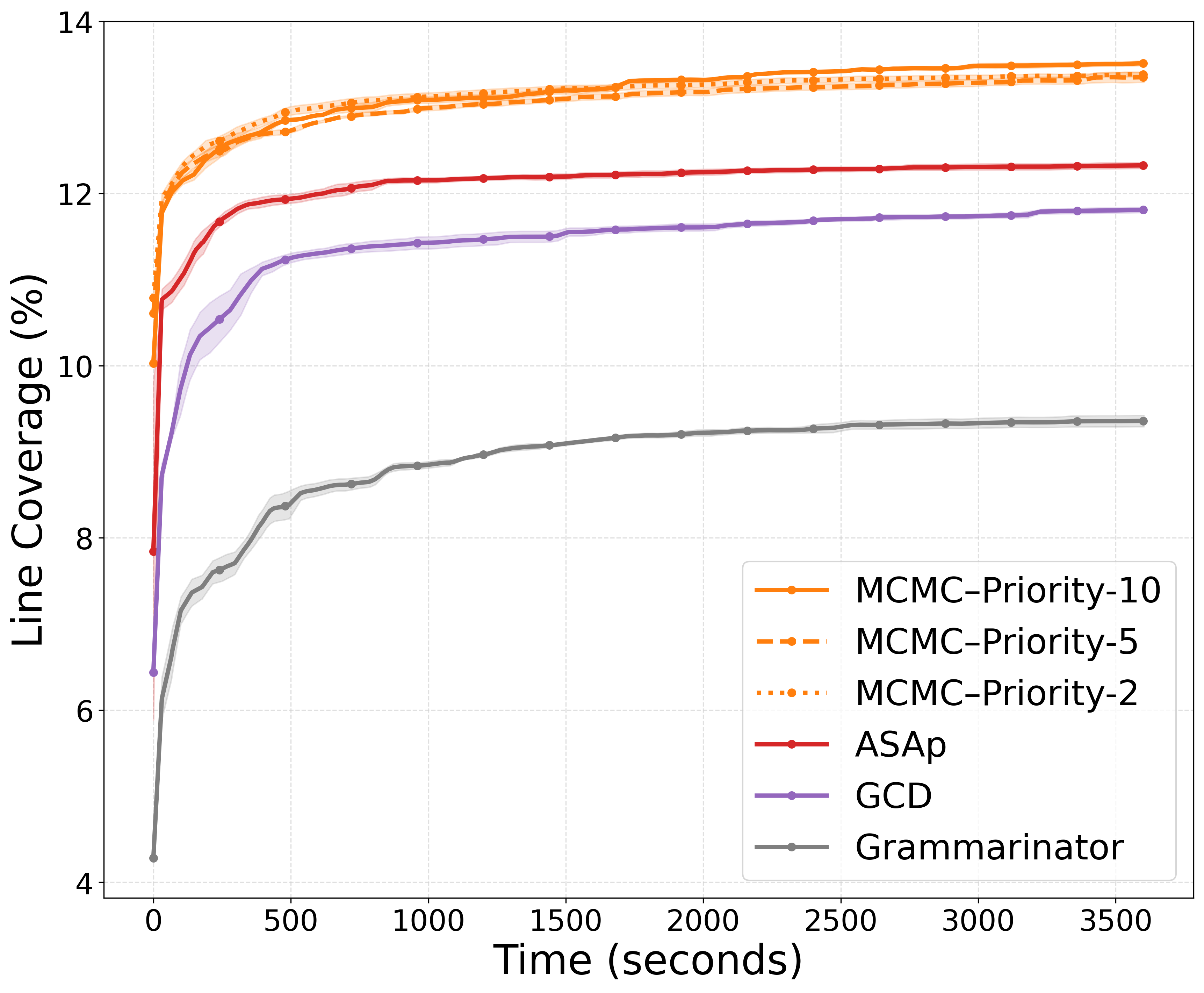}}
  \hfill
  \subcaptionbox{Restart\label{fig:xml-rest-linecov}}
                [0.31\linewidth]{\includegraphics[width=\linewidth]{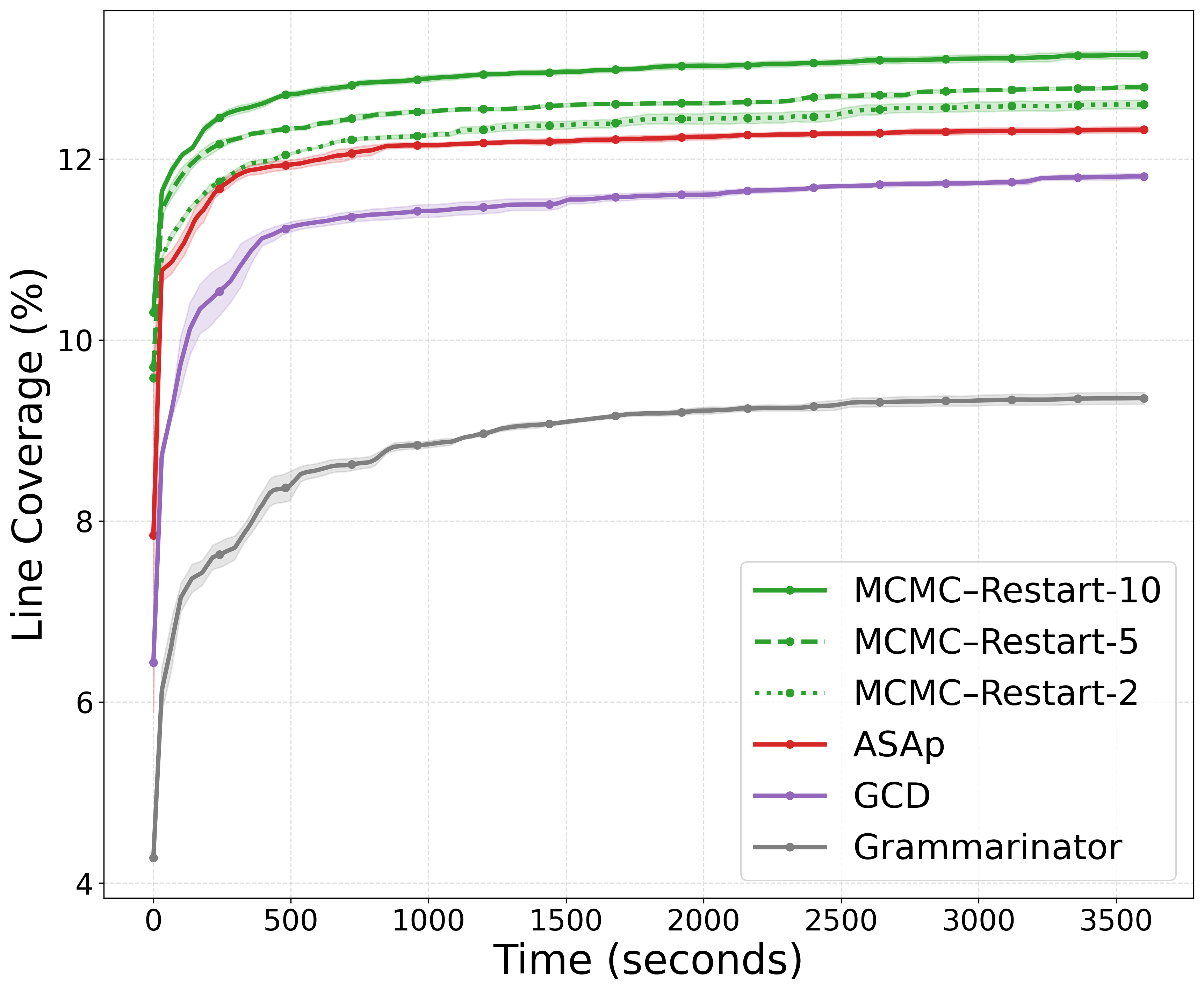}}
  \hfill
  \subcaptionbox{Uniform\label{fig:xml-pref-linecov}}
                [0.31\linewidth]{\includegraphics[width=\linewidth]{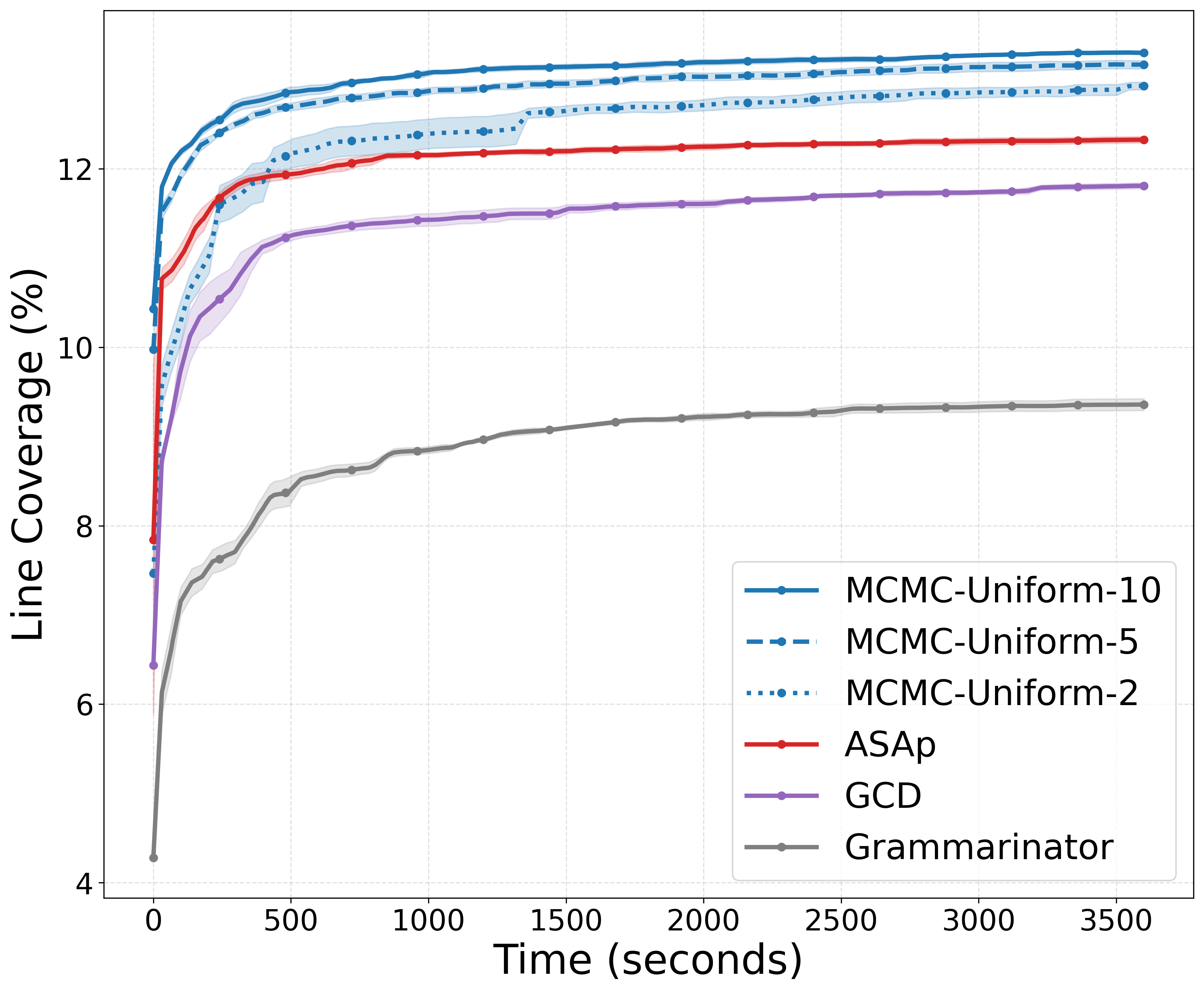}}
  \vspace{-0.4em}
  \caption{\textbf{XML}: line-coverage. Line style encodes the number of steps \(k\!\in\!\{2,5,10\}\) (dotted, dashed, solid).}
  \label{fig:line-cov-xml}
\end{figure*}

\begin{figure*}[t]
  \centering
  \subcaptionbox{Priority\label{fig:sql-prio-linecov}}
                [0.31\linewidth]{\includegraphics[width=\linewidth]{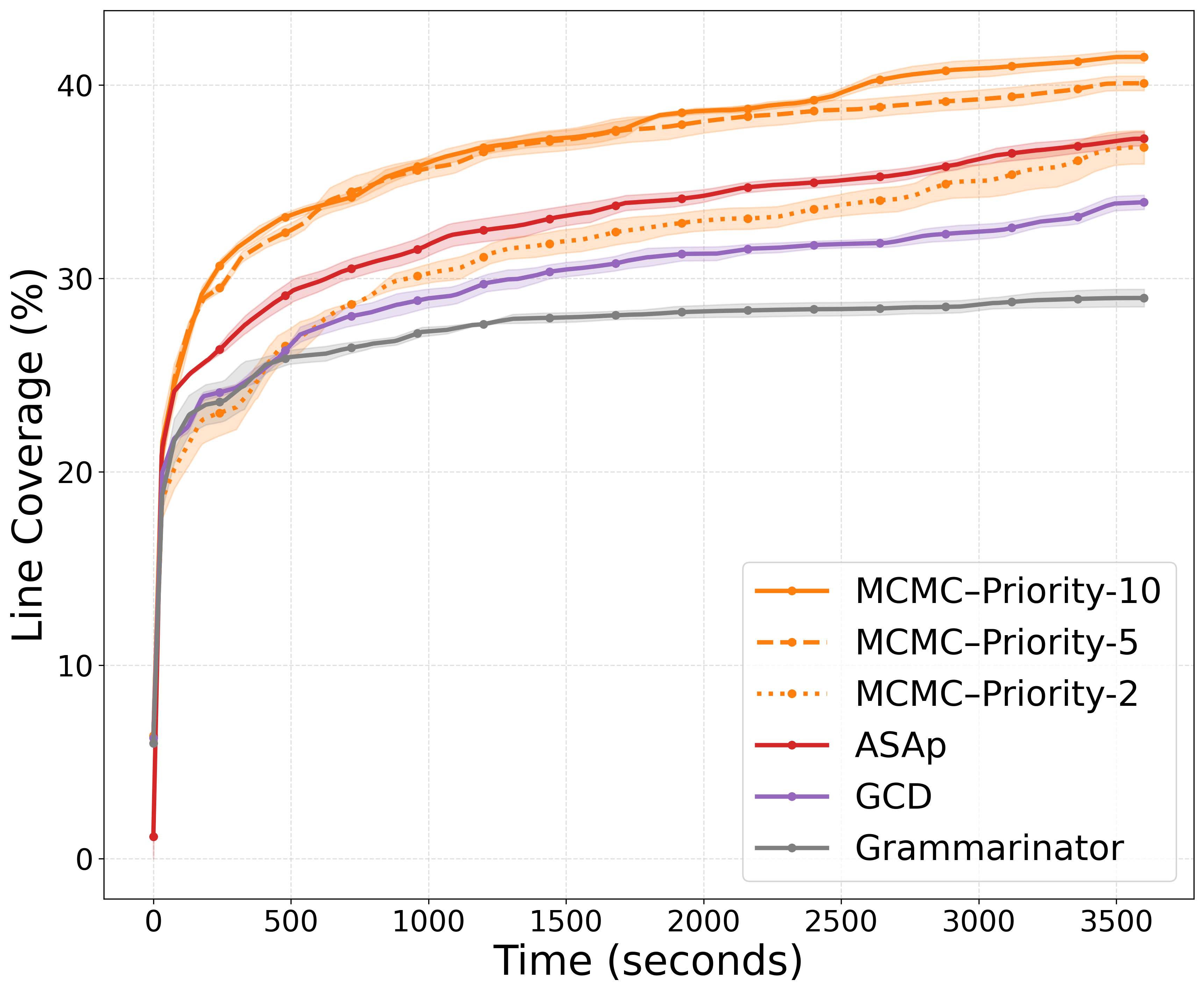}}
  \hfill
  \subcaptionbox{Restart\label{fig:sql-rest-linecov}}
                [0.31\linewidth]{\includegraphics[width=\linewidth]{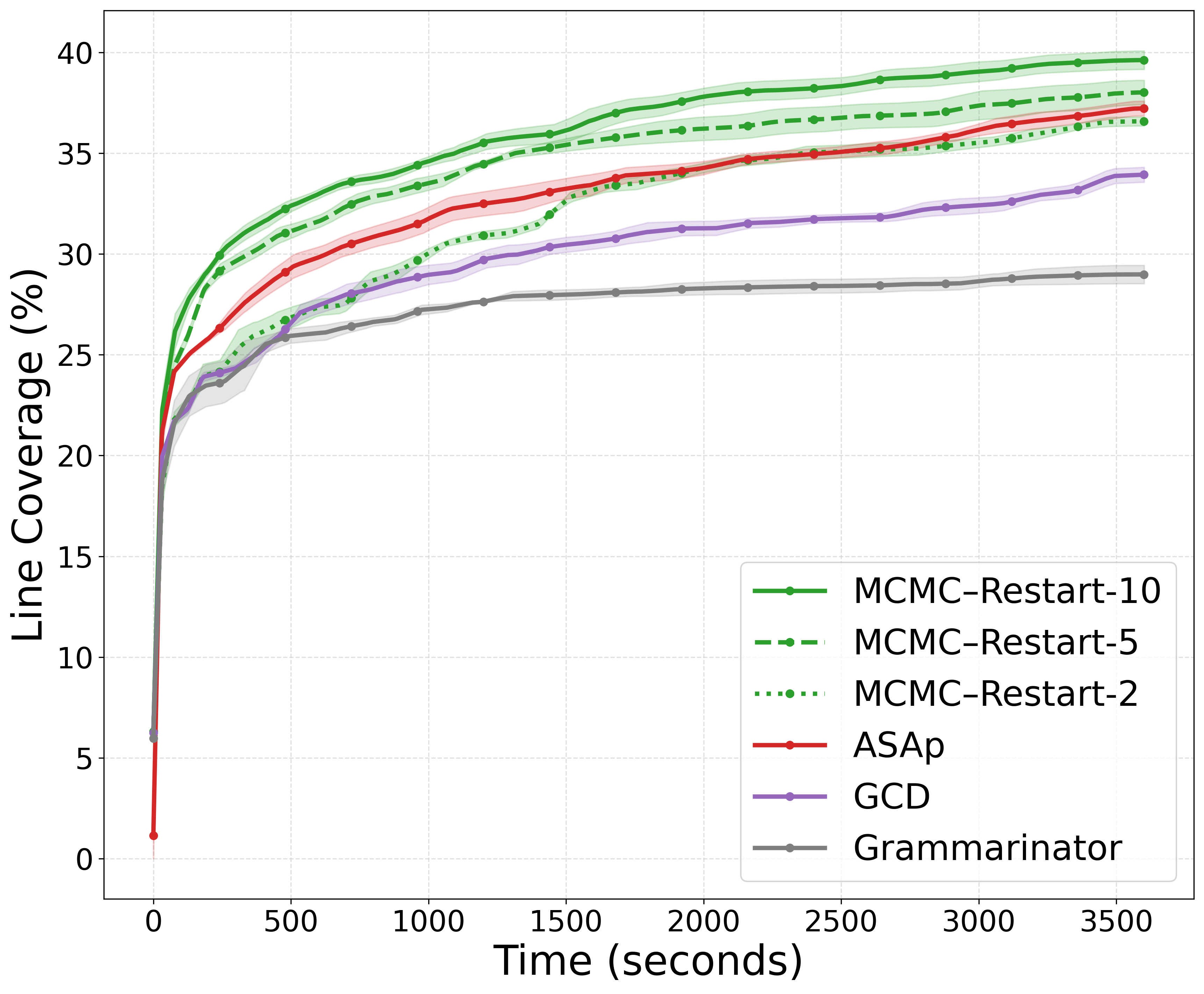}}
  \hfill
  \subcaptionbox{Uniform\label{fig:sql-pref-linecov}}
                [0.31\linewidth]{\includegraphics[width=\linewidth]{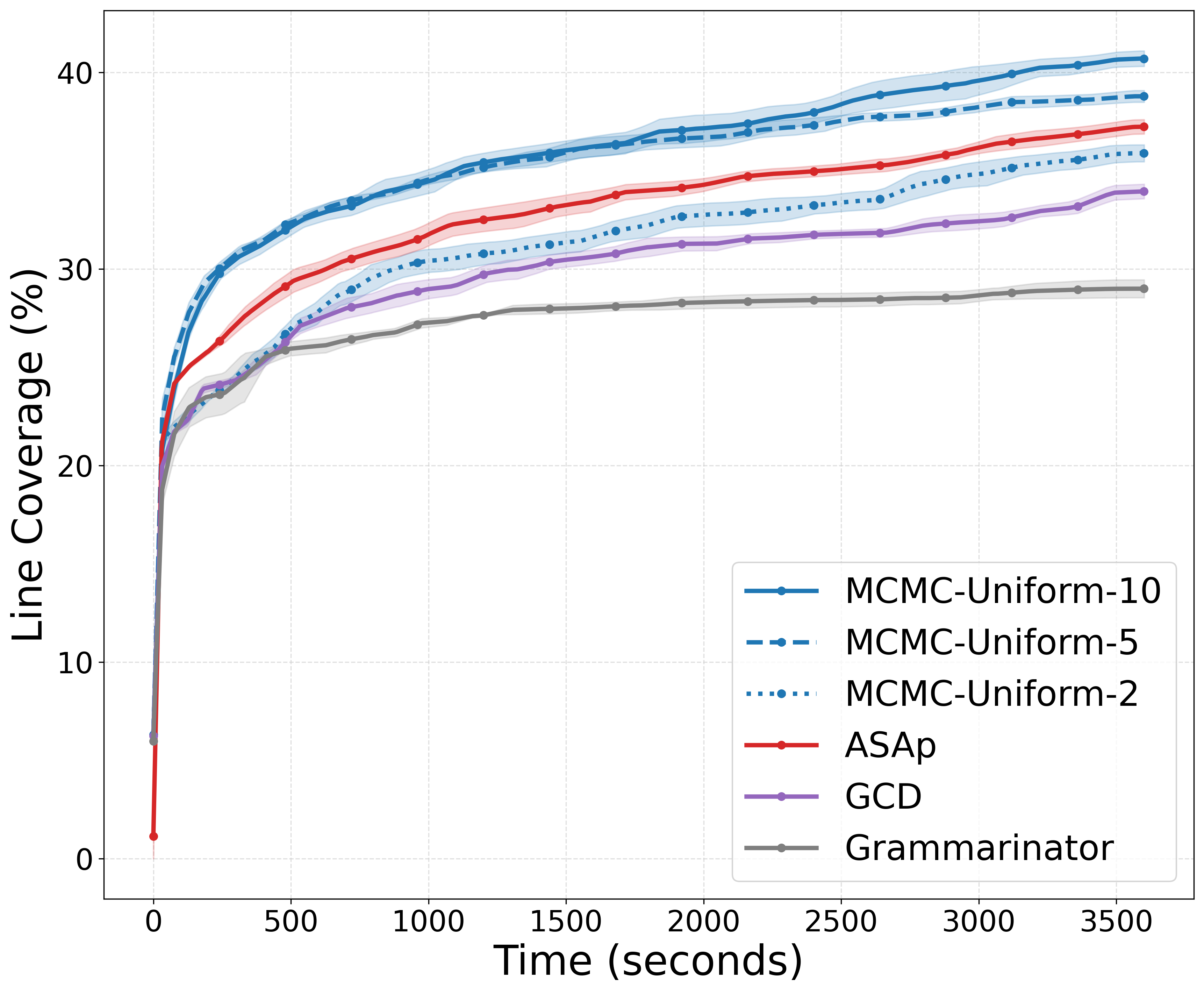}}
  \vspace{-0.4em}
  \caption{\textbf{SQL}: line-coverage.  Line style encodes the number of steps \(k\!\in\!\{2,5,10\}\) (dotted, dashed, solid).}
  \label{fig:line-cov-sql}
\end{figure*}

\subsection{Overall Coverage Results}
\label{app:overallcov}
Tables~\ref{tab:supp-metrics-sql} and~\ref{tab:supp-metrics-xml} show that the improvements achieved by our grammar-based MCMC sampler on branch coverage translate consistently to both function and line coverage across both the SQL and XML benchmarks.

\begin{table}[t]
  \centering
  \small
  \caption{SQL benchmark coverage (mean ± 95 \% CI) over five trials, and relative gain in branch coverage versus GCD, for all \(k\).  Best entries in green.}
  \label{tab:supp-metrics-sql}
  \begin{tabular}{
    l    
    c   
    S[table-format=2.2(2)]  
    S[table-format=2.2(2)]  
    S[table-format=2.2(2)]  
    S[table-format=2.1]     
  }
    \toprule
    Method         & \(\boldsymbol{k}\)
                   & {Branch (\%)} 
                   & {Function (\%)} 
                   & {Line (\%)} 
                   & {$\Delta$ vs GCD (\%)} \\
    \midrule
    \textbf{ASAP}           & 10  & 28.71(0.48) & 45.19(0.80) & 35.74(0.69) & 5.6 \\
    \textbf{GCD}            & —  & 28.26(0.68) & 43.87(0.96) & 34.28(0.73) &  0.0 \\
    \textbf{Grammarinator}  & —  & 25.04(0.91) & 39.89(0.96) & 30.52(0.94) & -11.4 \\
    \addlinespace
    \textbf{MCMC-Prefix} \\
                   & 2  & 29.47(0.75) & 45.25(0.96) & 35.88(0.84) &  4.3 \\
                   & 5  & 31.41(0.57) & 49.88(0.48) & 38.78(0.59) & 11.2 \\
                   & 10 & 32.93(0.83) & 51.53(0.71) & 40.69(0.77) & 16.5 \\
    \addlinespace
    \textbf{MCMC-Priority} \\
                   & 2  & 29.99(1.40) & 46.34(1.64) & 36.78(1.70) &  6.1 \\
                   & 5  & 32.66(0.32) & 50.99(0.82) & 40.09(0.74) & 15.6 \\
                   & 10 & \cellcolor{green!15}\bfseries33.70(0.44)
                                & \cellcolor{green!15}52.22(0.58)
                                & \cellcolor{green!15}41.45(0.62)
                                & 19.3 \\
    \addlinespace
    \textbf{MCMC-Restart} \\
                   & 2  & 29.45(0.32) & 45.75(0.70) & 36.59(0.45) &  4.2 \\
                   & 5  & 30.91(1.23) & 48.83(0.89) & 38.02(1.20) &  9.4 \\
                   & 10 & 32.36(0.74) & 50.57(0.92) & 39.62(0.90) & 14.5 \\
    \bottomrule
  \end{tabular}
\end{table}

\begin{table}[t]
  \centering
  \small
  \caption{XML coverage (mean ± 95 \% CI) over five trials, and relative gain in branch coverage versus GCD, for all \(k\in\{2,5,10\}\).  Best branch performer in green.}
  \label{tab:supp-metrics-xml}
  \begin{tabular}{
    l    
    c    
    S[table-format=2.2(2)]  
    S[table-format=2.2(2)]  
    S[table-format=2.2(2)]  
    S[table-format=2.1]     
  }
    \toprule
    Method         & \(\boldsymbol{k}\)
                   & {Branch (\%)} 
                   & {Function (\%)} 
                   & {Line (\%)} 
                   & {$\Delta$ vs GCD (\%)} \\
    \midrule
    \textbf{ASAP}           & 10 & 10.66(0.09) & 16.55(0.03) & 11.84(0.07) & 3.8 \\
    \textbf{GCD}            & — & 10.67(0.05) & 16.36(0.02) & 11.81(0.06) &  0.0 \\
    \textbf{Grammarinator}  & — &  8.49(0.08) & 13.71(0.14) &  9.36(0.13) & -20.4 \\
    \addlinespace
    \textbf{MCMC-Prefix} \\
                   & 2  & 11.77(0.18) & 17.95(0.14) & 12.93(0.12) & 10.3 \\
                   & 5  & 11.89(0.12) & 18.28(0.06) & 13.17(0.09) & 11.4 \\
                   & 10 & 12.16(0.05) & 18.33(0.03) & 13.30(0.04) & 14.0 \\
    \addlinespace
    \textbf{MCMC-Priority} \\
                   & 2  & 12.10(0.20) & 18.30(0.13) & 13.38(0.10) & 13.4 \\
                   & 5  & 12.62(0.19) & 19.11(0.17) & 13.88(0.12) & 18.2 \\
                   & 10 & \cellcolor{green!15}\bfseries12.81(0.05)
                                & \cellcolor{green!15}19.28(0.05)
                                & \cellcolor{green!15}14.05(0.06)
                                & 20.0 \\
    \addlinespace
    \textbf{MCMC-Restart} \\
                   & 2  & 11.37(0.16) & 17.55(0.12) & 12.61(0.11) &  6.6 \\
                   & 5  & 11.66(0.14) & 17.74(0.07) & 12.80(0.10) &  9.4 \\
                   & 10 & 12.00(0.06) & 18.32(0.09) & 13.15(0.09) & 12.5 \\
    \bottomrule
  \end{tabular}
\end{table}

\section{Properties and Proofs}
\label{appendix:proofs}

In this section, we formalize and prove the two key properties of our sampler (\autoref{alg:MH}), constraint satisfying (\autoref{appendix-theorem:completeness}) and monotonically converging (\autoref{appendix-theorem:converge}).

The first property follows directly from the procedure in \autoref{alg:MH}.
\begin{theorem}[Constraint Satisfying] \label{appendix-theorem:completeness} For any LM $P$, any grammar $G$, any chain length $k$, and any truncation distribution $p_\textsc{pos}$, the result of \autoref{alg:MH} is always inside $\mathcal L(G)$. 
\end{theorem}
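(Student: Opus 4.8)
The plan is to argue by a simple structural induction on the construction of the output sequence $\sent_k$ returned by \autoref{alg:MH}, relying on the key invariant that \emph{every sequence produced by a GCD sampler lies in $\lang(\grammar)$}. First I would establish this base fact explicitly: a grammar-constrained decoder, by its masking rule, only ever extends a prefix with a token whenever the resulting string remains in $\langpre(\grammar)$, and it only terminates generation when the string is a complete sentence of $\lang(\grammar)$; hence any string it returns, whether started from the empty prefix or from some prefix $\sent^p$, is in $\lang(\grammar)$. (If started from a prefix $\sent^p$, this additionally requires $\sent^p \in \langpre(\grammar)$, which is guaranteed here because $\sent^p$ is itself a prefix of a string already in $\lang(\grammar)$.)

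Next I would set up the induction over the loop index $i \in \{0,1,\dots,k\}$ with the hypothesis that $\sent_i \in \lang(\grammar)$. The base case $i=0$ is immediate: Line 1 sets $\sent_0$ to a GCD sample, which is in $\lang(\grammar)$ by the fact above. For the inductive step, assume $\sent_{i-1} \in \lang(\grammar)$. In iteration $i$, the algorithm computes $\sent \gets \Prop(\sent_{i-1})$; inside \Prop, it picks an index $j$ from $p_\textsc{pos}^{\sent_{i-1}}$, forms $\sent^p = (\sent_{i-1})_{1:j}$ which is a prefix of $\sent_{i-1} \in \lang(\grammar)$ and therefore in $\langpre(\grammar)$, and returns a GCD sample with prefix $\sent^p$, which is thus in $\lang(\grammar)$. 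Then Lines 4--6 set $\sent_i$ to be either this proposal $\sent$ or the previous state $\sent_{i-1}$; in either case $\sent_i \in \lang(\grammar)$ by the inductive hypothesis and the previous sentence. This closes the induction, and since the algorithm returns $\sent_k$ at Line 8, the result is in $\lang(\grammar)$.

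I do not expect any real obstacle here — the statement is essentially a bookkeeping consequence of two facts: GCD outputs are grammatical, and the accept/reject step only ever selects between two already-grammatical sequences. The one point that deserves a careful sentence is the claim that GCD conditioned on a \emph{nonempty} prefix still yields a grammatical full sequence; this needs the prefix to be extendable to a valid sentence (i.e.\ to lie in $\langpre(\grammar)$), which holds because in \autoref{alg:MH} the prefix is always obtained by truncating a sequence that is already in $\lang(\grammar)$. It is worth noting explicitly that this argument is completely independent of the acceptance probability $\alpha$, of the choice of truncation distribution $p_\textsc{pos}$, and of the value of $k$ — exactly matching the ``for any $\dots$'' quantifiers in the theorem statement.
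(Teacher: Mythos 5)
Your proof is correct and rests on the same observation as the paper's: every candidate returned by the algorithm is the output of a GCD call, and GCD outputs lie in $\lang(\grammar)$ by construction. Your version is just a more explicit rendering (the induction over loop iterations, plus the check that the truncated prefix $\sent^p$ lies in $\langpre(\grammar)$ because it is a prefix of a sequence already in $\lang(\grammar)$ --- a point the paper's two-line proof leaves implicit), so there is nothing to flag.
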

\begin{proof}
The output of \autoref{alg:MH} can only be generated from either Line 1 or Line 11, both of which call the GCD procedure.
Since GCD samples only from the constrained language \( \mathcal{L}(G) \), the result of \autoref{alg:MH} must also fall within \( \mathcal{L}(G) \).
\end{proof}

As for monotonically converging, we prove it by applying the following theorem for Markov chains.
\begin{theorem}[Thm. 5.6.6 in \cite{DBLP:books/cu/10/D2010}] \label{theorem:aux} 
Let \( p \) be a Markov chain with countable states, and let $\tvd{q}{q'}$ denotes the total variance distance of two distributions, i.e., $\frac{1}{2}\sum_x|q(x) - q'(x)|$.

When \( p \) is irreducible, aperiodic, and has stationary distribution \( \pi \), then for any state state $x$ \( \tvd{p^k(\cdot\mid x)}{\pi}\) will converge to $0$ as \( k\) approaches to $\infty$.
\end{theorem}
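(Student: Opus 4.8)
The plan is to prove this classical convergence theorem by a \emph{coupling} argument, which bounds the total variation distance by the probability that two coupled copies of the chain have not yet met. First I would construct a coupling on the product space $S \times S$: run one copy $(X_k)_{k \ge 0}$ started from the point mass at $x$, and a second copy $(Y_k)_{k \ge 0}$ started from the stationary distribution $\pi$, letting the two evolve independently until the first meeting time $\tau = \inf\{k : X_k = Y_k\}$, and after $\tau$ letting them move in lockstep under the same transitions. By stationarity $Y_k \sim \pi$ for every $k$, while $X_k \sim p^k(\cdot \mid x)$. The coupling inequality then gives, for every event $A$, that $|\mathbb{P}(X_k \in A) - \mathbb{P}(Y_k \in A)| \le \mathbb{P}(\tau > k)$, since $X_k = Y_k$ once $k \ge \tau$; taking the supremum over $A$ yields $\tvd{p^k(\cdot\mid x)}{\pi} \le \mathbb{P}(\tau > k)$. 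It therefore suffices to show $\mathbb{P}(\tau > k) \to 0$, equivalently that $\tau < \infty$ almost surely.

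Second, I would establish $\tau < \infty$ a.s. by analyzing the independent product chain $(X_k, Y_k)$ on $S \times S$, whose $k$-step transition probabilities factor as $p^k((c,d) \mid (a,b)) = p^k(c \mid a)\, p^k(d \mid b)$ before the chains meet. The two facts I need are: (i) the product chain is irreducible, and (ii) it admits $\pi \times \pi$ as a stationary probability distribution. Fact (ii) is immediate, since $\pi$ is stationary for each coordinate. For (i) I would invoke the standard consequence of aperiodicity: for an irreducible aperiodic chain, for every pair of states $i,j$ there is an $N_{ij}$ with $p^k(j \mid i) > 0$ for all $k \ge N_{ij}$. Consequently $p^k((c,d) \mid (a,b)) > 0$ for all sufficiently large $k$, so the product chain reaches any state from any state and is irreducible.

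Finally, I would combine irreducibility with the existence of the stationary probability measure $\pi \times \pi$: on a countable state space, an irreducible chain that admits a stationary probability distribution is positive recurrent, hence recurrent, so it almost surely hits the diagonal $\{(s,s) : s \in S\}$. Since $\tau$ is bounded by the hitting time of any single diagonal state, this gives $\tau < \infty$ a.s., and hence $\tvd{p^k(\cdot\mid x)}{\pi} \le \mathbb{P}(\tau > k) \to 0$, as claimed.

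The main obstacle is step (i): aperiodicity is precisely the hypothesis that makes the product chain irreducible. Without it, the two independent copies could be trapped in mismatched periodic classes and never meet, so the coupling would fail even though each marginal chain converges in Cesàro average. The crux is therefore the number-theoretic passage from ``the set of return times of an aperiodic state has $\gcd$ equal to $1$ and is closed under addition'' to ``$p^k(j \mid i) > 0$ for all large $k$''; once this eventual positivity is in hand, the remaining recurrence and coupling steps are routine.
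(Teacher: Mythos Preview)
Your coupling argument is a standard and correct proof of this classical convergence theorem. However, the paper does not actually prove this statement at all: it is quoted verbatim as Theorem~5.6.6 from an external reference and then \emph{applied} as a black box to establish the paper's own Theorem~3 (Monotonically Converging). So there is no ``paper's own proof'' to compare against; you have supplied a self-contained proof where the authors simply cite one.

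For what it is worth, the argument you sketch is exactly the Doeblin coupling proof one finds in standard texts, and every step is sound: the coupling inequality reduces the problem to showing the meeting time is a.s.\ finite, aperiodicity gives eventual positivity of $p^k(j\mid i)$ and hence irreducibility of the independent product chain, and irreducibility plus the stationary probability $\pi\times\pi$ forces positive recurrence and thus a.s.\ hitting of the diagonal. Your identification of aperiodicity as the crux is also right---without it the product chain can fail to be irreducible and the coupling breaks down.
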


\begin{theorem}[Monotonically Converging] \label{appendix-theorem:converge} For any LM $P$, any grammar $G$, and any truncation distribution $p_\textsc{pos}$, if $p_{\textsc{pos}}^{\sent}(0) > 0$ for all sequences $\sent \in \mathcal L(G)$, then the output distribution of $\autoref{alg:MH}$ will monotonically converge to $\probgrammarpg{\prob}{\grammar}$ as the chain length $k$ approaches to infinite, as shown below.
\begin{gather}
\lim_{k \rightarrow \infty} \tvd{\outprob{k}}{\probgrammarpg{\prob}{\grammar}} = 0 \label{formula:converge}\\ 
\forall k, \tvd{\outprob{k}}{\probgrammarpg{\prob}{\grammar}} \geq \tvd{\outprob{k+1}}{\probgrammarpg{\prob}{\grammar}} \label{formula:mono}
\end{gather}
where $\outprob{k}$ denotes the output distribution of \autoref{alg:MH} when the chain length is $k$.
\end{theorem}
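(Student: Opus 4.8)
The plan is to verify the hypotheses of \autoref{theorem:aux} for the Markov chain induced by \autoref{alg:MH}, and then to separately upgrade mere convergence to \emph{monotone} convergence by exploiting the fact that the chain is initialized from a fixed GCD draw.

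\textbf{Setup and stationarity.} First I would identify the state space as $\mathcal{L}(\grammar)$ (which is countable, being a subset of $\vocab^*$), the transition kernel $p$ as one Metropolis--Hastings step (Lines 3--6): propose $\sent' \sim q(\cdot \mid \sent)$ where $q(\sent' \mid \sent) = \sum_{i} p_{\textsc{pos}}^{\sent}(i)\, \probgcd(\sent' \mid \sent_{1:i})$ is the mixture over truncation points of a GCD completion, then accept with probability $\alpha(\sent,\sent')$ from \eqref{eq:acc_prob} with target $\pi = \probgrammarpg{\prob}{\grammar}$. The standard Metropolis--Hastings detailed-balance computation shows $\pi(\sent)\,p(\sent \mid \sent') = \pi(\sent')\,p(\sent' \mid \sent)$ for $\sent \neq \sent'$, hence $\pi = \probgrammarpg{\prob}{\grammar}$ is stationary; I would note (as the excerpt already does) that the intractable normalizer cancels in $\alpha$, so this is well-defined. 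The output $\outprob{k}$ of \autoref{alg:MH} is exactly the law of $\sent_k$, i.e.\ $\outprob{k} = \sum_{x} \mu_0(x)\, p^k(\cdot \mid x)$ where $\mu_0$ is the GCD initialization distribution over $\mathcal{L}(\grammar)$.

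\textbf{Irreducibility and aperiodicity.} This is where the hypothesis $p_{\textsc{pos}}^{\sent}(0) > 0$ enters. Because truncating at position $0$ yields the empty prefix, $q(\sent' \mid \sent) \geq p_{\textsc{pos}}^{\sent}(0)\cdot \probgcd(\sent') > 0$ for every $\sent' \in \mathcal{L}(\grammar)$ with $\probgcd(\sent') > 0$ — and every $\sent' \in \mathcal{L}(\grammar)$ that can be produced by GCD at all has $\probgcd(\sent') > 0$, which covers the reachable part of the state space (one can restrict attention to $\{\sent : \probgcd(\sent) > 0\}$, since $\mu_0$ is supported there and the chain never leaves it by the same argument). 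Thus from any state the chain reaches any other reachable state in one step with positive probability: the chain is irreducible on its (countable) support. For aperiodicity, the same one-step argument shows $p(\sent \mid \sent) > 0$ for at least one state (indeed, whenever the acceptance probability is $<1$ for some proposal, or trivially since a state can propose itself), giving a self-loop and hence period $1$; irreducibility then forces aperiodicity everywhere. Applying \autoref{theorem:aux} gives $\tvd{p^k(\cdot \mid x)}{\pi} \to 0$ for each fixed $x$, and averaging over $x \sim \mu_0$ (using $\tvd{\sum_x \mu_0(x) p^k(\cdot\mid x)}{\pi} \le \sum_x \mu_0(x)\tvd{p^k(\cdot\mid x)}{\pi}$ and dominated convergence) yields \eqref{formula:converge}.

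\textbf{Monotonicity.} For \eqref{formula:mono} I would invoke the general fact that applying a Markov kernel that fixes $\pi$ is a weak contraction in total variation toward $\pi$: for any distribution $\mu$, $\tvd{\mu p}{\pi} = \tvd{\mu p}{\pi p} \le \tvd{\mu}{\pi}$, since $\pi p = \pi$ and every stochastic kernel is $1$-Lipschitz in TV (a consequence of the coupling/maximal-coupling characterization of TV, or directly by a triangle-inequality argument on $\frac12\sum|\mu p - \pi p|$). Taking $\mu = \outprob{k}$ and noting $\outprob{k+1} = \outprob{k}\, p$ gives $\tvd{\outprob{k+1}}{\probgrammarpg{\prob}{\grammar}} \le \tvd{\outprob{k}}{\probgrammarpg{\prob}{\grammar}}$, which is \eqref{formula:mono}. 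The main obstacle, such as it is, is being careful about the support: GCD may assign zero probability to some sequences in $\mathcal{L}(\grammar)$ (e.g.\ those only reachable through prefixes GCD prunes), so irreducibility must be stated and the TV limit interpreted on the sub-state-space actually charged by $\mu_0$ and reachable under $q$; fortunately $\probgrammarpg{\prob}{\grammar}$ is itself supported there whenever $\prob$ and GCD agree on which sequences have positive mass, so the limit distribution and the target coincide. The detailed-balance verification and the contraction lemma are both routine once this bookkeeping is pinned down.
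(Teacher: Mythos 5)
Your proposal is correct and follows essentially the same route as the paper's proof: verify countability, irreducibility (via the positive-probability empty-prefix truncation composed with a full GCD draw), aperiodicity (self-loops), and stationarity to invoke the ergodic theorem for \eqref{formula:converge}, and then use the fact that a kernel fixing $\probgrammarpg{\prob}{\grammar}$ is a weak TV contraction for \eqref{formula:mono}, which the paper proves by the same triangle-inequality computation you cite as a standard lemma. Your additional bookkeeping---averaging the single-start convergence over the GCD initialization $\mu_0$ and restricting to the support actually charged by GCD---is a point the paper's proof glosses over, and it is handled correctly here.
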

\begin{proof}
We prove the \textbf{convergence} (\autoref{formula:converge}) by applying \autoref{theorem:aux} to our case, by verifying that the Markov chain constructed in \autoref{alg:MH} satisfies all prerequisites of \autoref{theorem:aux}.
\begin{enumerate}
\item (Countable states) In our Markov chain, the state set comprises all sequences with non-zero probability in \( \probgrammarpg{\prob}{\grammar} \), denoted as \( S \).
This set is countable since the set of all sequences is countable.

\item (Irreducibility) Let \( q \) be the proposal distribution of our Markov chain.
We start by showing that \( q(y \mid x) > 0 \) for all \( x, y \in S \).
Consider the event where the empty prefix is selected in Line 10, and \( y \) is selected by GCD in Line 11.
The probability of this event is \( p_\textsc{pos}^x(0) \cdot P_{\text{GCD}}(y) \), which is non-zero. On the other hand, \( q(y \mid x) \) is no smaller than this probability, hence it must also be non-zero.

Then, by the definition of the transition probability \( p \) in the Metropolis-Hastings algorithm\footnote{The greater-than part of the first inequality captures the special case where \( x = y \):}
\[
\forall x, y \in S, \quad p(y \mid x) \geq q(y \mid x) \cdot \alpha(x, y) = q(y \mid x) \cdot \max \left\{1, \frac{P(y)q(x \mid y)}{P(x) q(y \mid x)}\right\}.
\]
All values on the right-hand side are positive, so \( p(y \mid x) \) must also be positive, implying the irreducibility of the Markov chain.

\item (Aperiodicity) By the above analysis, \( p(x \mid x) > 0 \) for any state \( x \), implying aperiodicity.

\item (Stationary distribution) The Metropolis-Hastings algorithm ensures that the target distribution \( \probgrammarpg{\prob}{\grammar} \) is a stationary distribution of the constructed Markov chain.
\end{enumerate}

Hence, all prerequisites of \autoref{theorem:aux} are satisfied, then \autoref{formula:converge} follows directly from it.

Then, we prove the \textbf{monotocity} by the following derivation, where $p(\sent \mid \sent')$ denotes the probability for our Markov chain to move from $\sent'$ to $\sent$.
\begin{align*}
\tvd{\outprob{k+1}}{\probgrammarpg{\prob}{\grammar}} &= \frac{1}{2} \sum_{\sent} \left| \outprob{k+1}(\sent) - \probgrammarpg{\prob}{\grammar}(\sent)\right| \\ 
& = \frac{1}{2} \sum_{\sent}\left| \sum_{\sent'} \big( \outprob{k}(\sent') - \probgrammarpg{\prob}{\grammar}(\sent') \big) \cdot p(\sent\mid \sent') \right| \\ 
& \leq \frac{1}{2} \sum_{\sent} \sum_{\sent'} \big| \outprob{k}(\sent') - \probgrammarpg{\prob}{\grammar}(\sent') \big| \cdot p(\sent\mid \sent') \\
& = \frac{1}{2} \sum_{\sent'} \big| \outprob{k}(\sent') - \probgrammarpg{\prob}{\grammar}(\sent') \big| \left(\sum_{\sent} p(\sent \mid \sent')\right) \\ 
& = \frac{1}{2} \sum_{\sent'} \big| \outprob{k}(\sent') - \probgrammarpg{\prob}{\grammar}(\sent') \big| = \tvd{\outprob{k}}{\probgrammarpg{\prob}{\grammar}}
\end{align*}
\end{proof}

\section{Benchmarks by~\citet{park2024grammaraligned}}
\label{app:sygus_eval}

We evaluate the convergence properties of Grammar-Aligned MCMC Sampling empirically on the benchmark tasks proposed by \citet{park2024grammaraligned}.
\autoref{fig:gcd_scatters} relates the $KL(\probgrammar \Vert GCD)$ and $KL(\probgrammar \Vert \operatorname{MCMC-T}(k))$ for $k=10$, 
for $T\in$\{Uniform, Priority, Restart\}.
Each point represents a single task.
Points below the diagonal indicate tasks where MCMC approximates $\probgrammar$ better than GCD.
\autoref{fig:asap_scatters} displays the same information, but for ASAp$(10)$ instead of GCD.

\autoref{fig:slia_kl}, \autoref{fig:bv4_kl}, \autoref{fig:cp_kl} compare the distance to $\probgrammar$ for ASAp$(k)$ and MCMC-T$(k)$, as $k$ increases, across all the benchmark tasks.
Lower KL-Divergence indicates a better approximation to $\probgrammar$.

GCD, MCMC$(k)$ and ASAp$(k)$ are all approximated using 100 samples, and $\probgrammar$ is approximated using all the samples acquired during the runs of MCMC and ASAp.
A $95\%$ confidence band is shown for convergence plots, computed via Bootstrapping.

\begin{figure}[H]
    \centering
    \begin{tabular}{ccc}
        \includegraphics[width=0.3\textwidth]{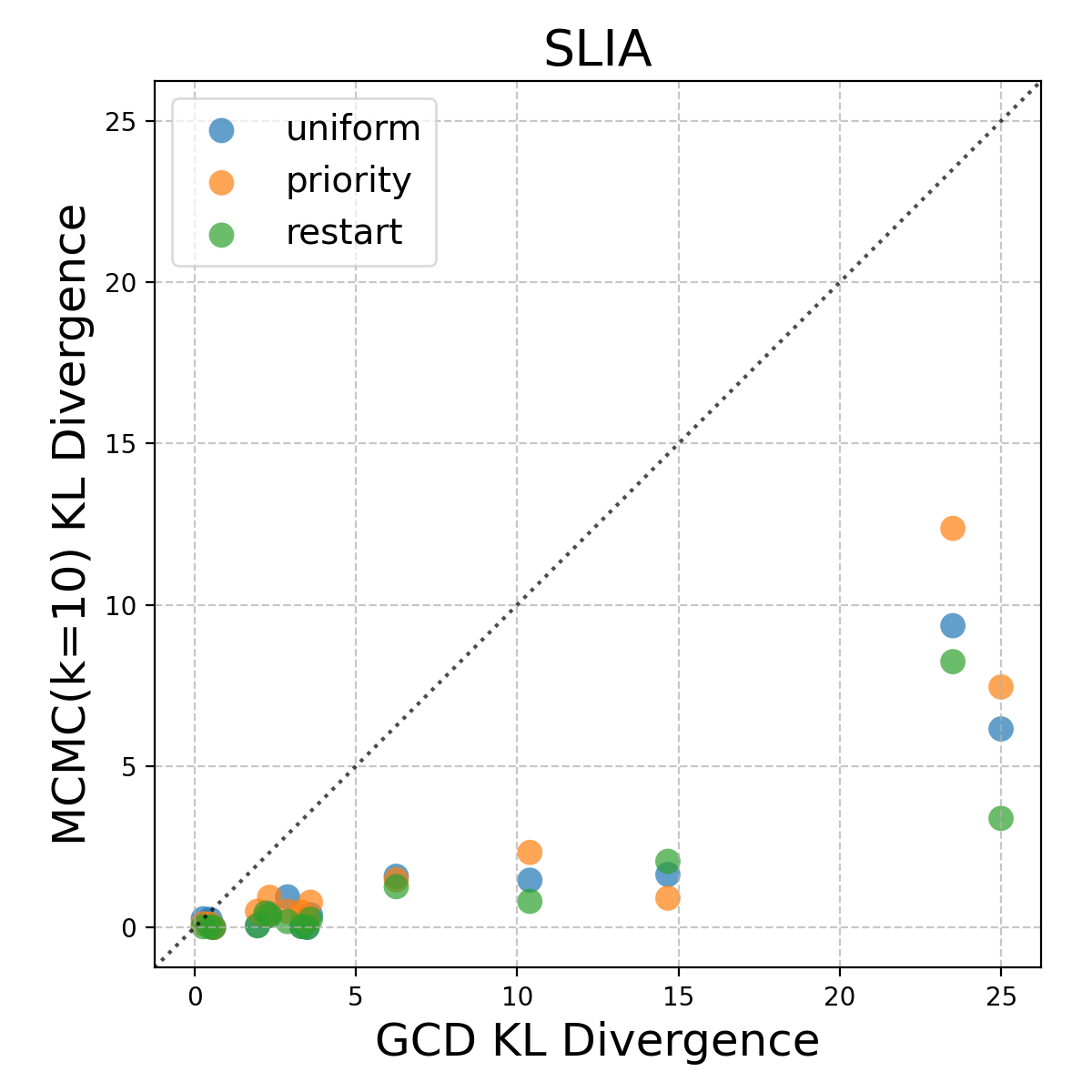} &
        \includegraphics[width=0.3\textwidth]{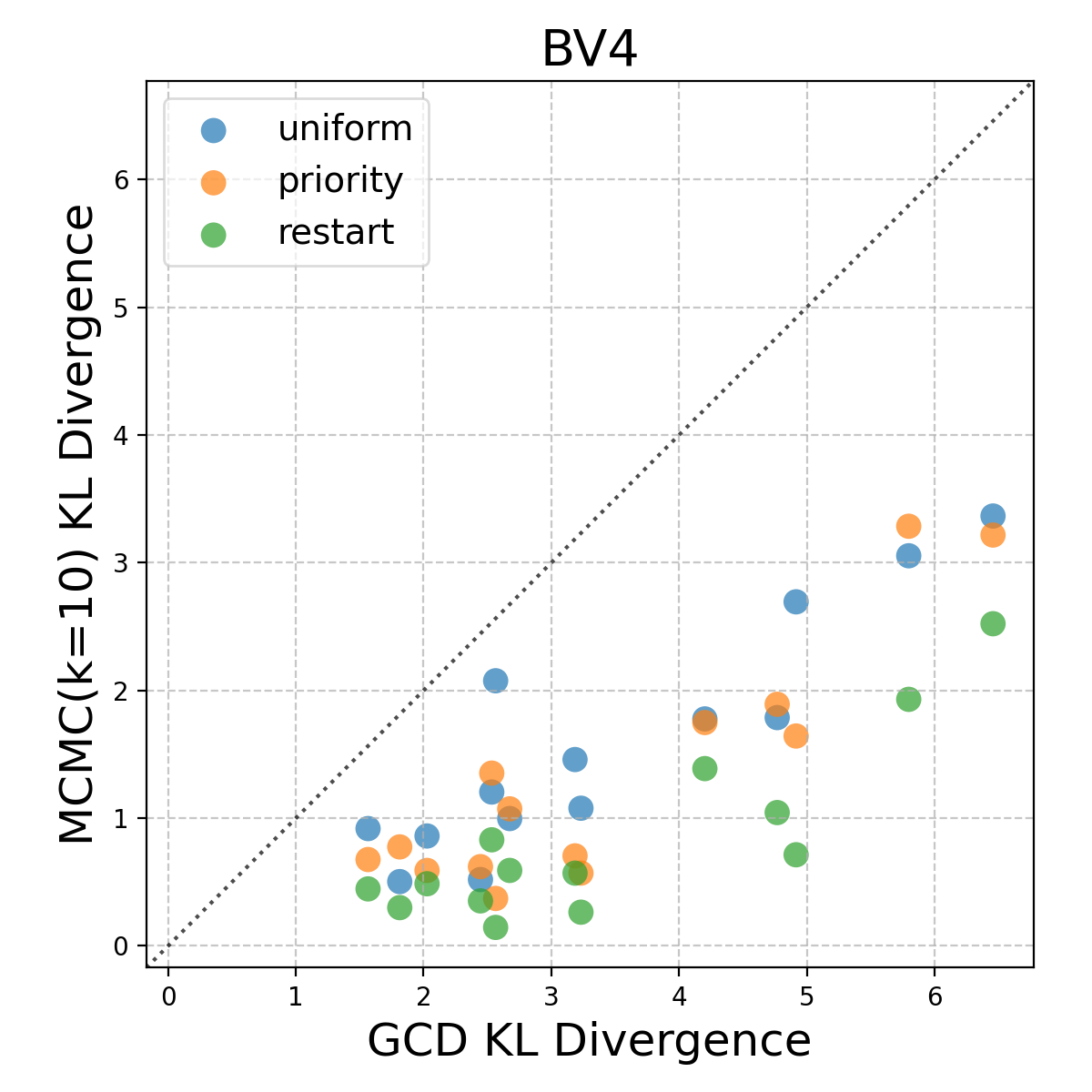} &
        \includegraphics[width=0.3\textwidth]{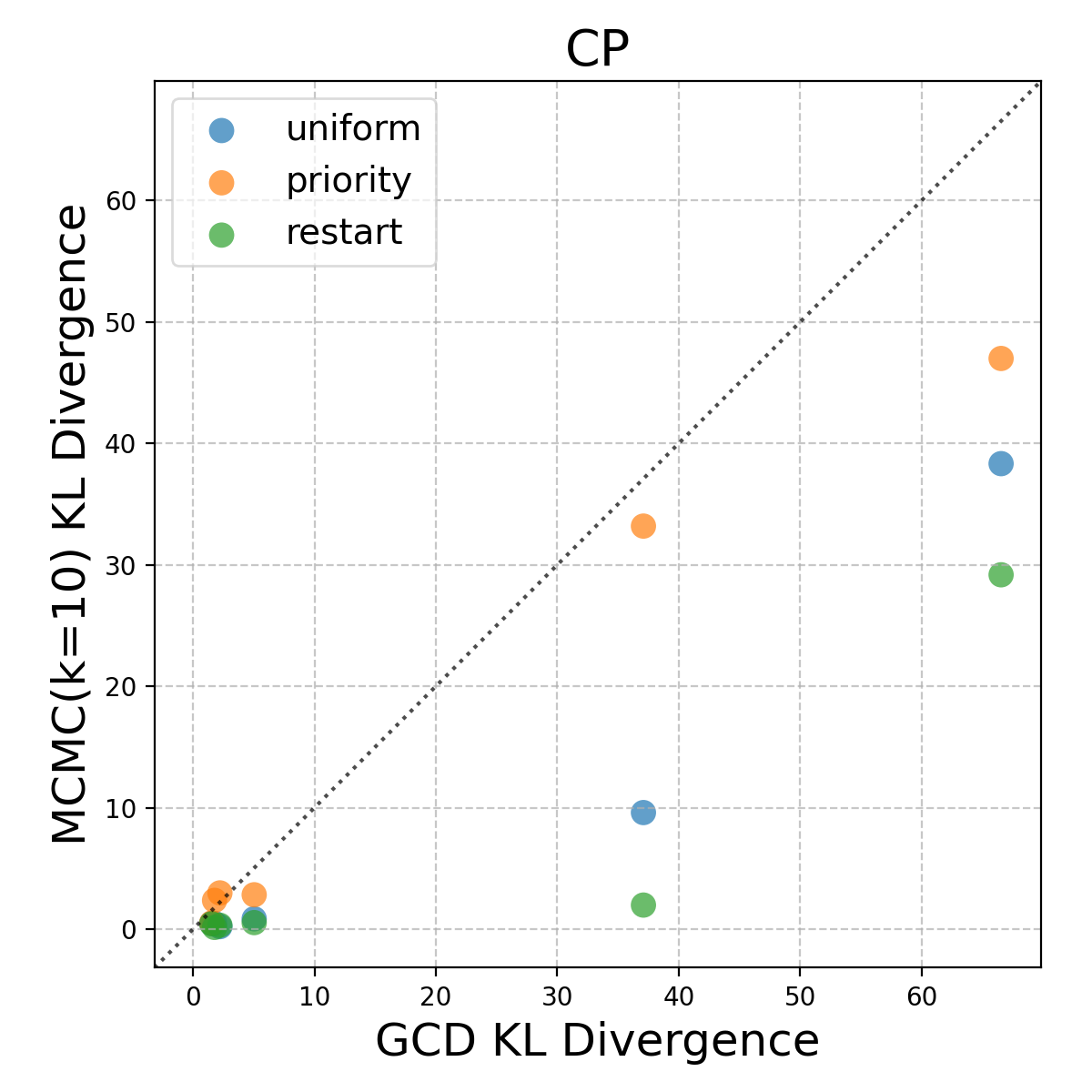} \\
    \end{tabular}
    \caption{KL-Divergence for GCD vs MCMC($k=10$) by subset}
    \label{fig:gcd_scatters}
\end{figure}

\begin{figure}[H]
    \centering
    \begin{tabular}{ccc}
        \includegraphics[width=0.3\textwidth]{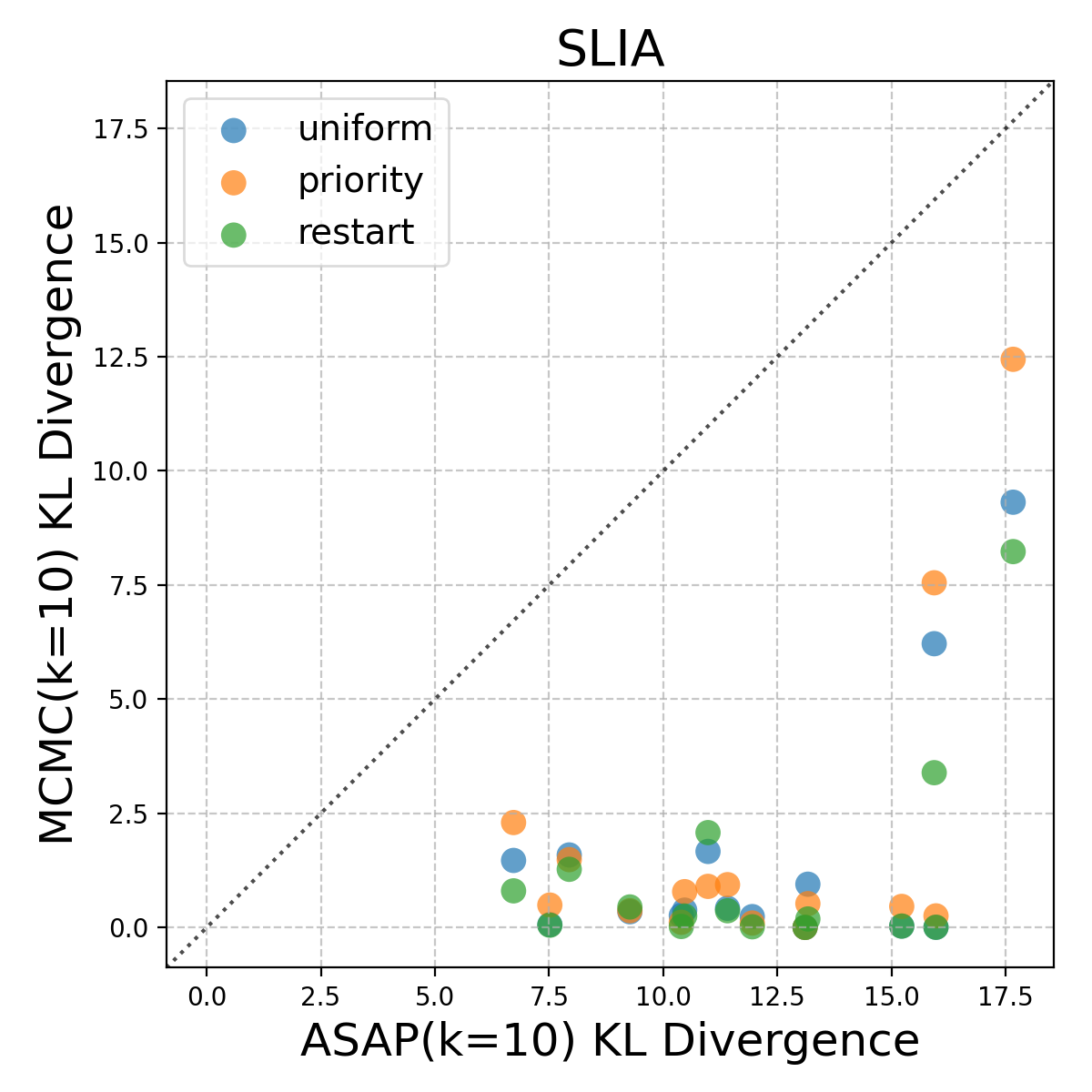} &
        \includegraphics[width=0.3\textwidth]{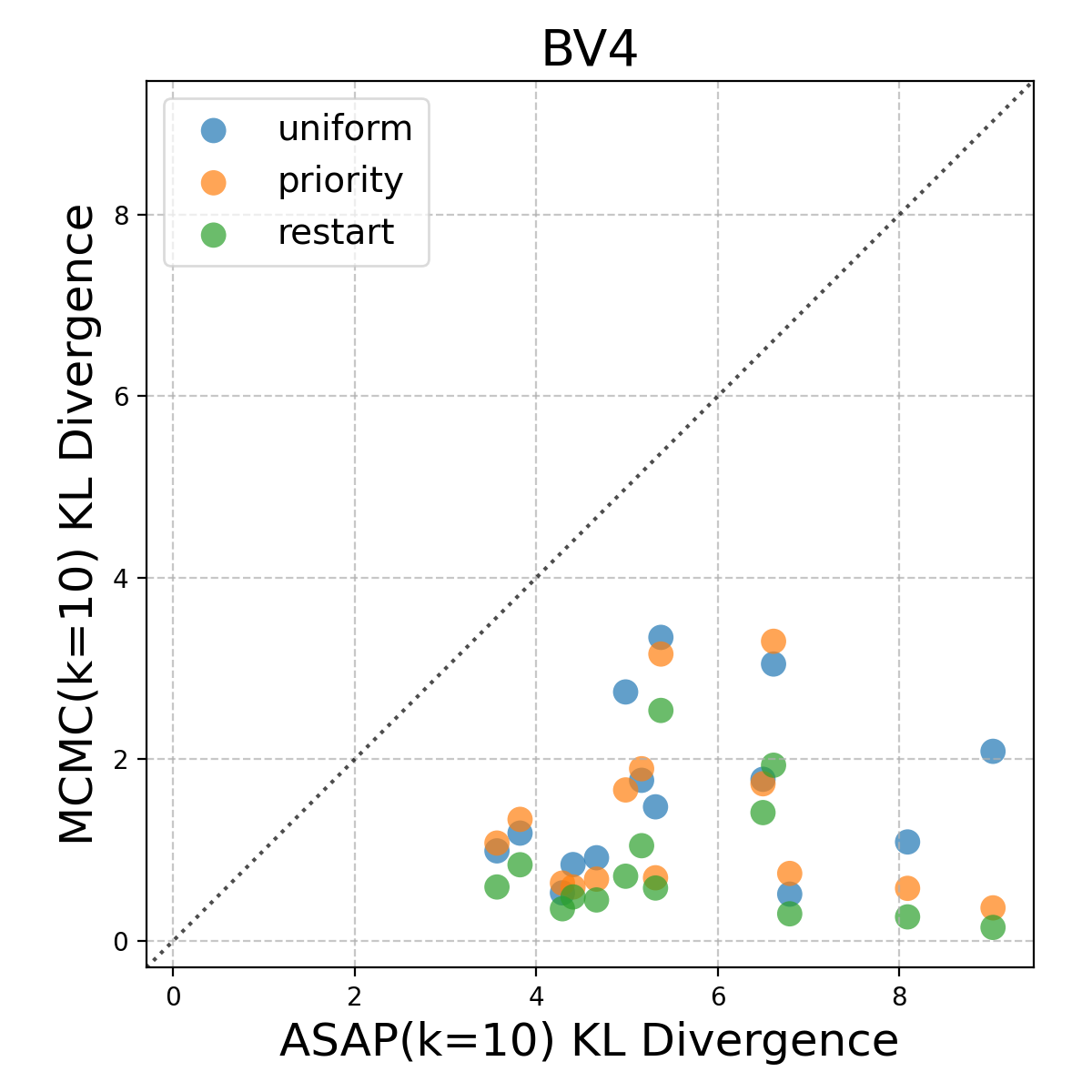} &
        \includegraphics[width=0.3\textwidth]{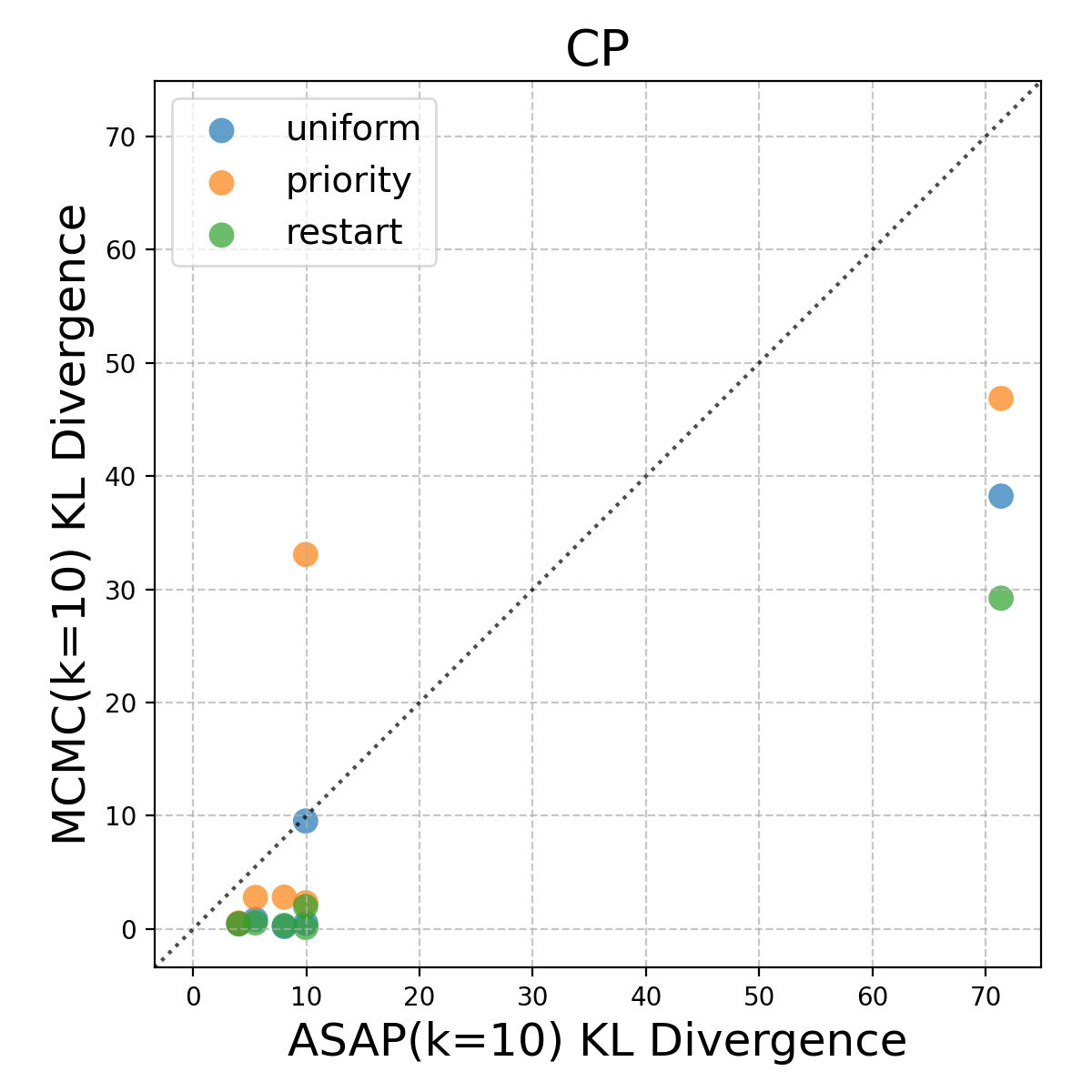} \\
    \end{tabular}
    \caption{KL-Divergence for ASAp($k=10$) vs MCMC($k=10$) by subset}
    \label{fig:asap_scatters}
\end{figure}

\begin{figure}[H]
    \centering
    \begin{tabular}{cccc}
        \includegraphics[width=0.22\textwidth]{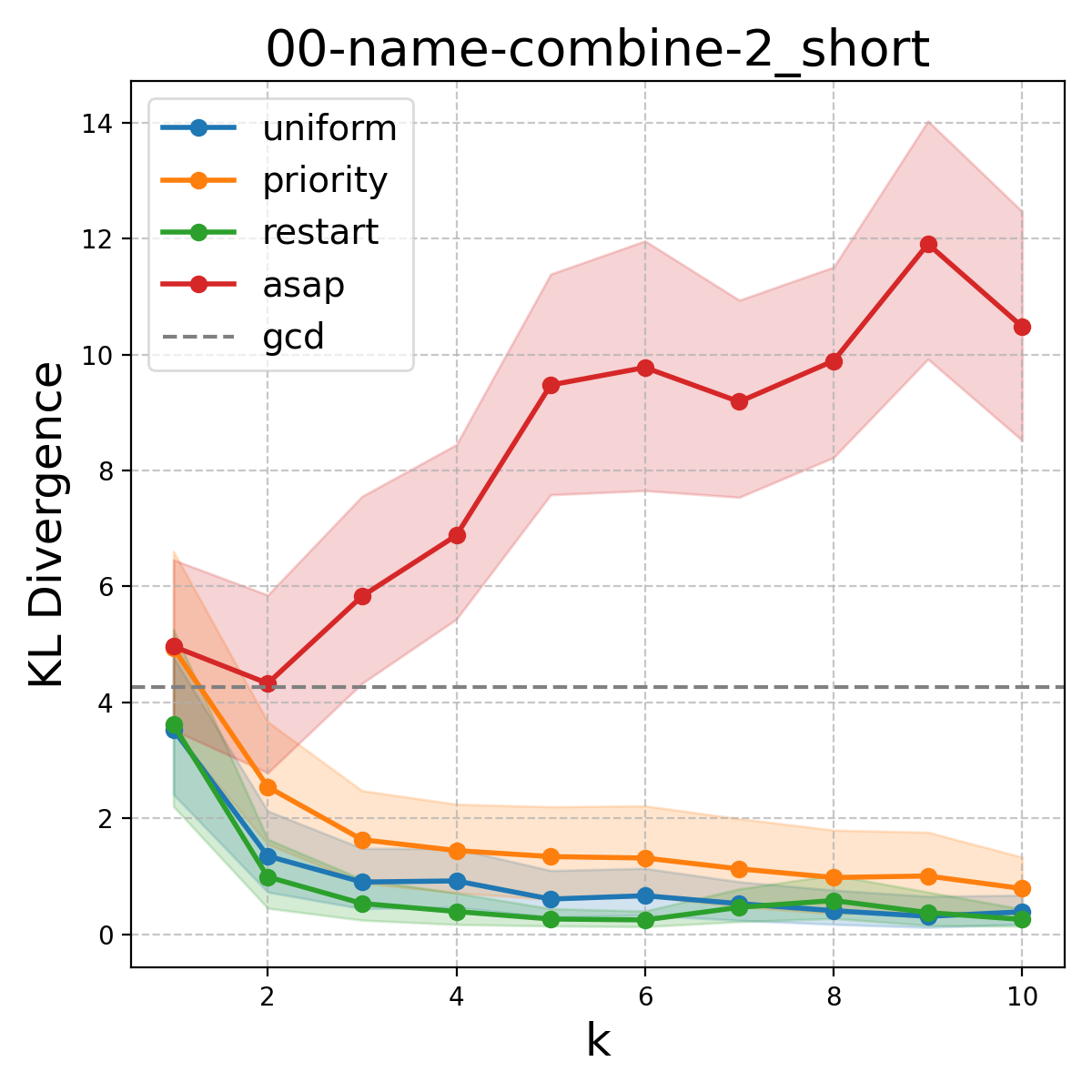} &
        \includegraphics[width=0.22\textwidth]{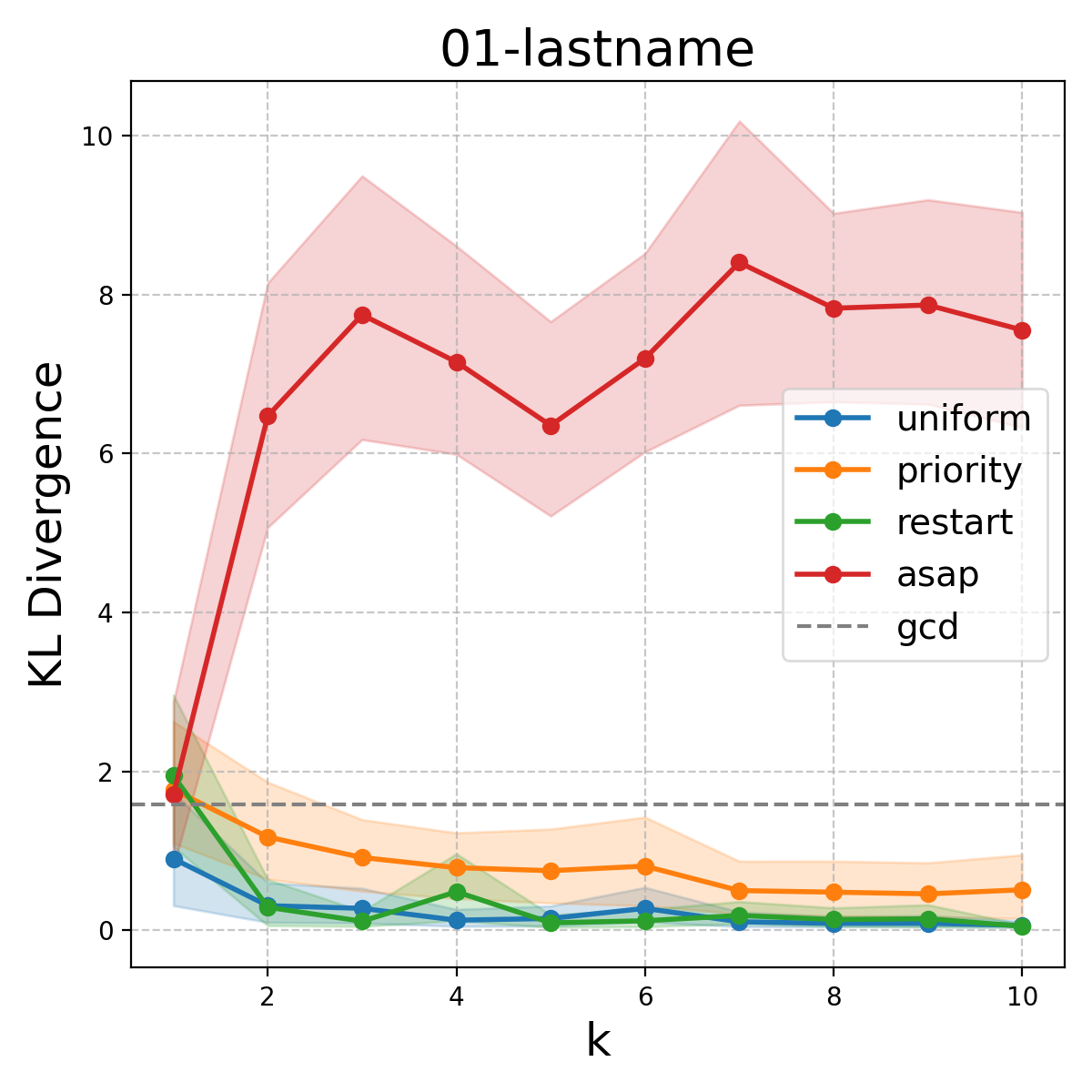} &
        \includegraphics[width=0.22\textwidth]{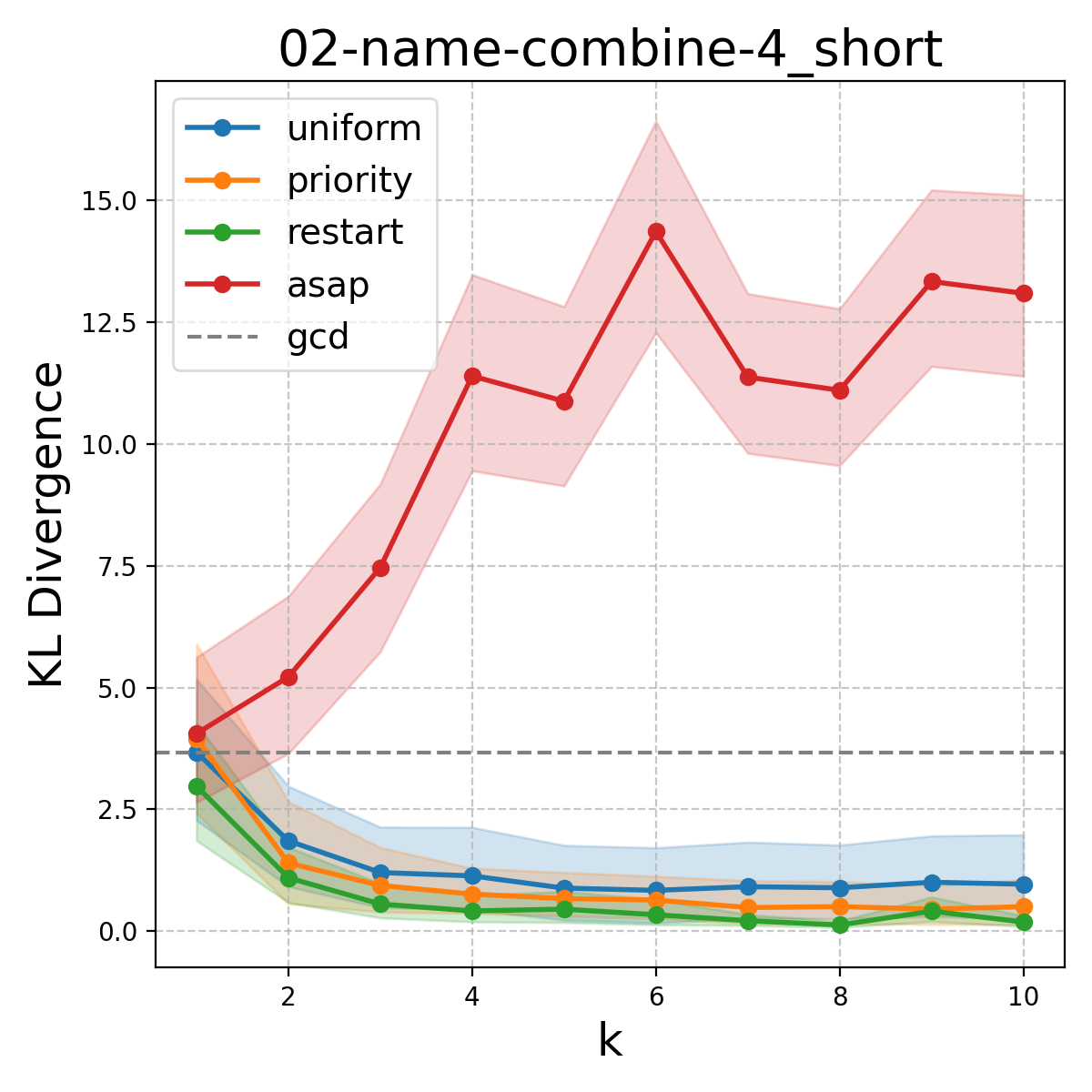} &
        \includegraphics[width=0.22\textwidth]{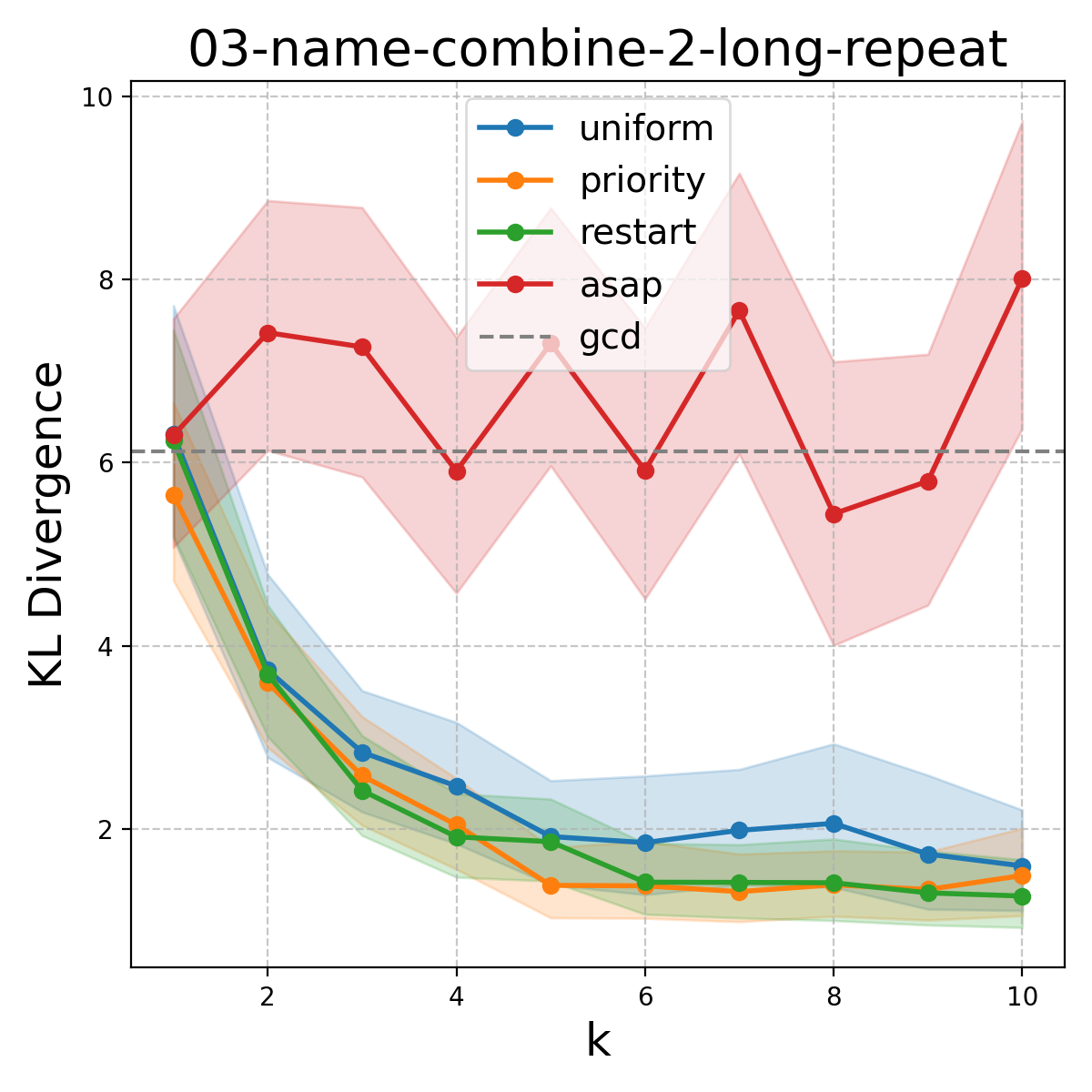} \\
        \includegraphics[width=0.22\textwidth]{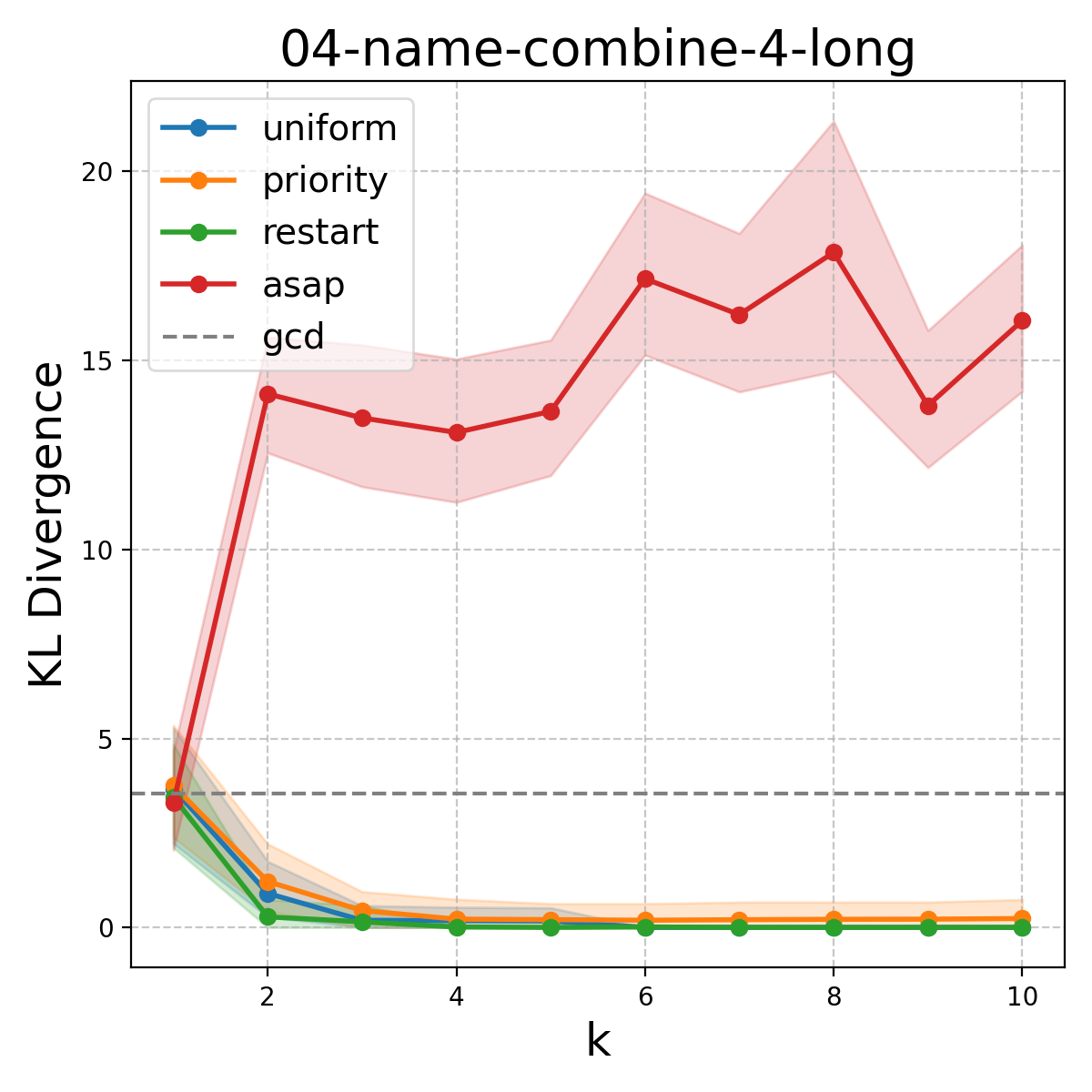} &
        \includegraphics[width=0.22\textwidth]{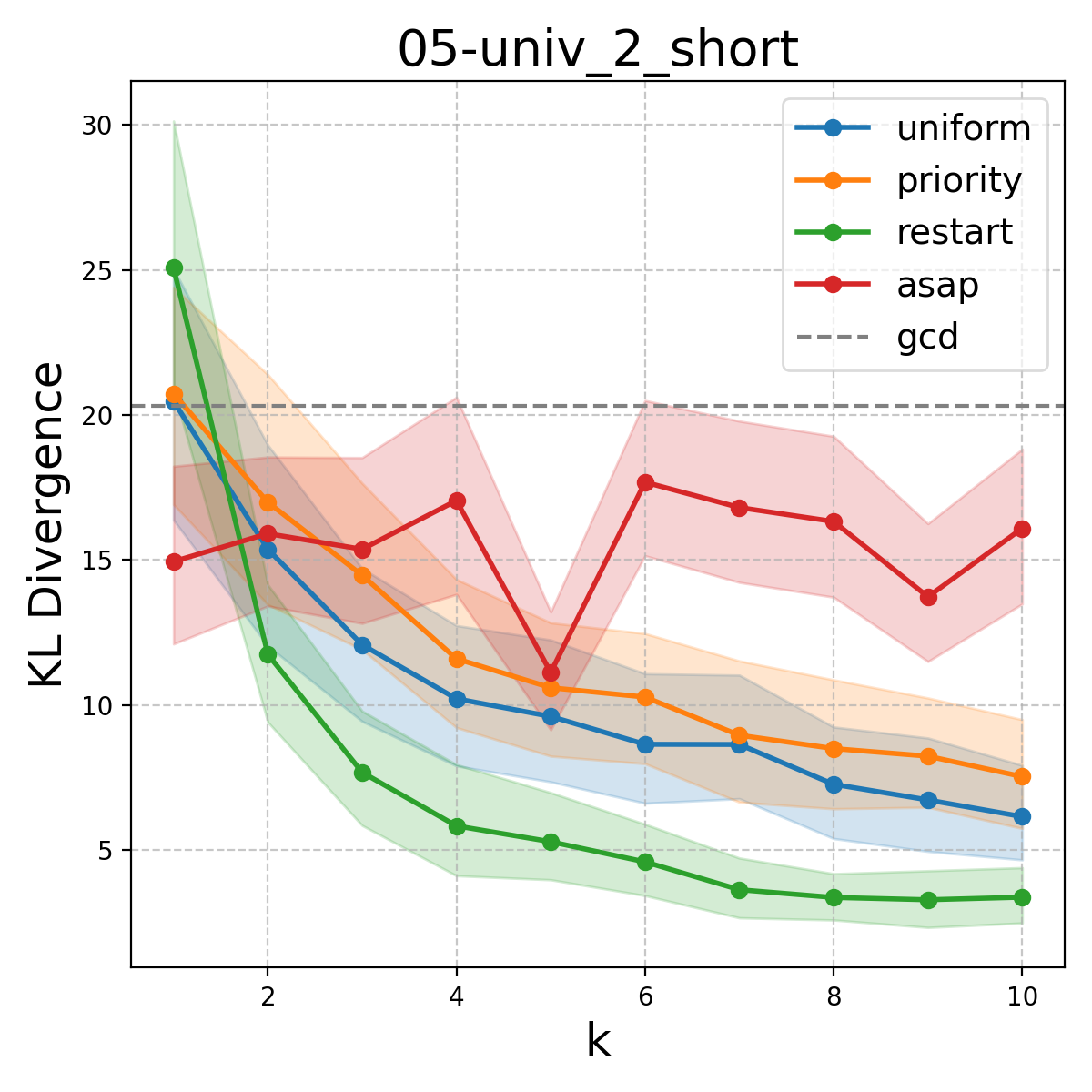} &
        \includegraphics[width=0.22\textwidth]{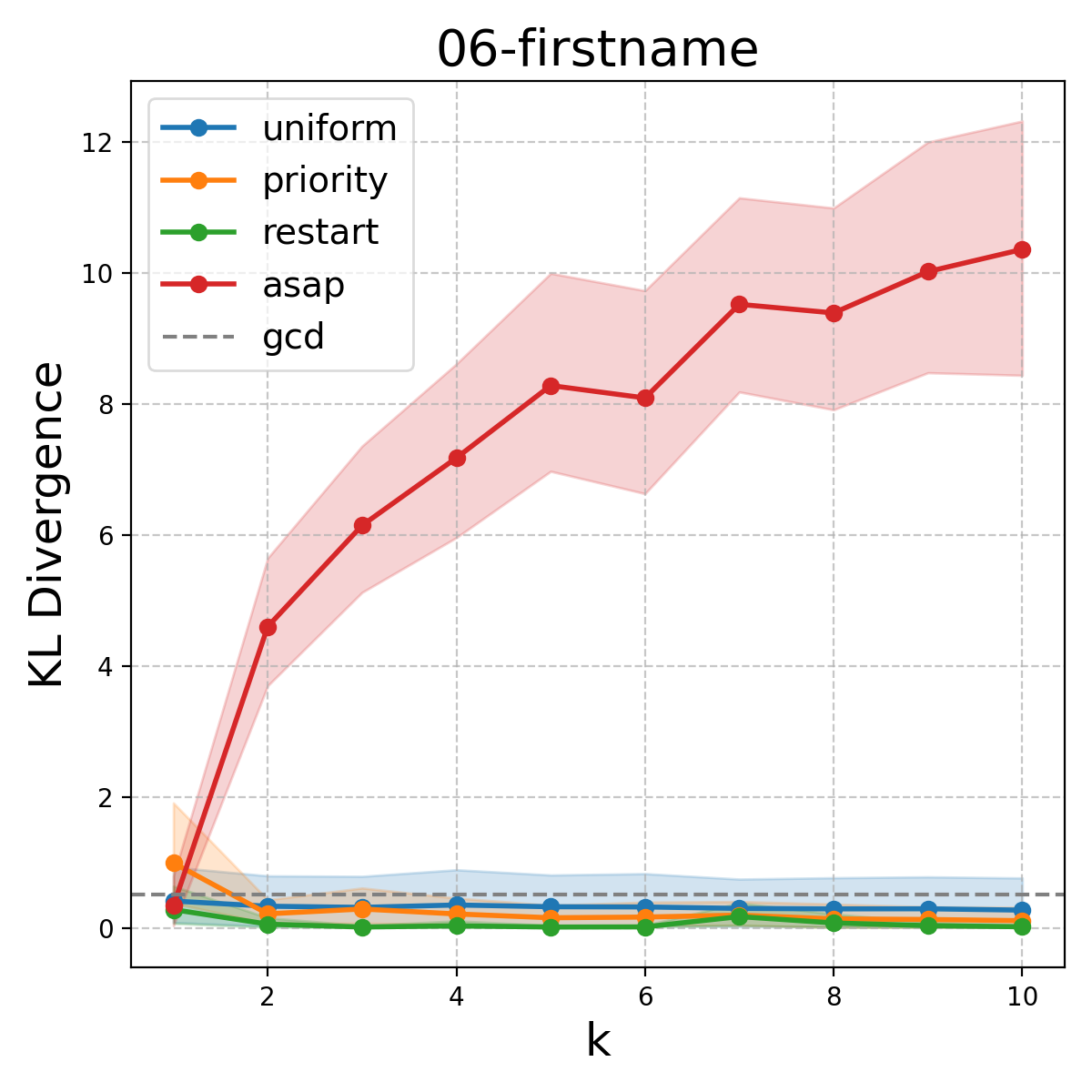} &
        \includegraphics[width=0.22\textwidth]{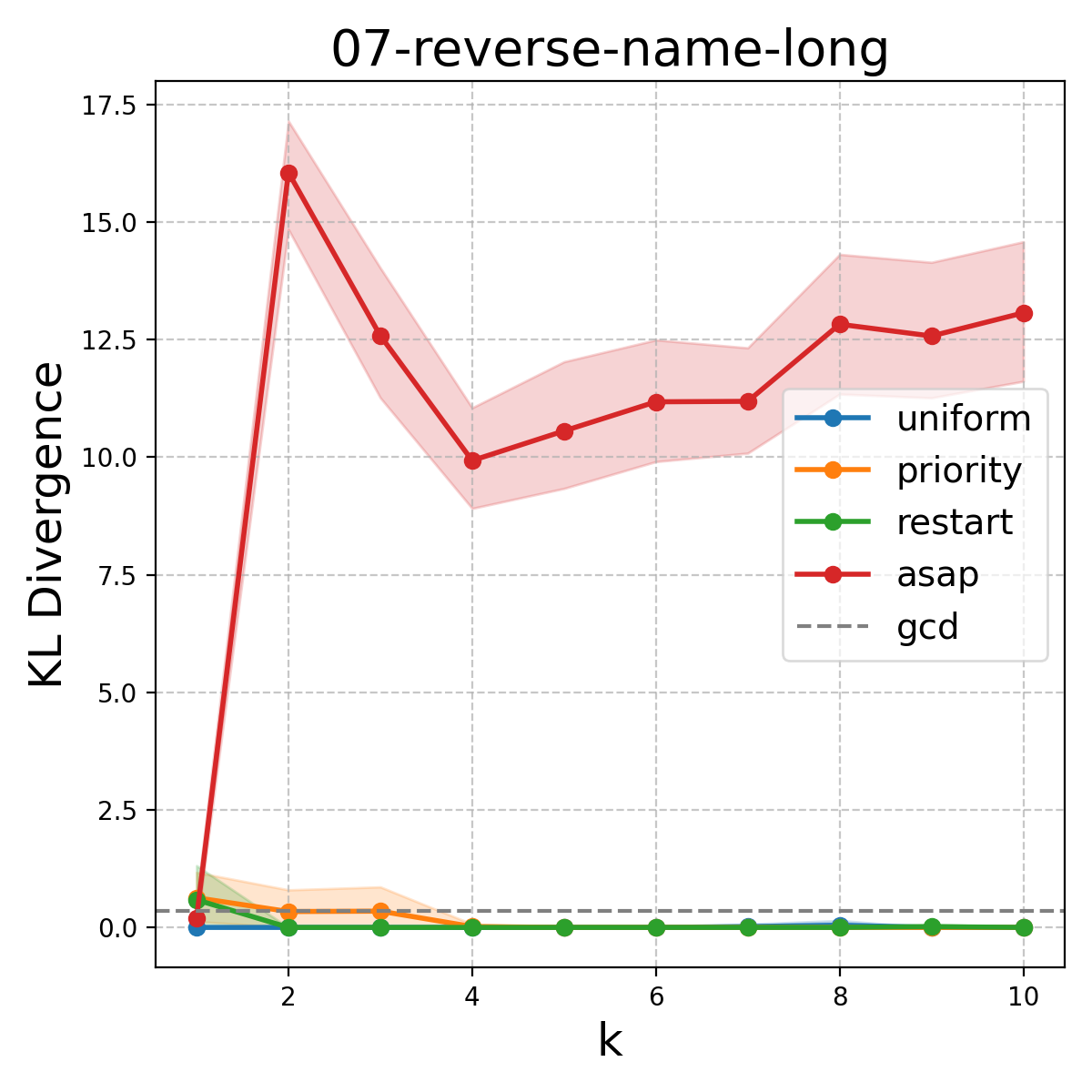} \\
        \includegraphics[width=0.22\textwidth]{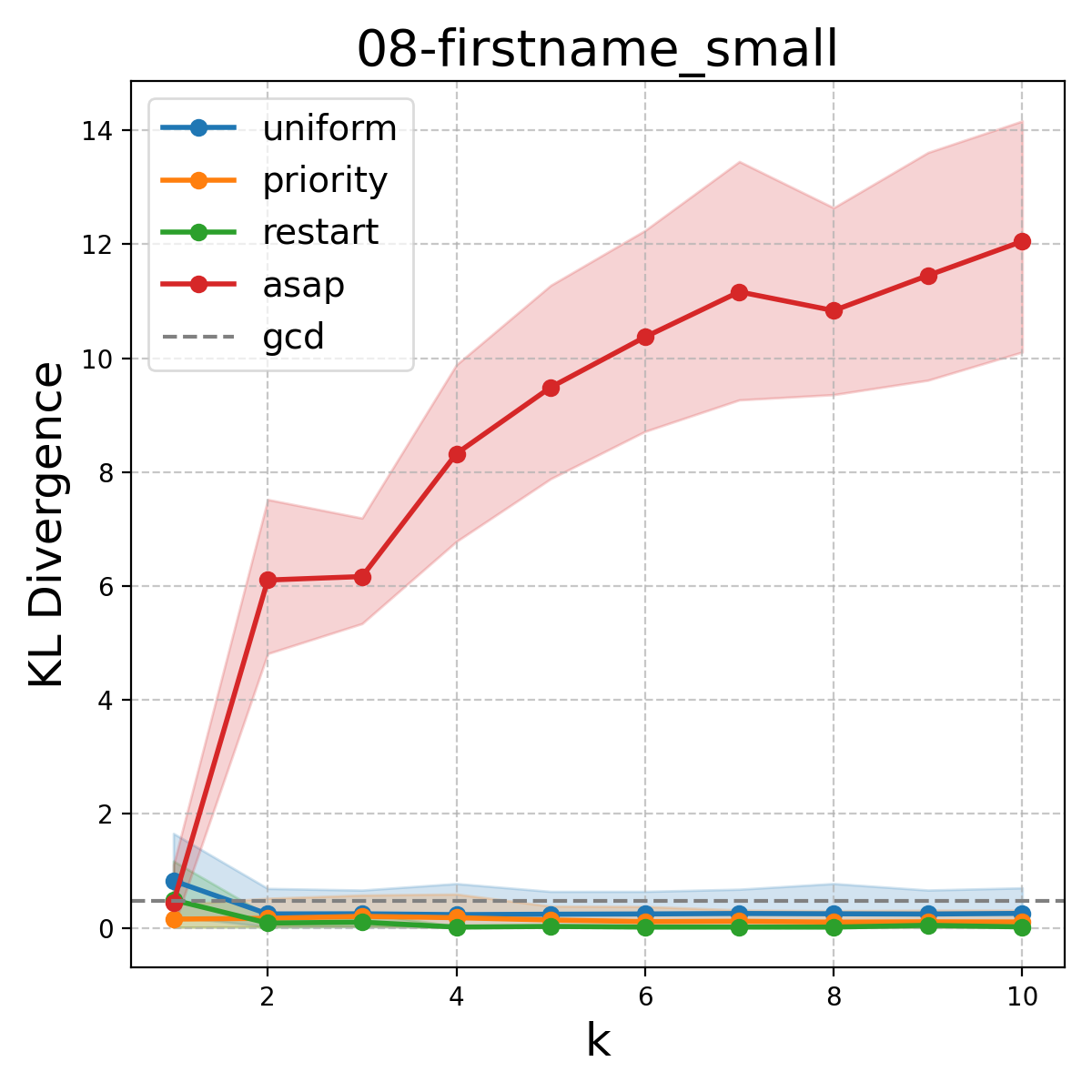} &
        \includegraphics[width=0.22\textwidth]{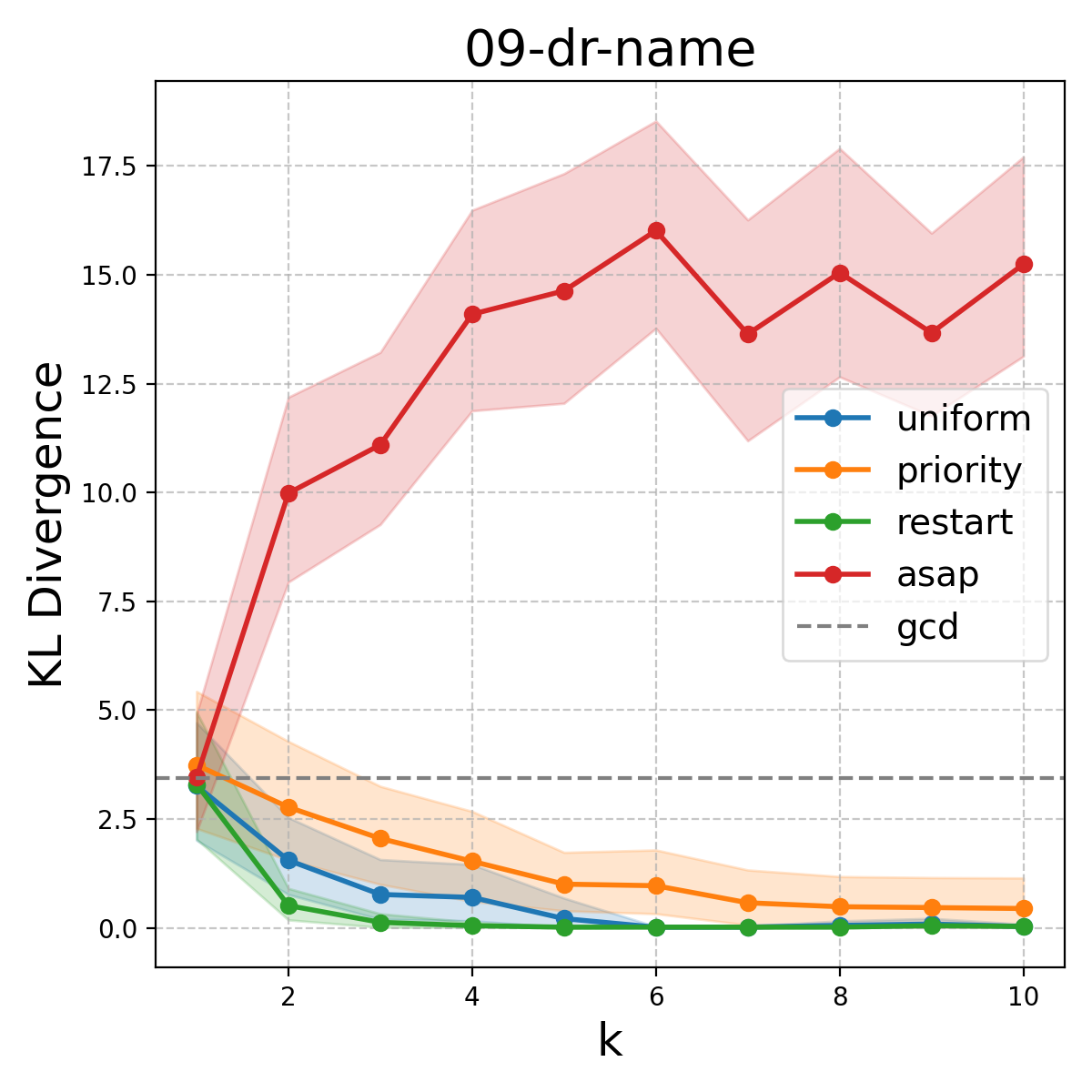} &
        \includegraphics[width=0.22\textwidth]{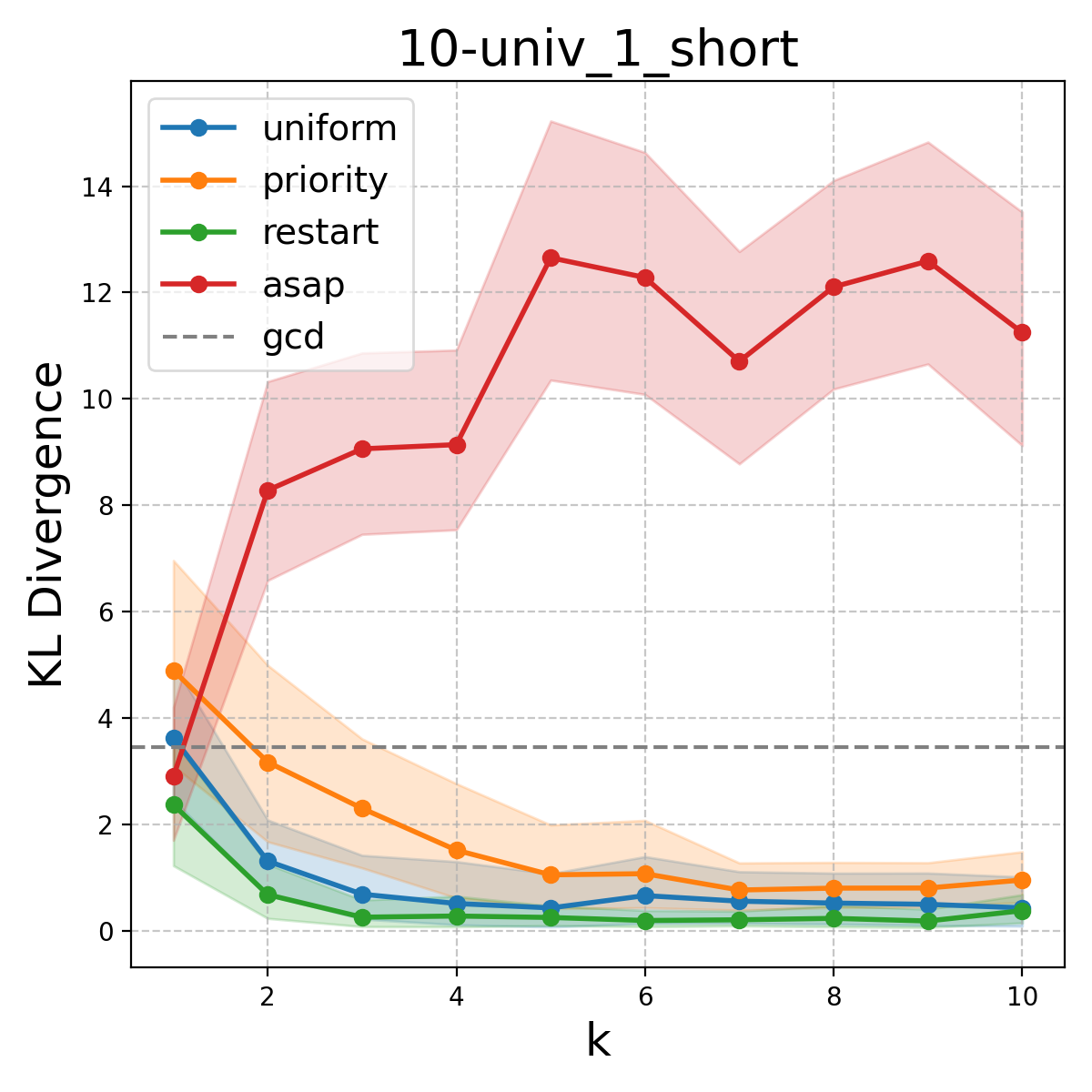} &
        \includegraphics[width=0.22\textwidth]{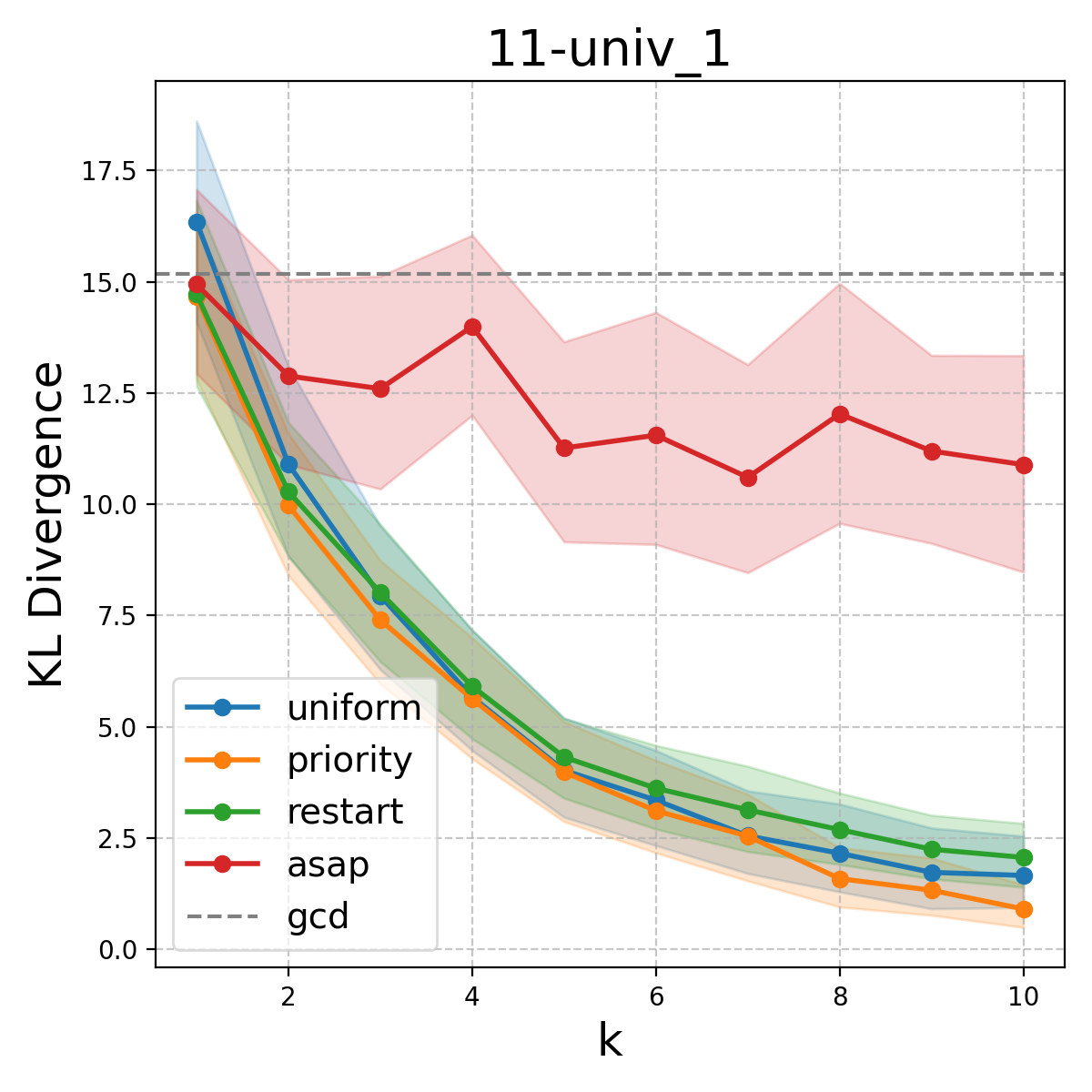} \\
        \includegraphics[width=0.22\textwidth]{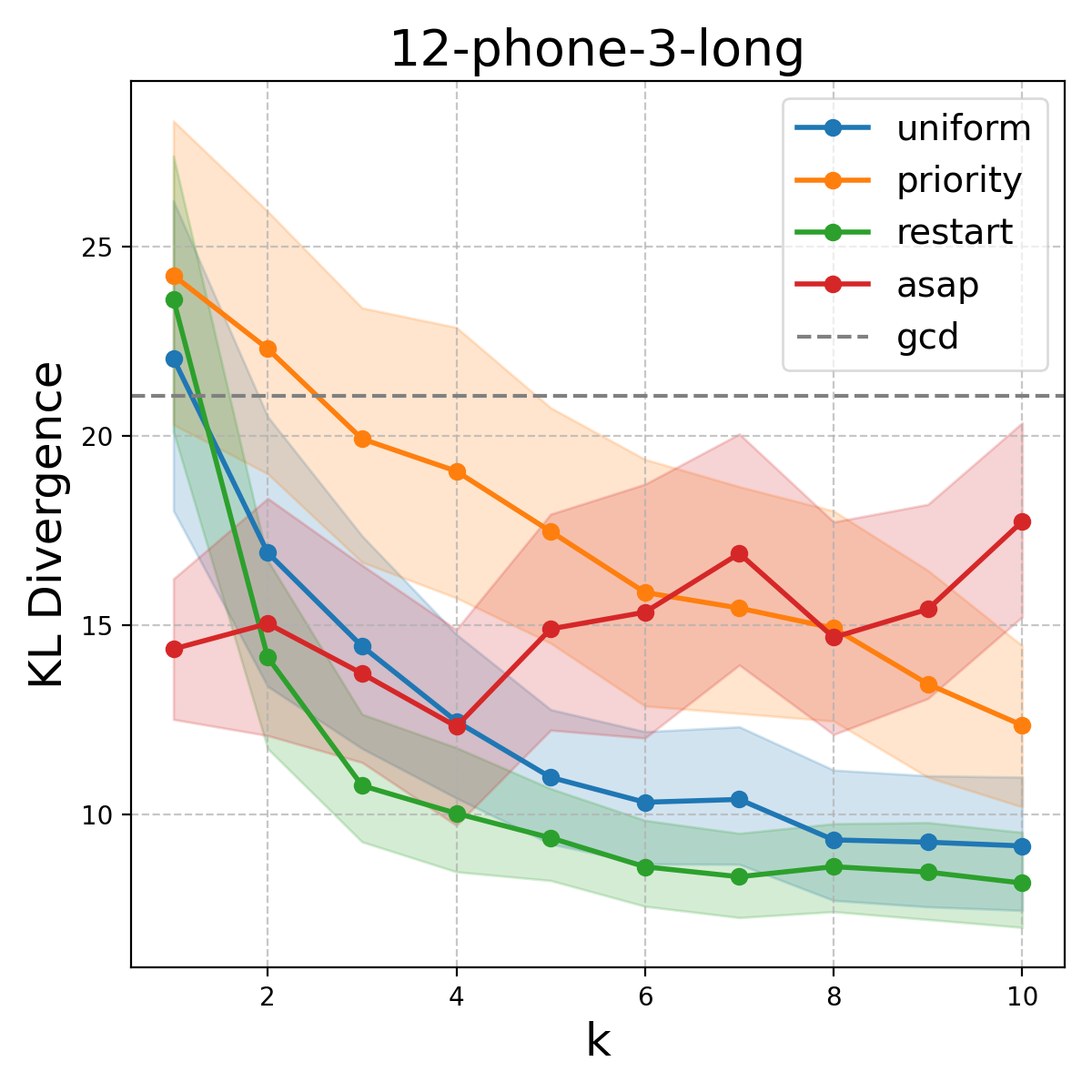} &
        \includegraphics[width=0.22\textwidth]{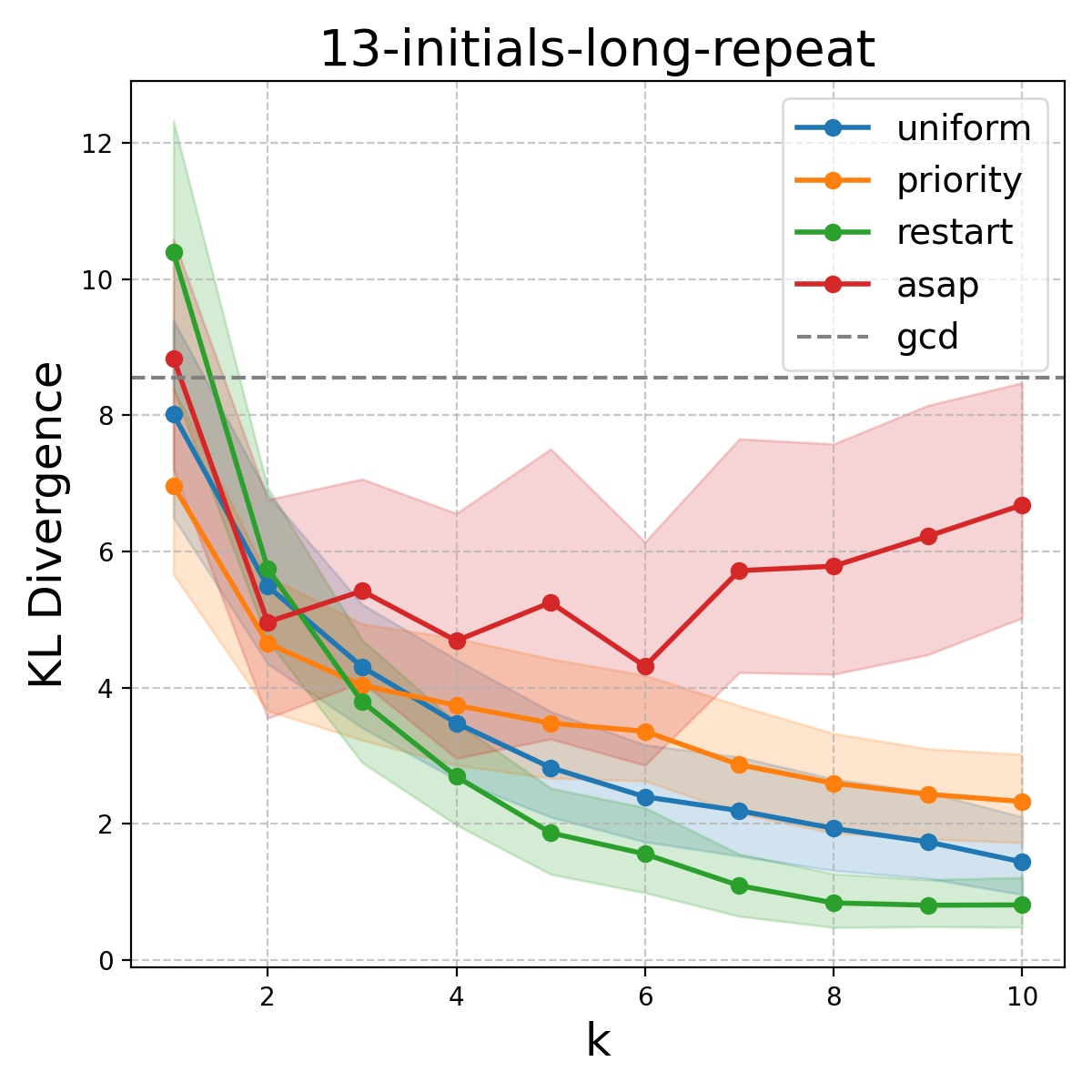} &
        \includegraphics[width=0.22\textwidth]{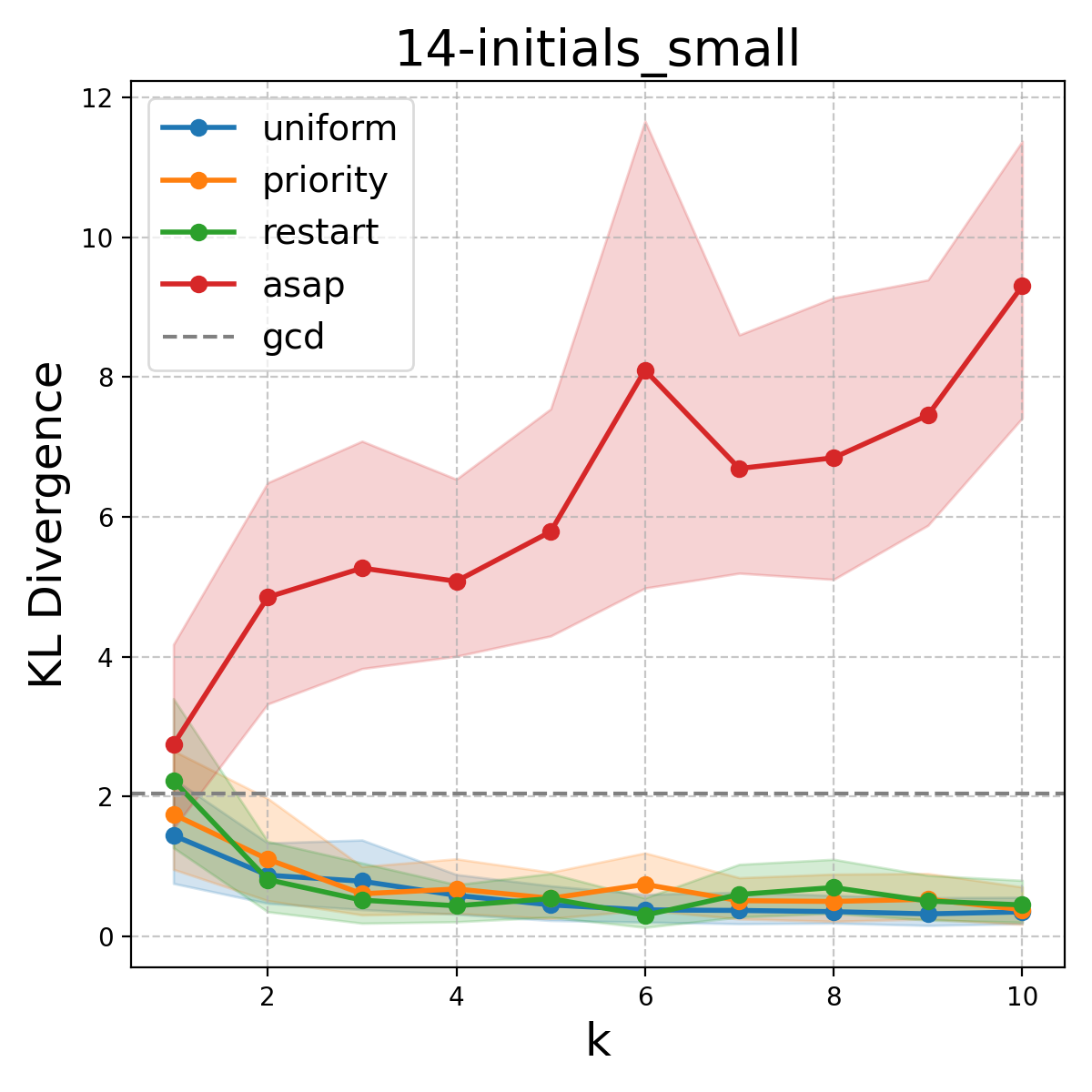} \\
    \end{tabular}
    \caption{KL-Divergence for ASAp($k$) and MCMC($k$) in SLIA subset}
    \label{fig:slia_kl}
\end{figure}

\begin{figure}[H]
    \centering
    \begin{tabular}{cccc}
        \includegraphics[width=0.22\textwidth]{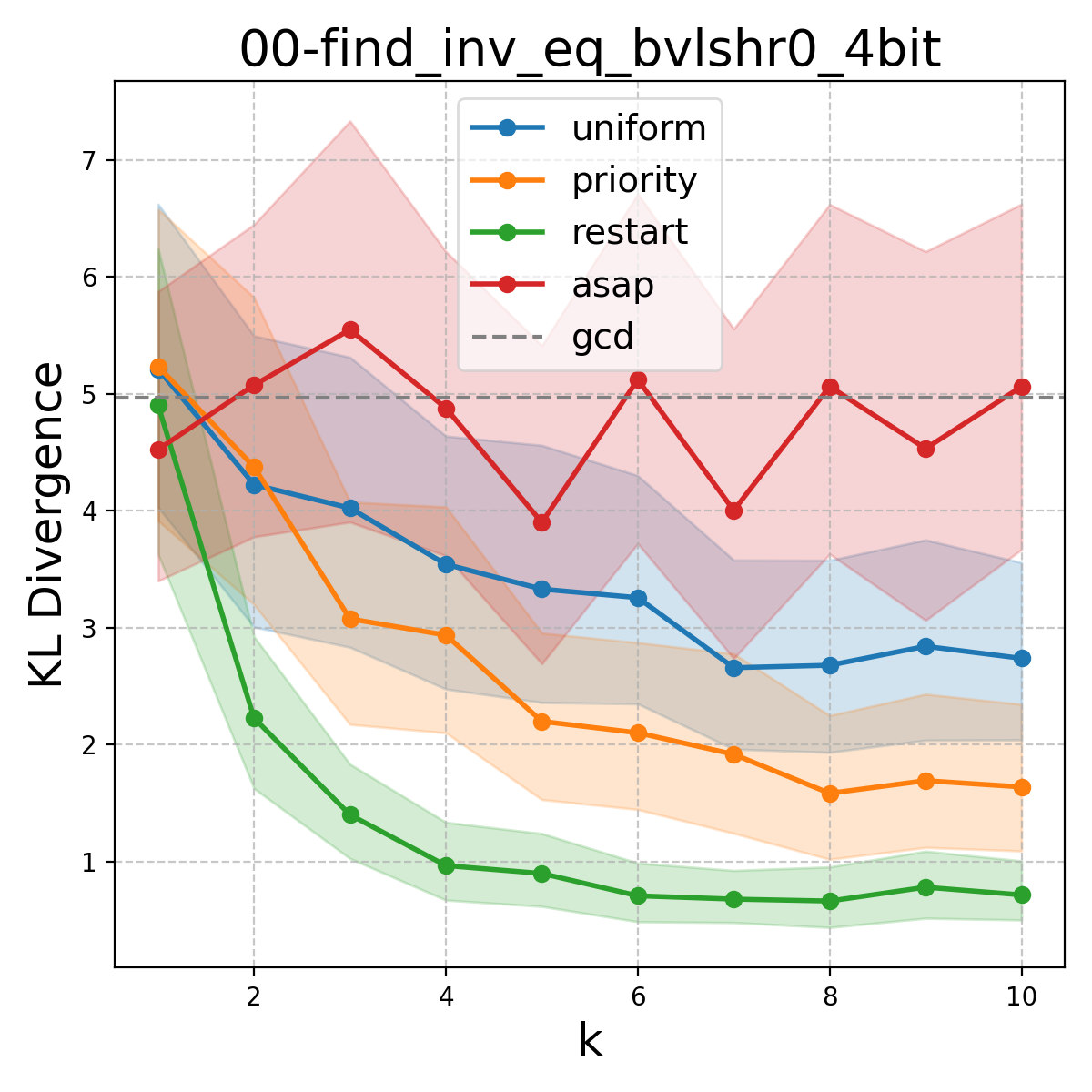} &
        \includegraphics[width=0.22\textwidth]{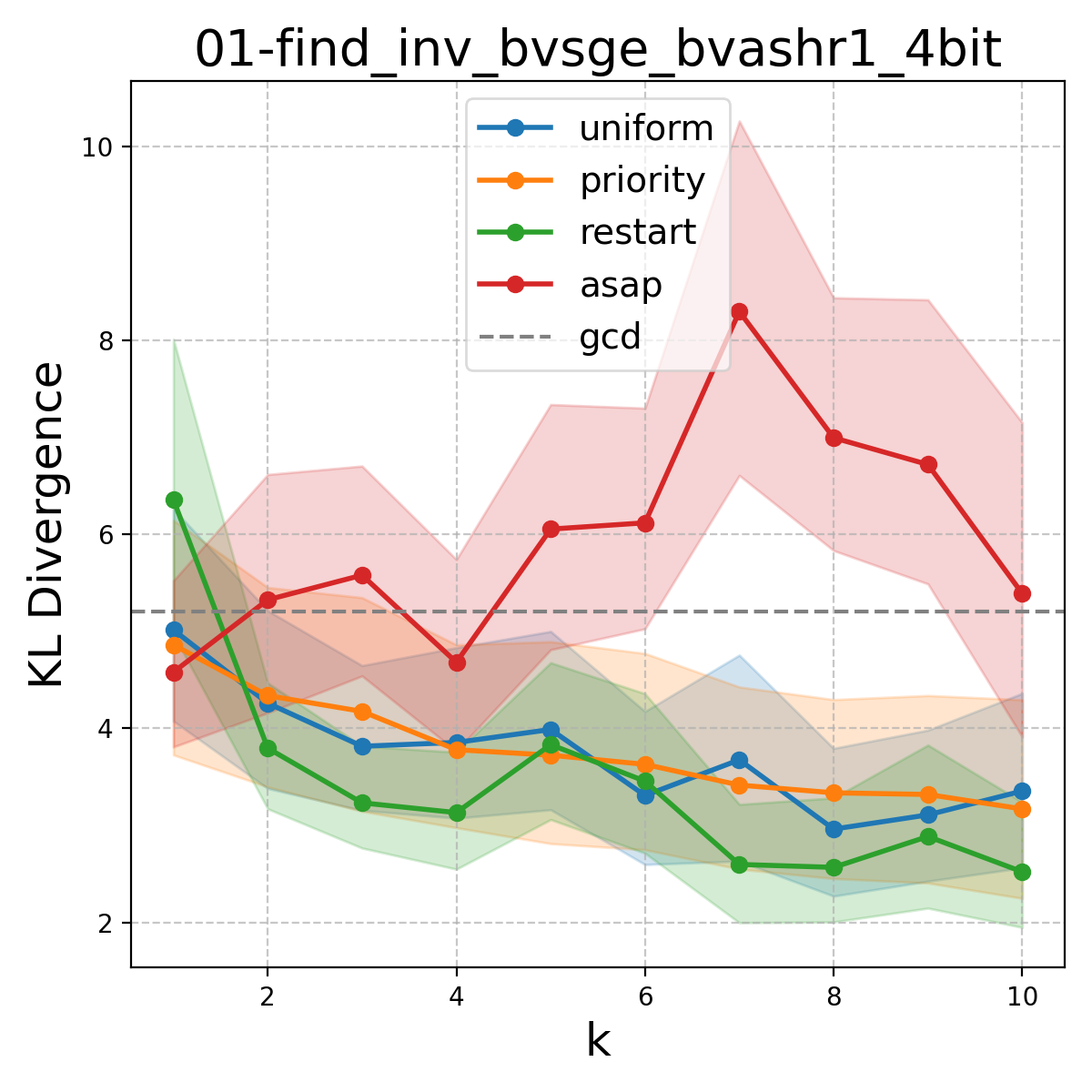} &
        \includegraphics[width=0.22\textwidth]{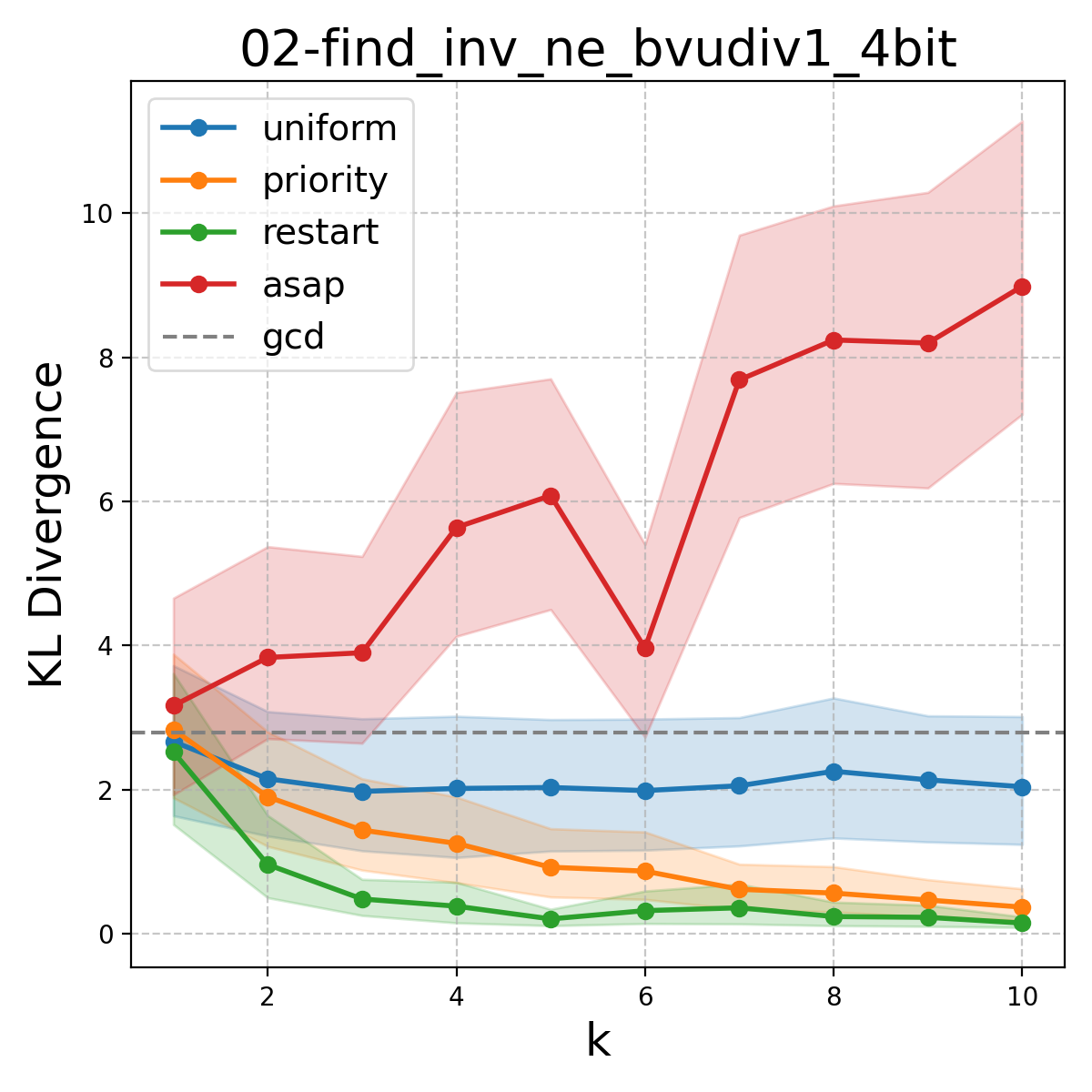} &
        \includegraphics[width=0.22\textwidth]{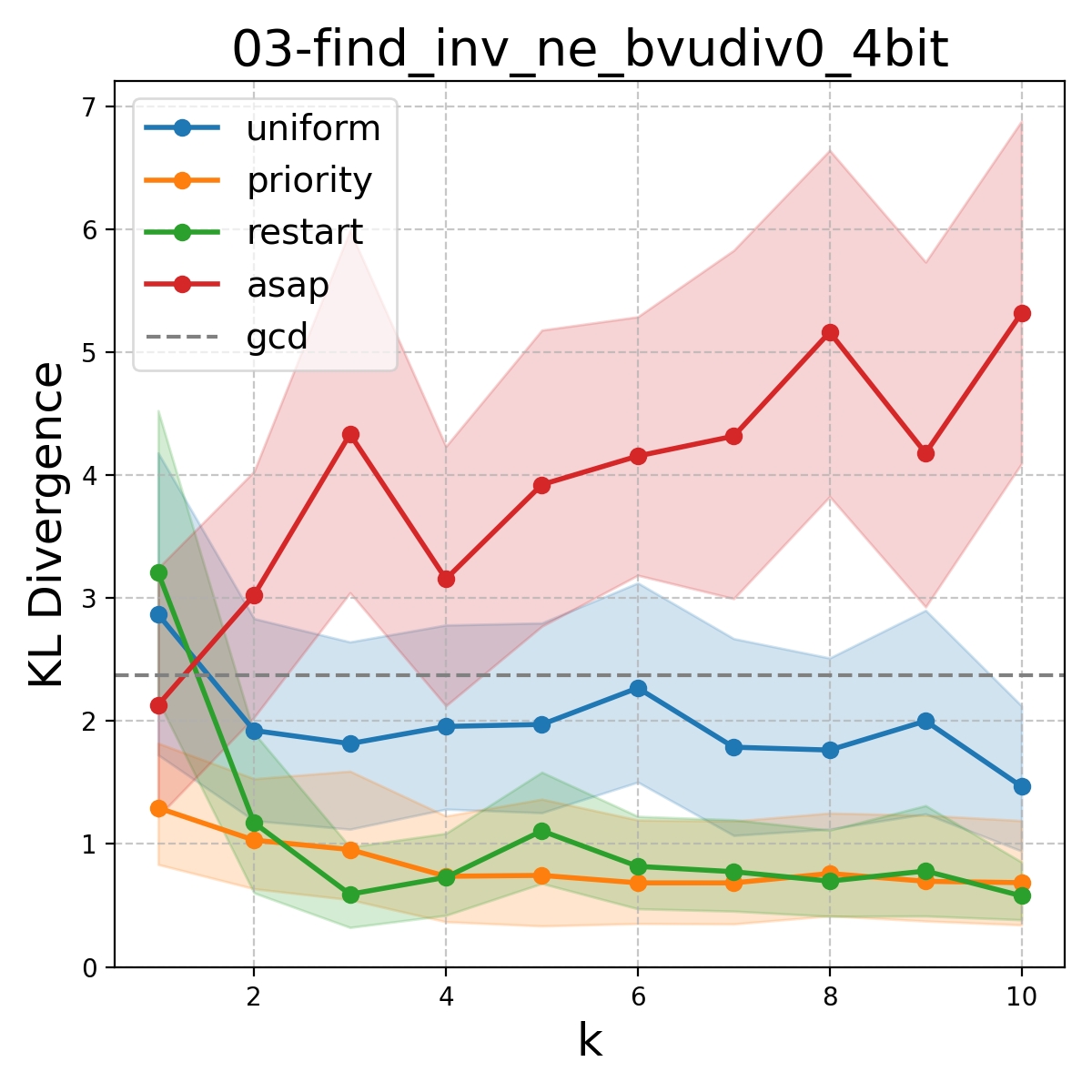} \\
        \includegraphics[width=0.22\textwidth]{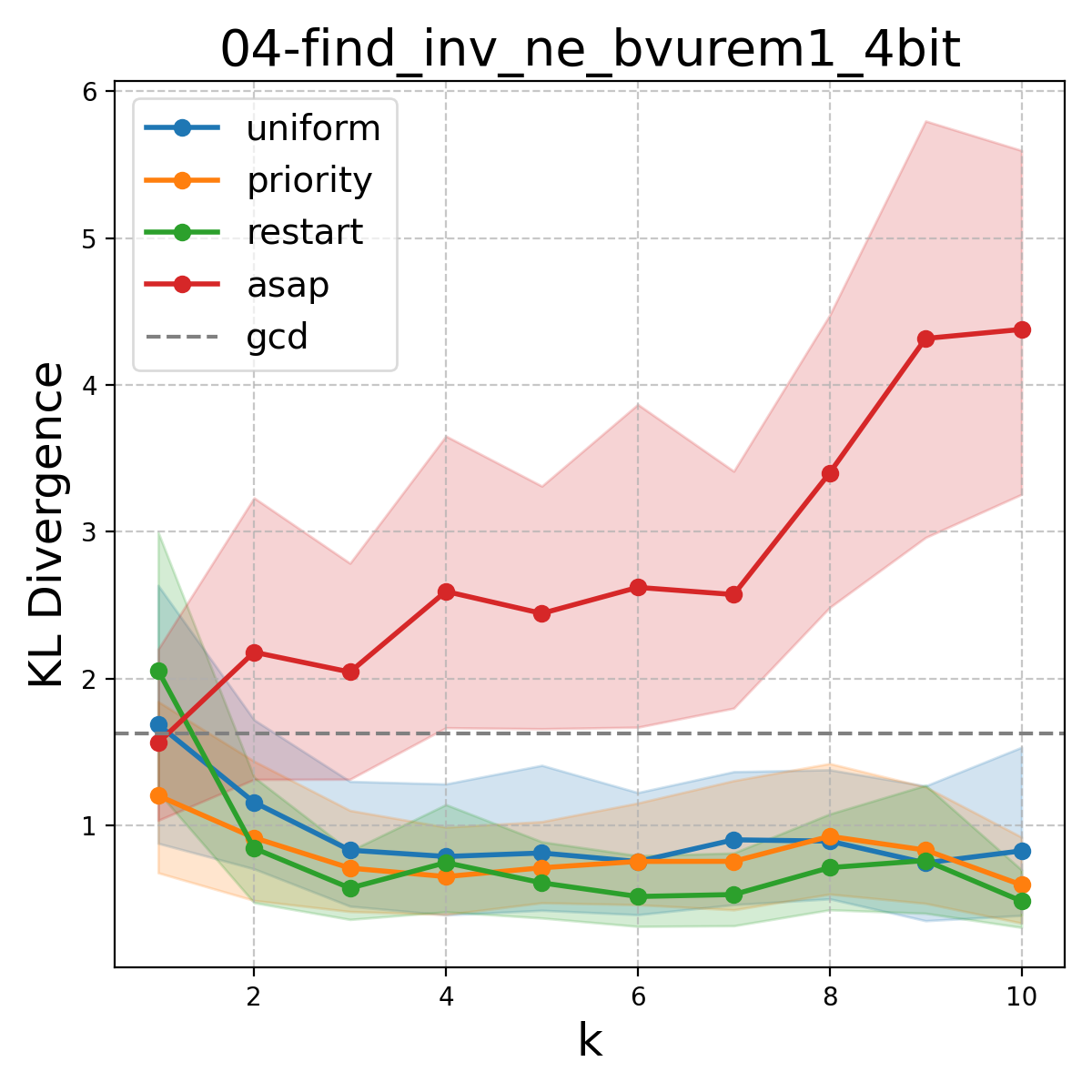} &
        \includegraphics[width=0.22\textwidth]{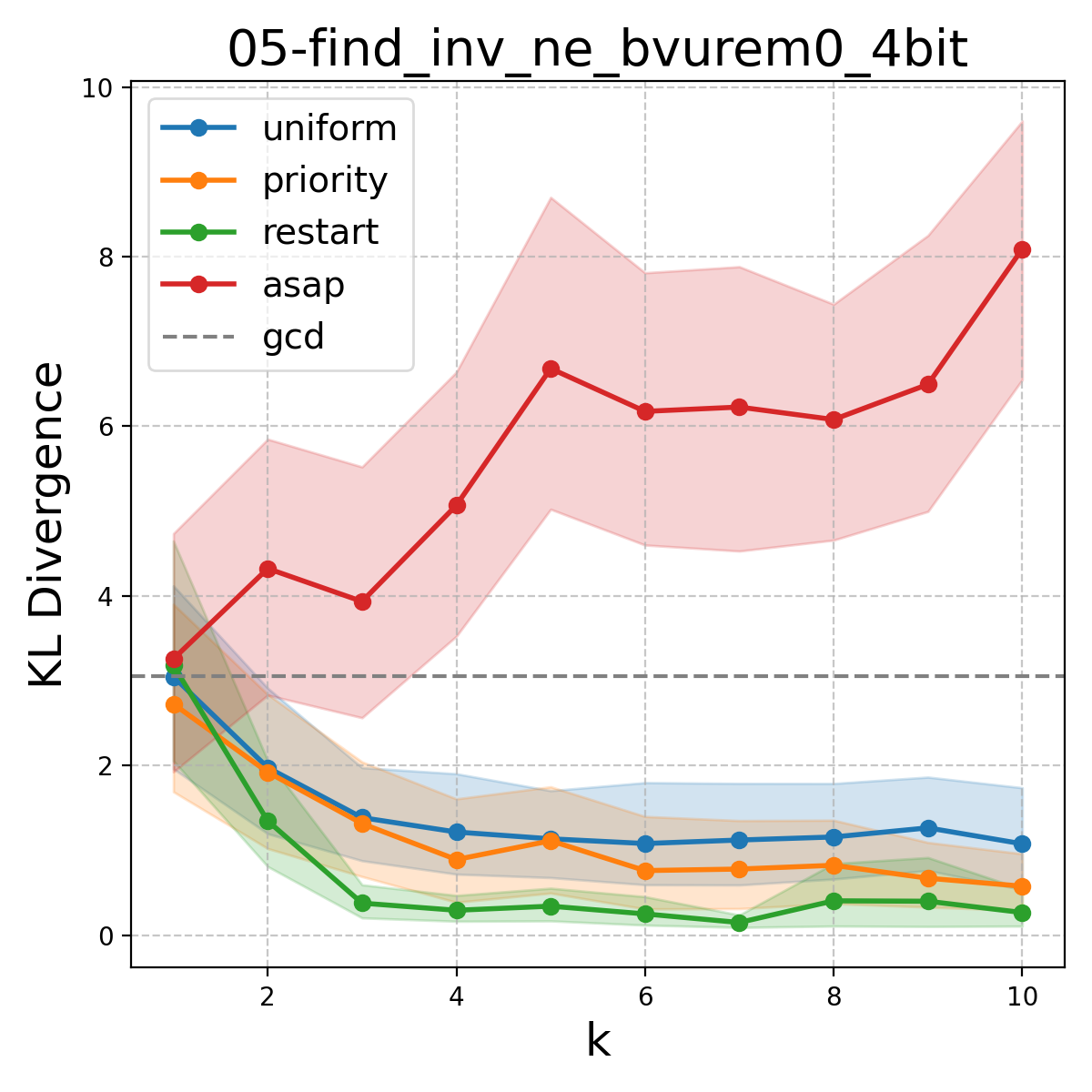} &
        \includegraphics[width=0.22\textwidth]{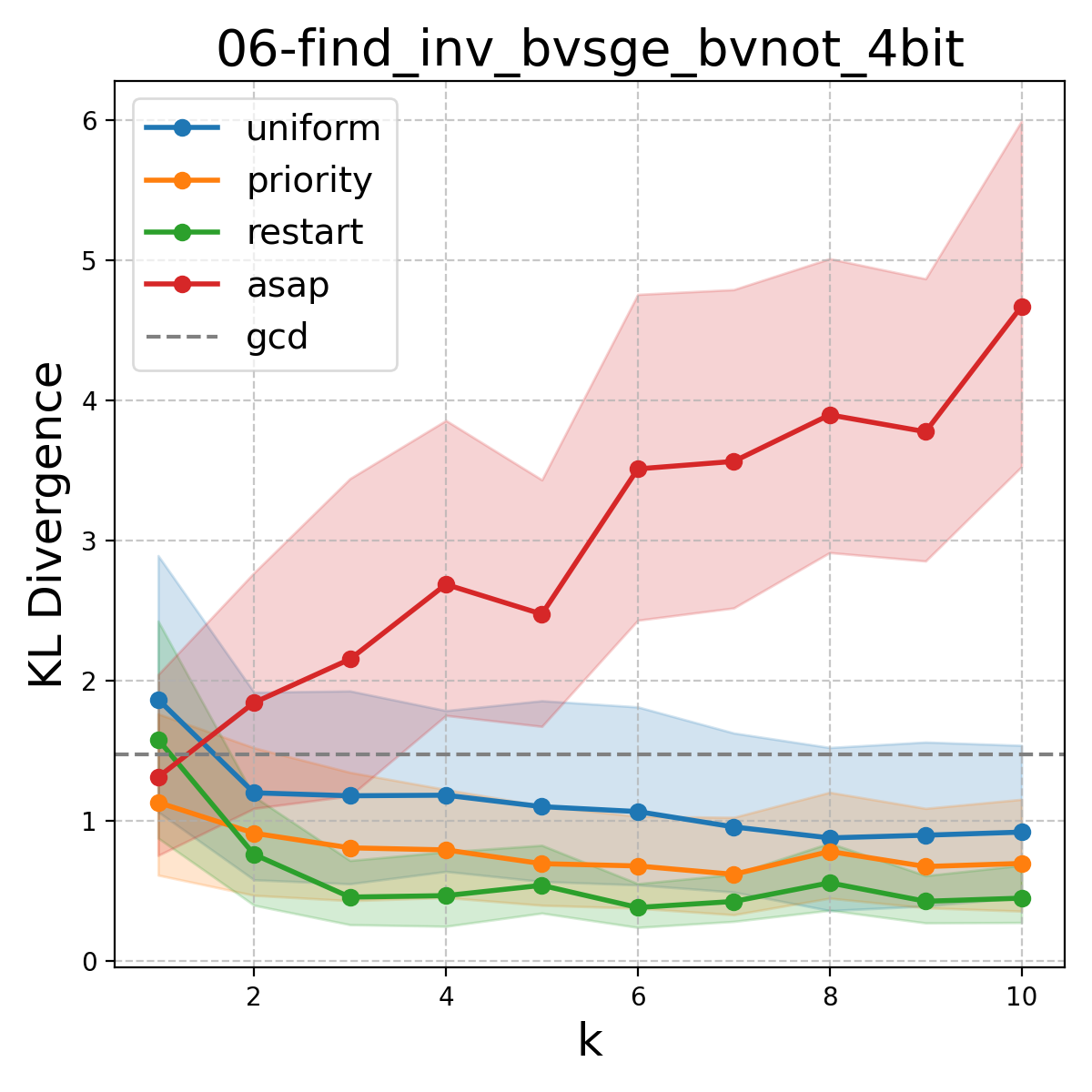} &
        \includegraphics[width=0.22\textwidth]{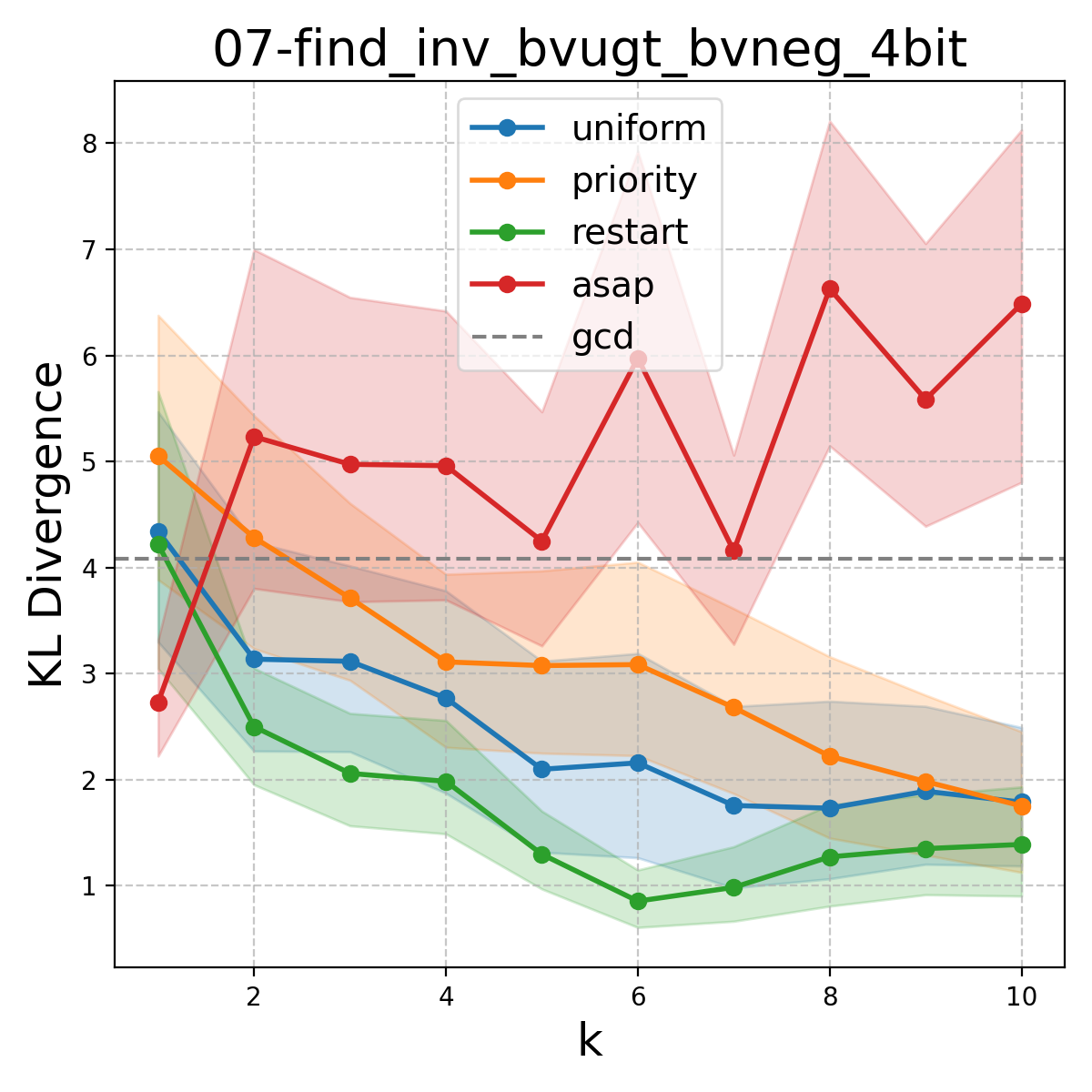} \\
        \includegraphics[width=0.22\textwidth]{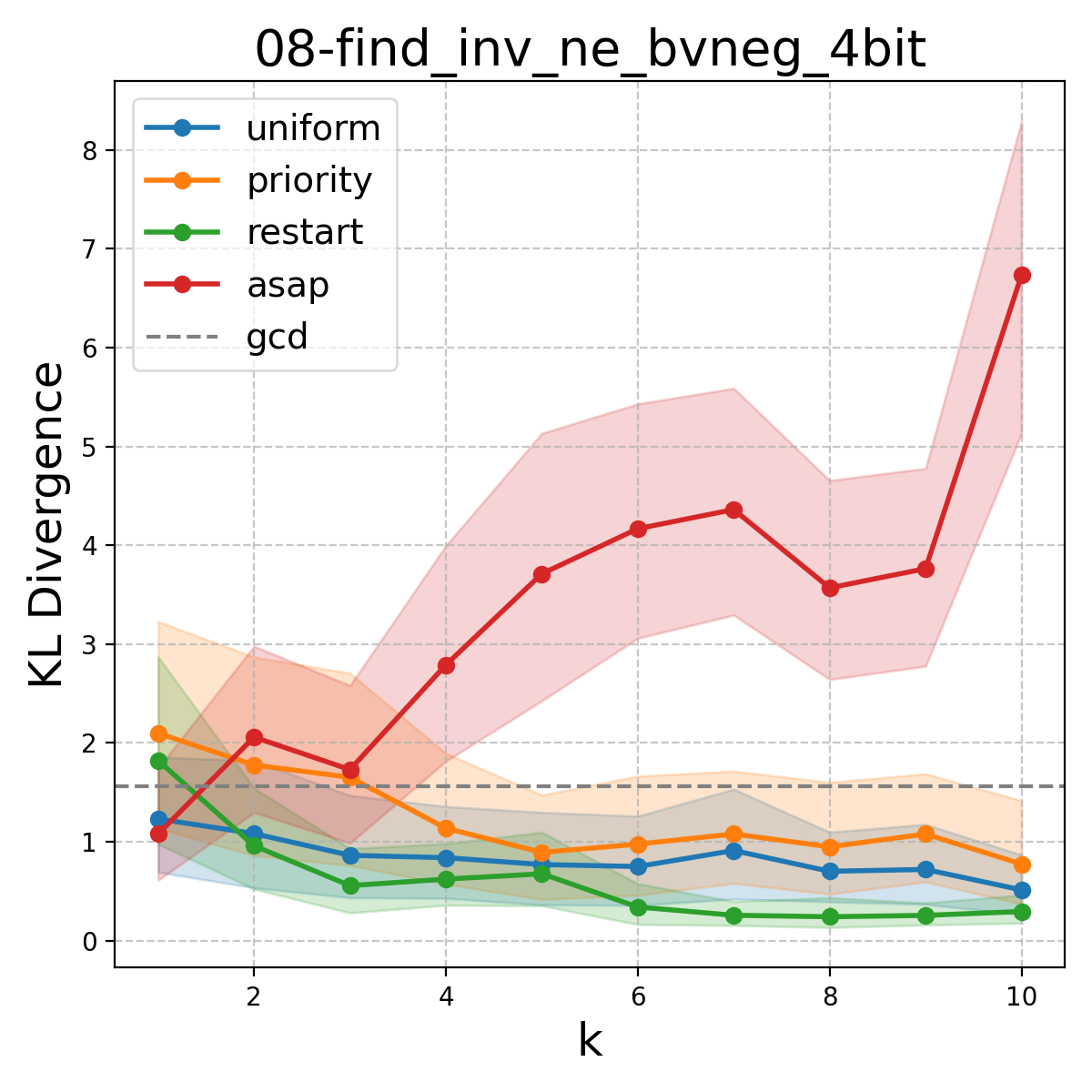} &
        \includegraphics[width=0.22\textwidth]{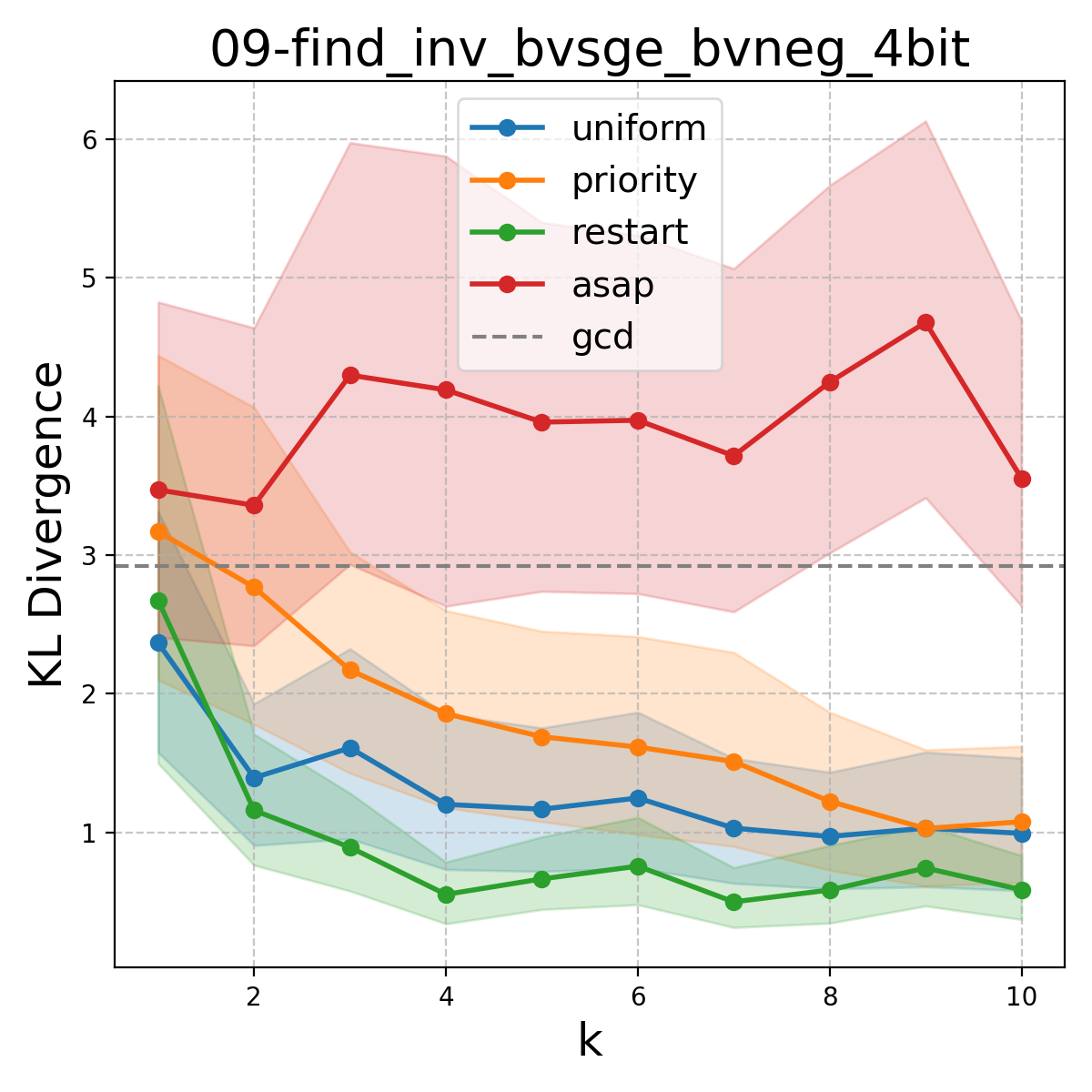} &
        \includegraphics[width=0.22\textwidth]{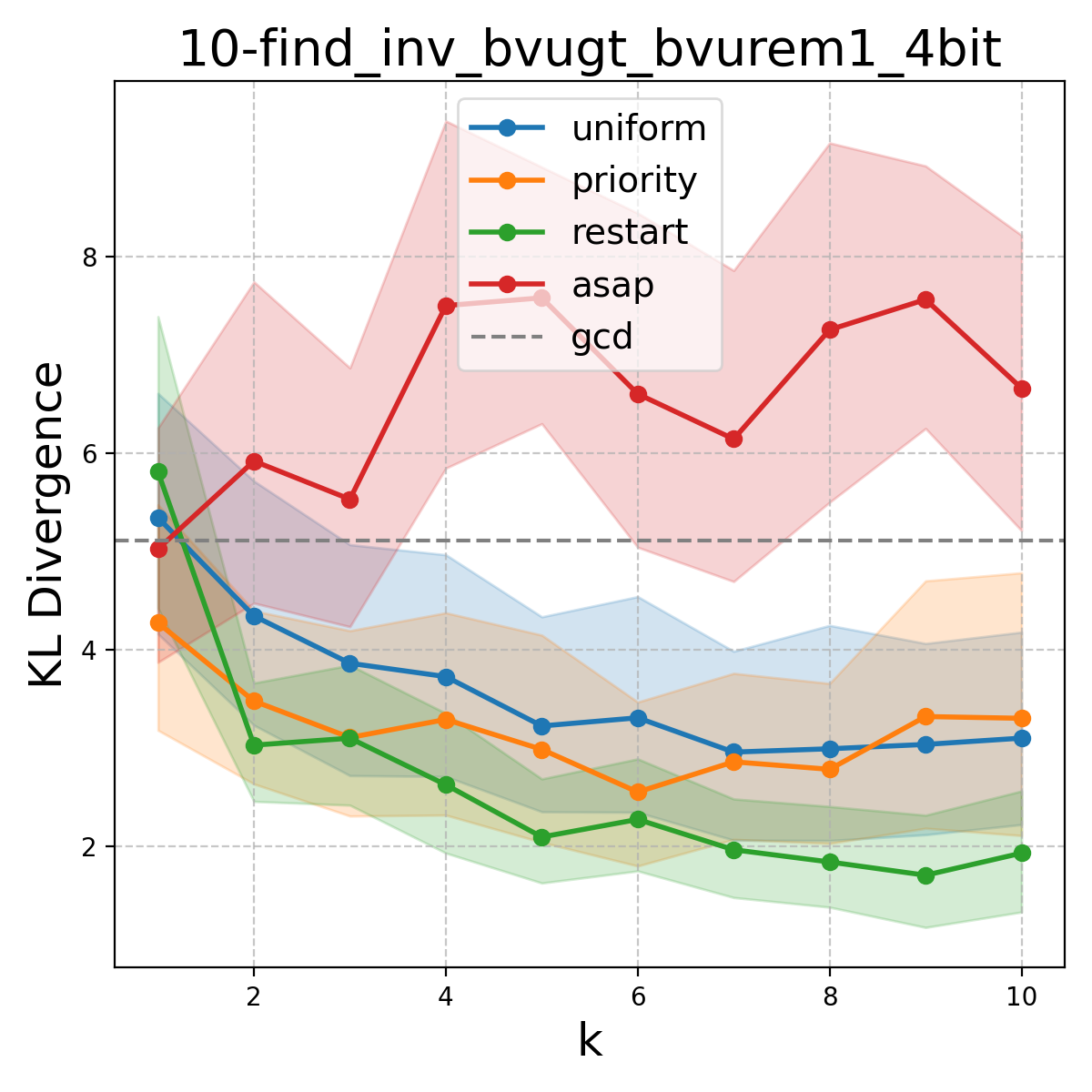} &
        \includegraphics[width=0.22\textwidth]{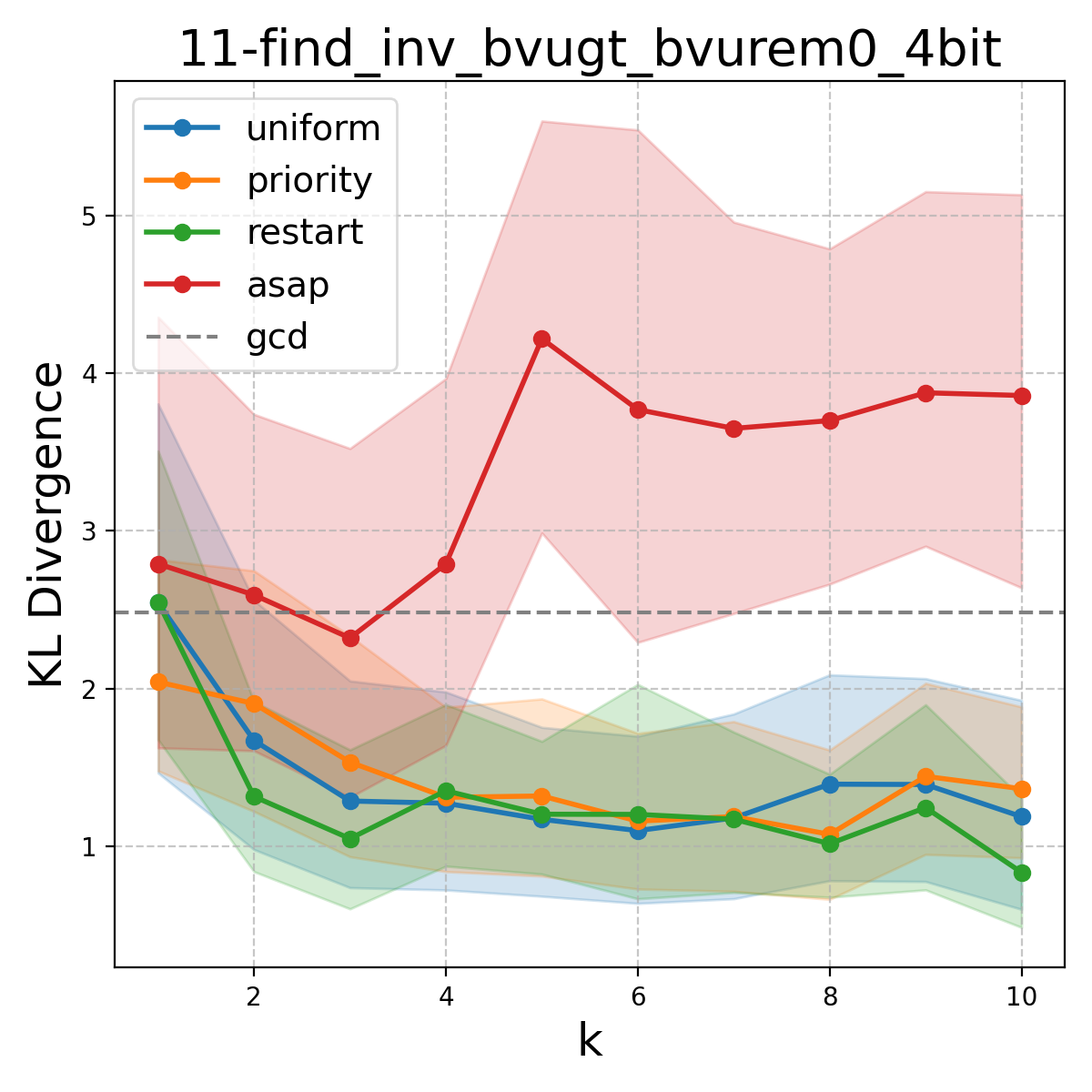} \\
        \includegraphics[width=0.22\textwidth]{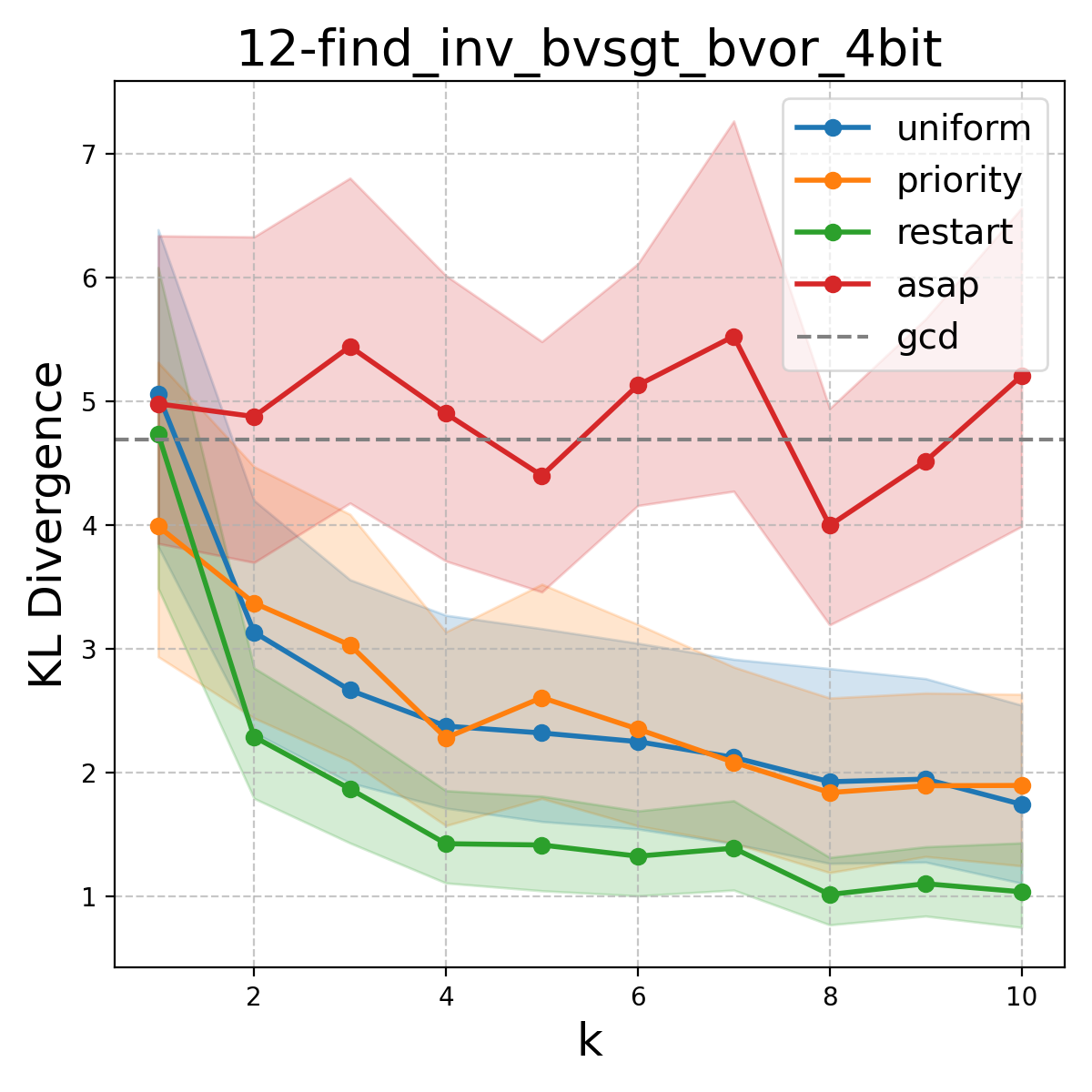} &
        \includegraphics[width=0.22\textwidth]{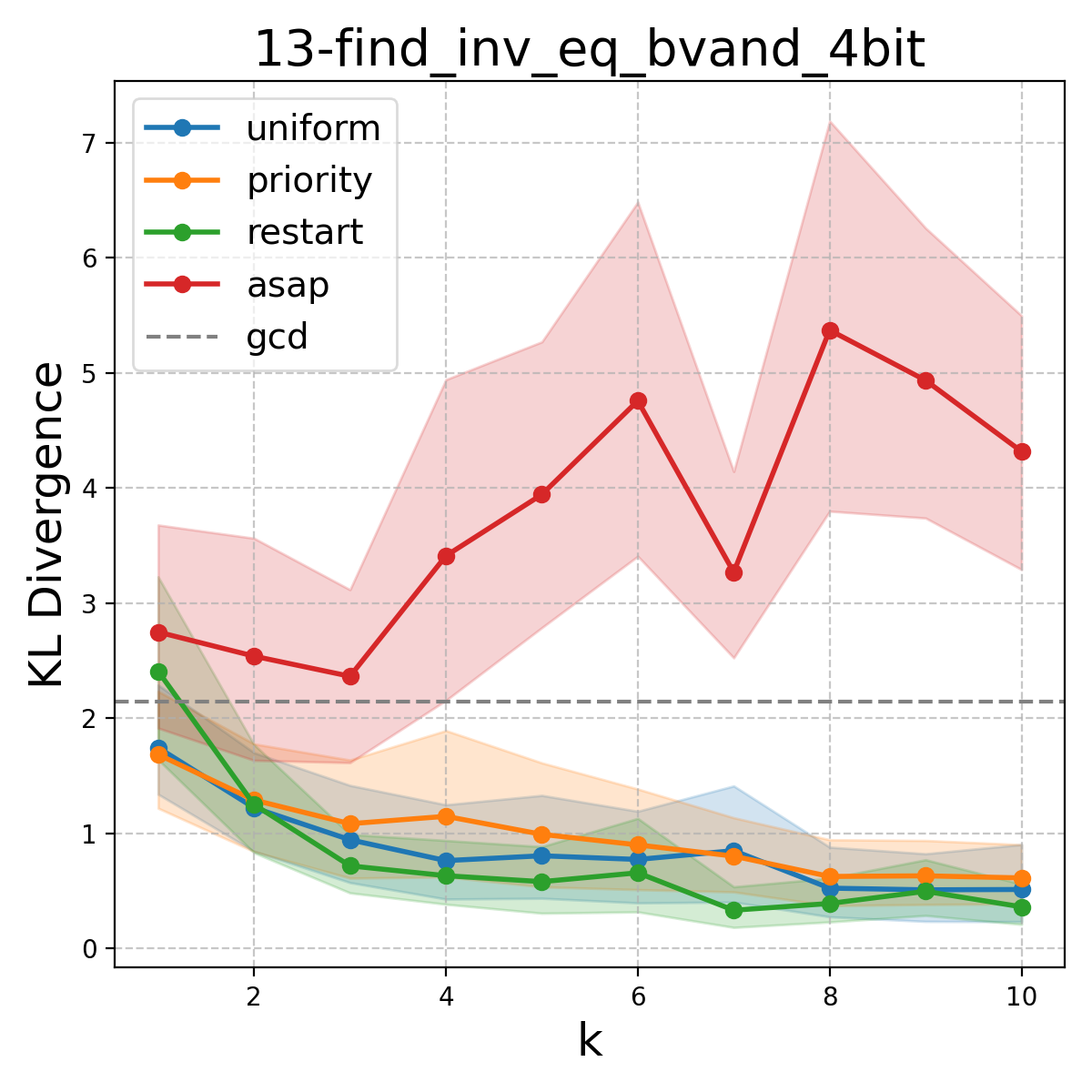} \\
    \end{tabular}
    \caption{KL-Divergence for ASAp($k$) and MCMC($k$) in BV4 subset}
    \label{fig:bv4_kl}
\end{figure}

\begin{figure}[H]
    \centering
    \begin{tabular}{cccc}
        \includegraphics[width=0.22\textwidth]{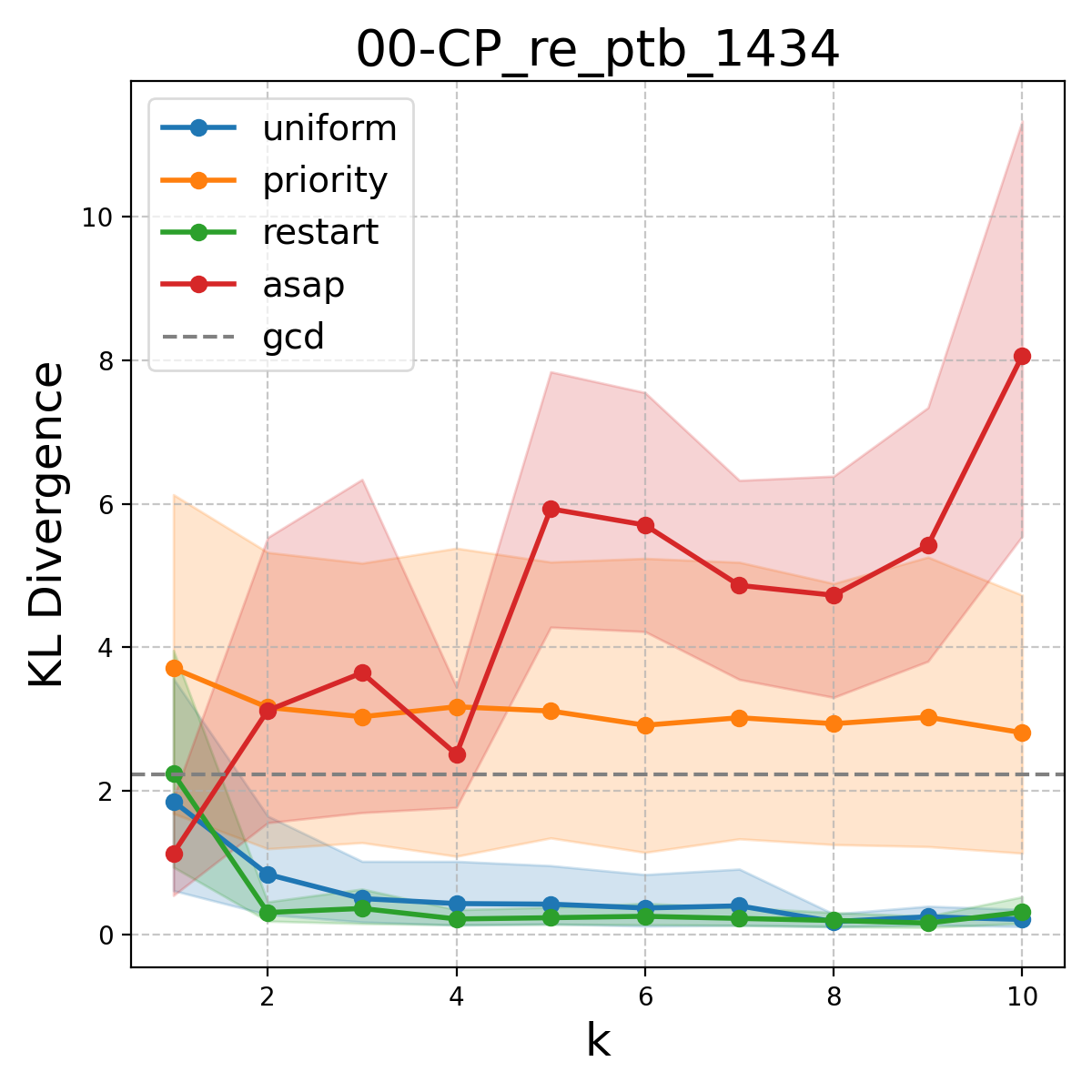} &
        \includegraphics[width=0.22\textwidth]{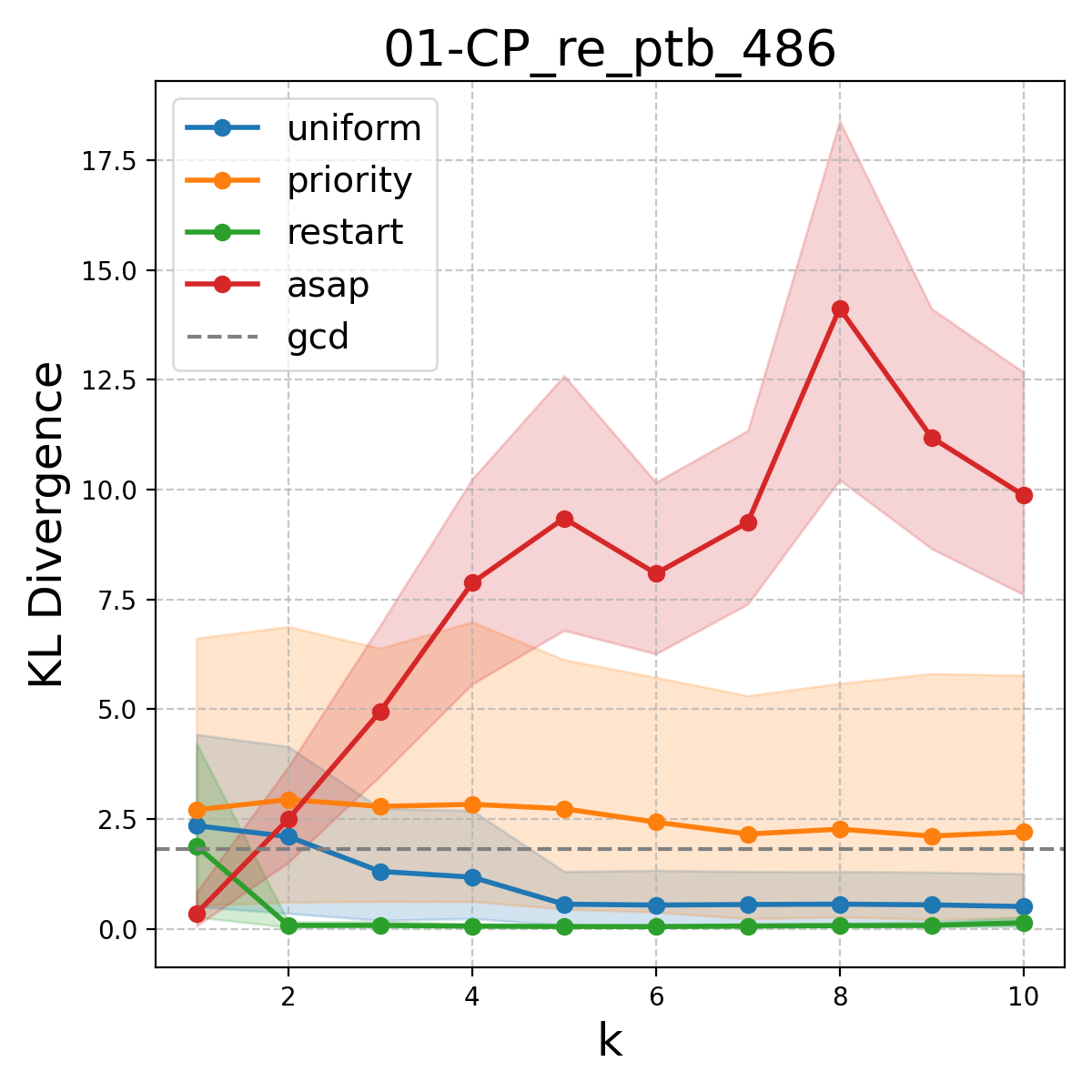} &
        \includegraphics[width=0.22\textwidth]{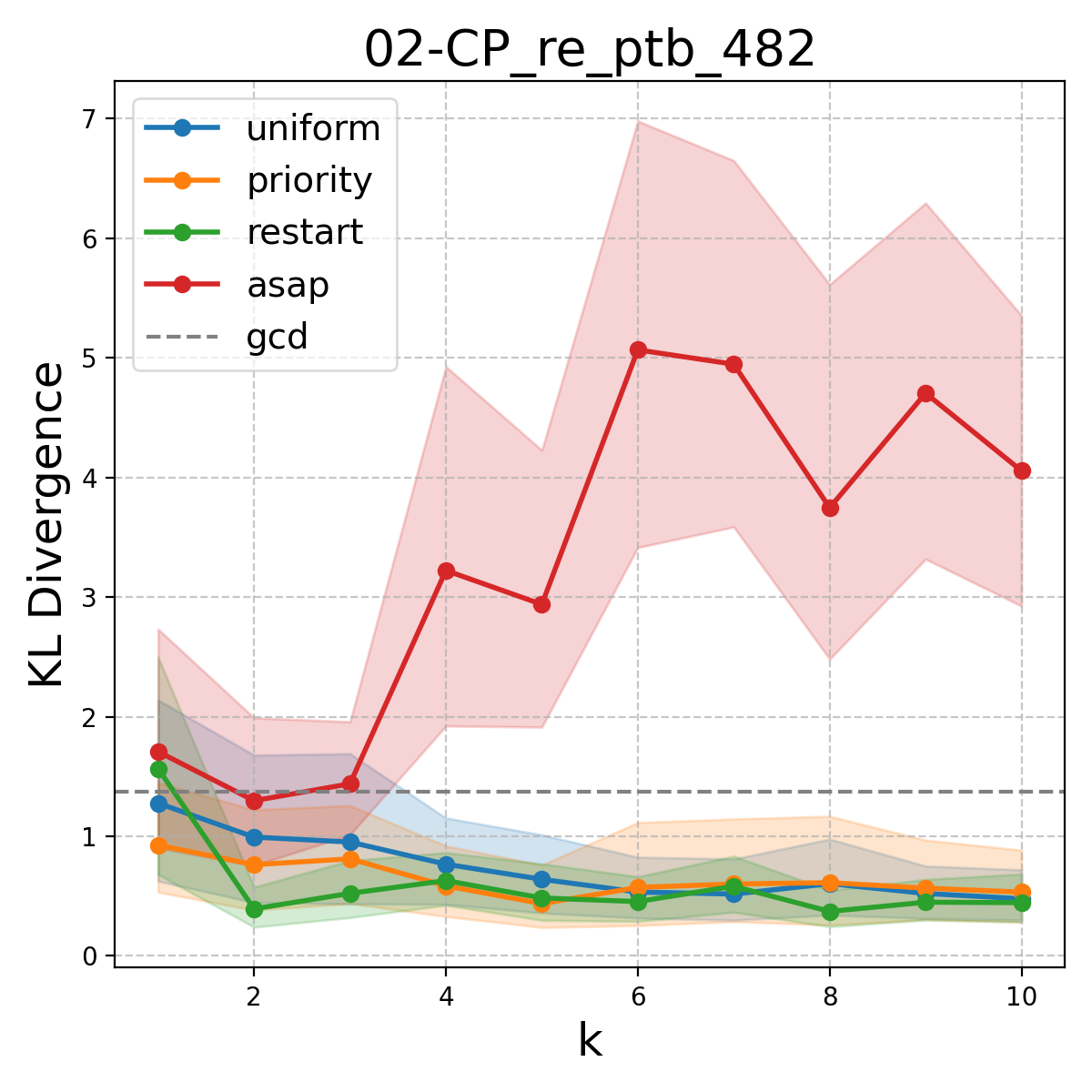} &
        \includegraphics[width=0.22\textwidth]{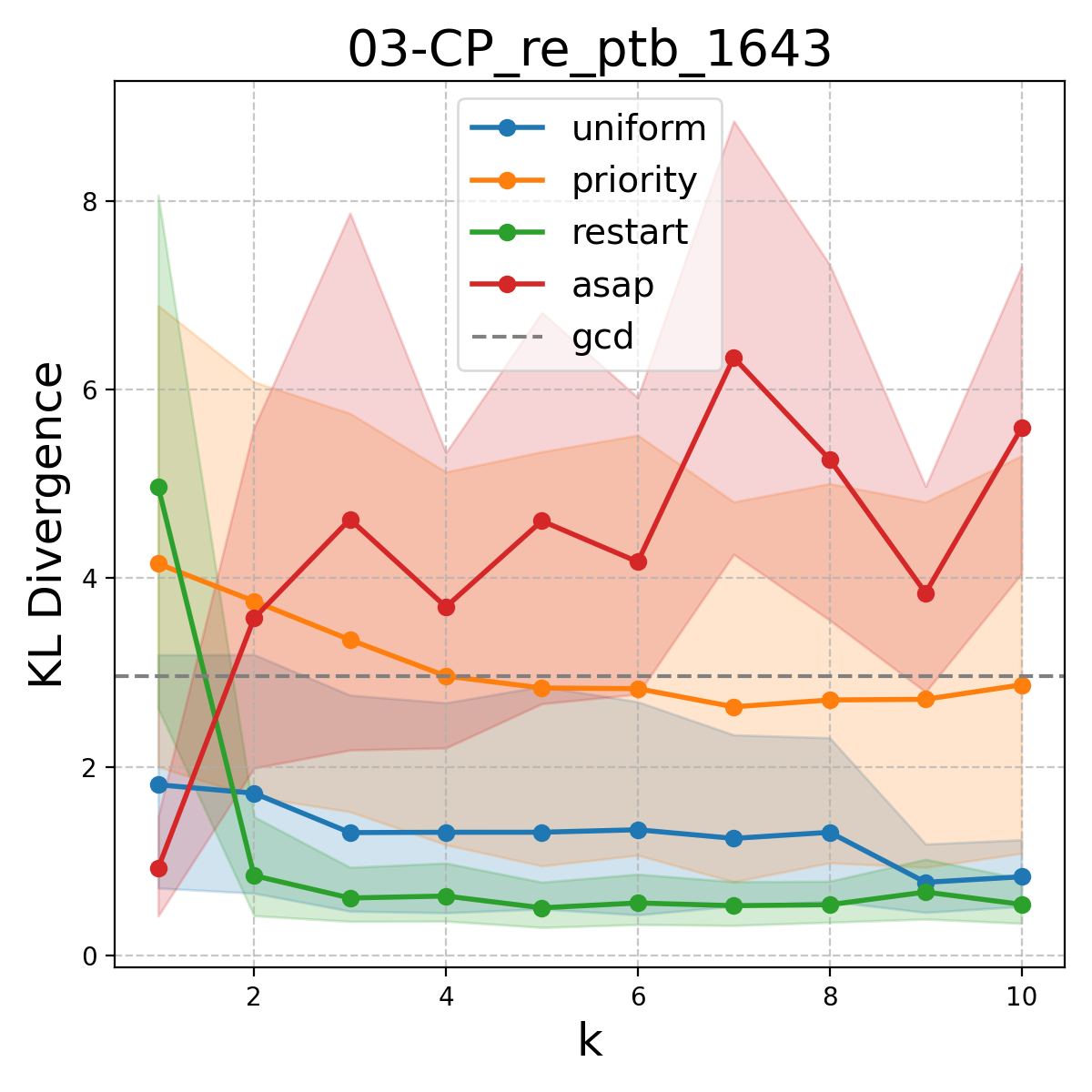} \\
        \includegraphics[width=0.22\textwidth]{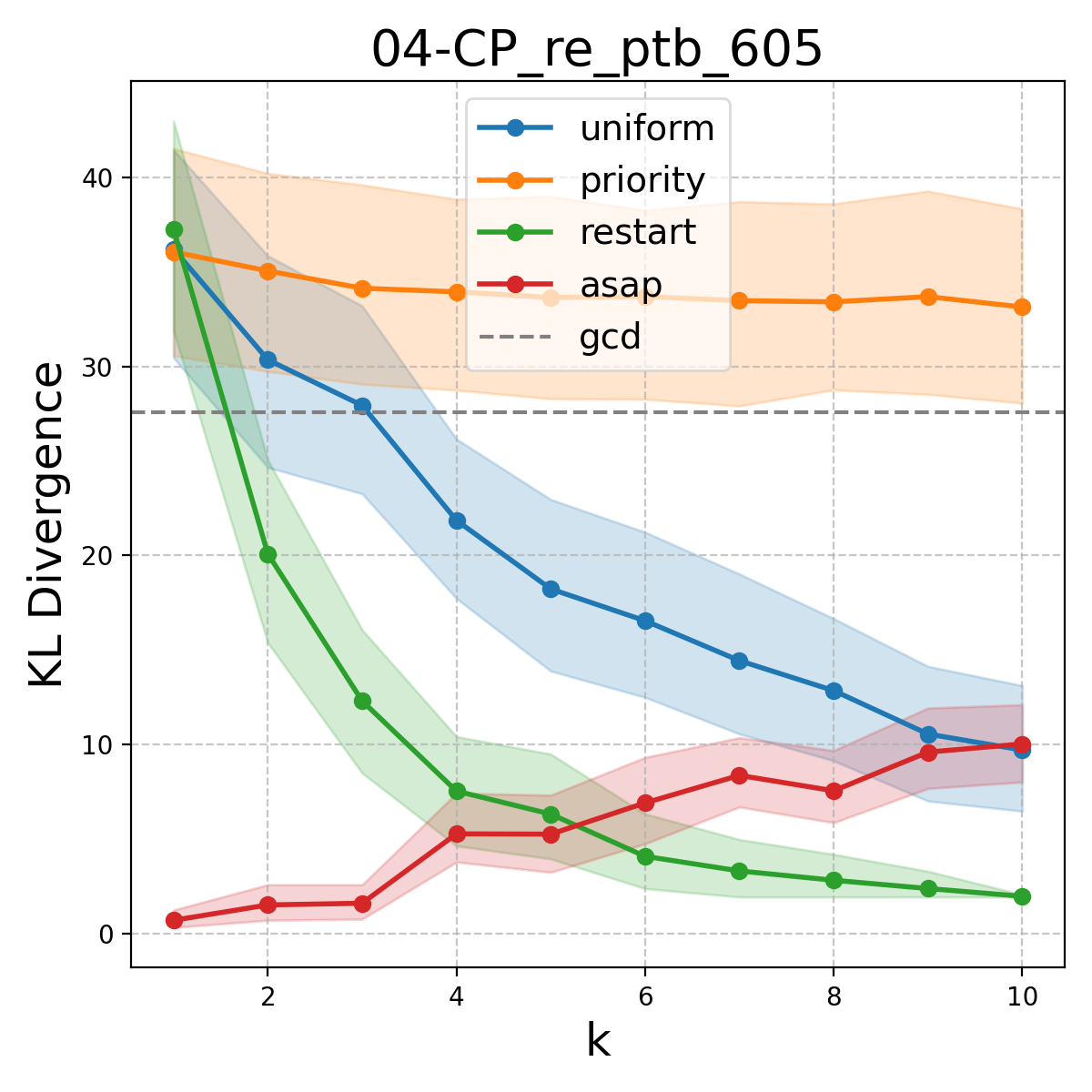} &
        \includegraphics[width=0.22\textwidth]{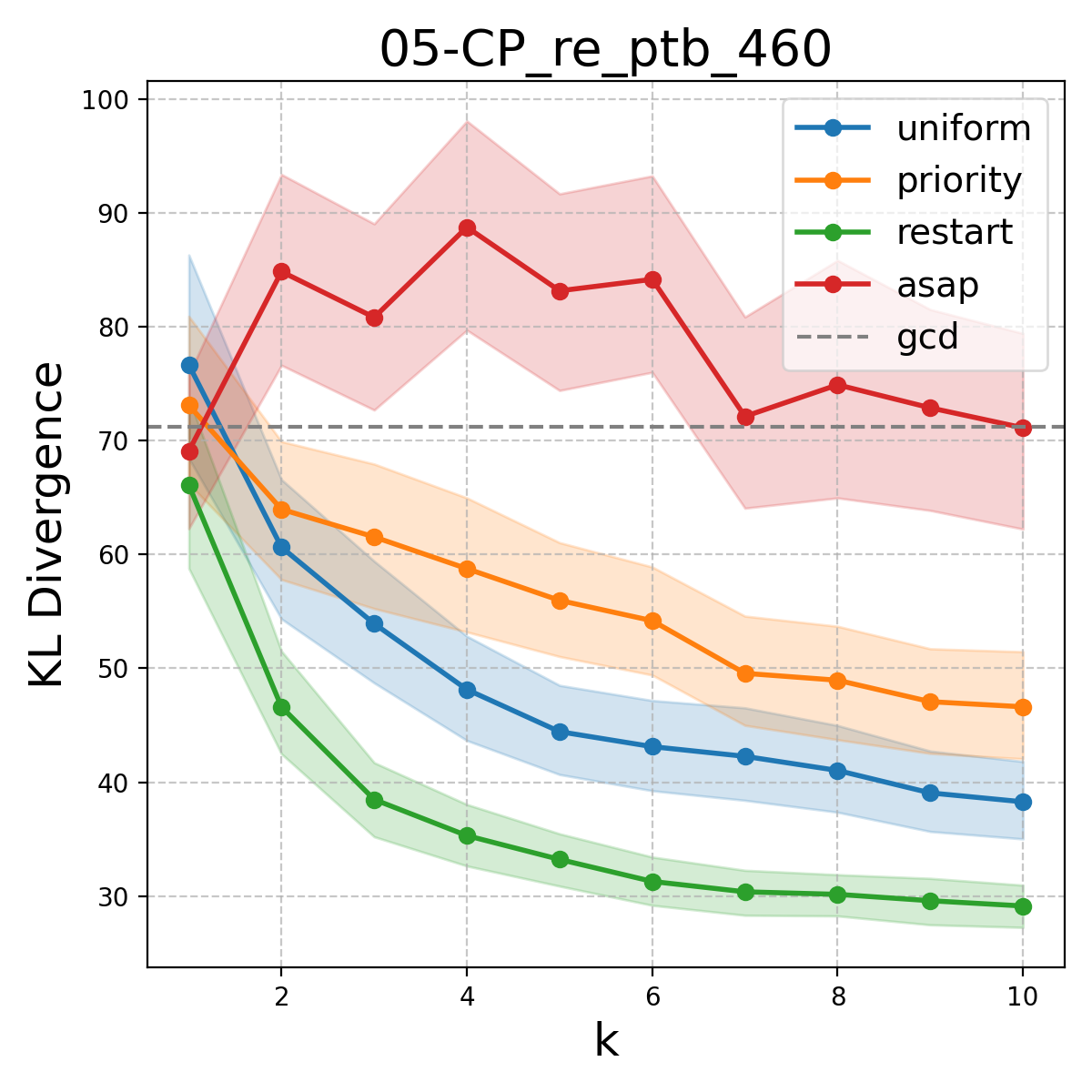} \\
    \end{tabular}
    \caption{KL-Divergence for ASAp($k$) and MCMC($k$) in CP subset}
    \label{fig:cp_kl}
\end{figure}

\end{document}